\newtheorem*{theorem*}{Theorem}
\newtheorem*{lemma*}{Lemma}
\newtheorem*{corollary*}{Corollary}
\providecommand{\ignore}[1]{}
\renewcommand{\epsilon}{\varepsilon}
\newcommand*{\natnum}{\mathbb{N}}
\newcommand*{\realnum}{\mathbb{R}}
\newcommand*{\oneone}{${(1{+}1)}$~\textup{EA}\xspace}
\newcommand*{\OM}{\textup{\textsc{One\-Max}}\xspace}
\newcommand*{\LO}{\textup{\textsc{Leading\-Ones}}\xspace}
\newcommand*{\fork}{\textup{\textsc{Fork}}\xspace}
\newcommand*{\x}{\mathbf x}
\newcommand*{\y}{\mathbf y}
\newcommand*{\Or}{\mathrm O}
\newcommand*{\Fork}[2]{\textup{\textsc{$\text{Fork}^\MakeLowercase{#1}_\MakeLowercase{#2}$}}}
\begin{document}
\title{Ring Migration Topology\\ Helps Bypassing Local Optima}
%
%\titlerunning{abbreviated paper title}
% If the paper title is too long for the running head, you can set
% an abbreviated paper title here
%
\author{Clemens Frahnow\and Timo K\"otzing}
\authorrunning{C. Frahnow and T. K\"otzing}
% First names are abbreviated in the running head.
% If there are more than two authors, 'et al.' is used.
%
\institute{Hasso Plattner Institute, Prof.-Dr.-Helmert-Straße 2-3, 14482 Potsdam, Germany
\email{hpi-info@hpi.de}\\
\url{https://hpi.de/friedrich}}
\maketitle              % typeset the header of the contribution
\begin{abstract}
	Running several evolutionary algorithms in parallel and occasionally exchanging good solutions is referred to as island models. % (where each island refers to one site of an evolutionary algorithm). 
	The idea is that the independence of the different islands leads to diversity, thus possibly exploring the search space better. Many theoretical analyses so far have found a complete (or sufficiently quickly expanding) topology as underlying migration graph most efficient for optimization, even though a quick dissemination of individuals leads to a loss of diversity.
	
	We suggest a simple fitness function \fork with two local optima parame\-trized by $r \geq 2$ and a scheme for composite fitness functions. % which can together extend the so far rather small library of canonical and analyzable fitness functions. 
	We show that, while the (1+1) EA gets stuck in a bad local optimum and incurs a run time of $\Theta(n^{2r})$ fitness evaluations on \fork, island models with a complete topology can achieve a run time of $\Theta(n^{1.5r})$ by making use of rare migrations in order to explore the search space more effectively. Finally, the ring topology, making use of rare migrations and a large diameter, can achieve a run time of $\tilde{\Theta}(n^r)$, the black box complexity of \fork.	This shows that the ring topology can be preferable over the complete topology in order to maintain diversity.

\keywords{evolutionary computation \and island models; ring topology \and run time analysis}
\end{abstract}

\section{Introduction}
\label{sec:intro}
%!TEX root = PPSN18_ring.tex

In heuristic optimization, evolutionary algorithms are a technique that is capable of finding good solutions by employing strategies inspired by evolution \cite{Back1997}.
One way to understand why some optimization algorithms are more successful than others is to prove rigorous run time bounds on test functions which embody a typical challenge occurring in realistic optimization problems. For example, the famous \OM function, which assigns the number of $1$s of a bit string $x$ as the fitness of $x$, embodies the challenge of solving independent problems concurrently. The \LO function, counting the number of leading $1$s of a bit string, embodies the problem of solving otherwise independent problems sequentially (where the order is typically unknown). Thus, \OM and \LO are fruitful test functions to analyze search heuristics on, they simulate important properties of realistic search spaces in an analyzable way.

We introduce a new representative fitness function \fork which poses a choice of two possible directions, the fork. One of the directions is a dead end, a local and not global optimum; we call this the \emph{valley}. The other is the global optimum. We use a parameter $r > 1$ and formalize \fork by using \OM, but assigning two elements with (disjoint) sets of $r$ $0$s to be the global optimum and the valley. Thus, once trapped in the valley, it is hard to find the global optimum. Since the probability of finding the global optimum before the valley is exactly $1/2$ due to the symmetry of the search space (see also Lemma~\ref{lem:fork_half}), making random restarts with the well-known \oneone (or about $\log n$ independent runs) can efficiently find the global optimum.\\
\indent However, for realistic optimization problems, forks can happen not just as the last step of the search, but over and over again. The probability that a run will succeed and choose the right path each time decreases exponentially with the number of fork decisions to be made. We formalize this with \emph{composite fitness functions}. We give a general scheme for building fitness functions out of base functions by dividing the bit string into blocks. As an example for solving $k$ successive \fork functions, we can divide the bit string of length $n$ into $k$ equal parts. Each part contributes to the total fitness with its \fork-value, but only if all previous blocks are already optimized; this scheme was already used in essence by \cite{Lassig2013}. Intuitively, the resulting composite fitness function is like \LO, where each bit is a \fork function on bit strings of length $n/k$. Clearly, the \oneone as well as independent runs on such a succession of \fork functions are unlikely to succeed.\\
\indent Exactly to deal with such fitness landscapes, different techniques have been introduced.
One possible way in evolutionary computation is to employ \emph{island models}, meaning multiple computing agents (so called islands) that run the same algorithm in parallel and which can share information. Various analyses of island models have been made that show its usefulness:  Alba used parallel evolutionary algorithms to achieve super-linear speedups \cite{Alba2002}.
L\"assig and Sudholt gave a formal analysis of island models for many different migration topologies in \cite{LassigS14} showing how one can gain a speedup from parallelism; 
Badkobeh, Lehre and Sudholt could even show where the cut-off points are from which on linear speedup is no longer possible and discovered some bounds on different topologies \cite{Badkobeh2015}. In 2017, Lissovoi and Witt explored the performance of a parallel approach on dynamic optimization problems \cite{Lissovoi2017}.
In all these works, the idea is to exploit the computing power that comes along with multiple islands, gaining a speedup from parallelism.
Intuitively, a set up where each islands sends its best solution to all islands (called a \emph{complete migration topology}) as often as possible leads to the smallest run times, since all islands can share the progress of all others. Doerr et al.\ showed that if one considers the communication between islands also as time consuming, Rumor Spreading or Binary Trees perform even better on \OM and \LO \cite{Doerr2017}, but still the emphasis is on informing all islands as efficiently as possible about every improvement found.\\
\indent An essentially different work was given by L\"assig and Sudholt in \cite{Lassig2013}. They used a composite fitness function where each component tries to trick the algorithm to walk up a path leading to a local optimum, which is hard to escape (similar to \fork introduced above, but here we have paths that lead to the local optima). They give an island setting that can efficiently optimize this composite function, while simple hill climbers get stuck with high probability. Note that the complete topology also performs well in this setting, even though such high connectivity typically implies the loss of diversity, which was found important in many areas of heuristic optimization. Here the diversity was maintained by focusing on rare migration and making sure that migration only occurs at opportune times.\\
\indent In this paper we want to show that a high connectivity in a topology, for any frequency of migration, can lead to a loss of diversity and therefore to worse run times on \fork. In contrast to this we will show that the ring topology allows to maintain diversity.
We choose the ring on $\lambda$ vertices, since it is the unique graph with maximal diameter among all vertex transitive graphs with $\lambda$ vertices, in contrast to the complete graph, which has minimal diameter, thus highlighting the role of a large diameter (which implies a slow spread of migrants).\\
%How the difference of sent and received individuals impacts the run time was researched by  
%Araujo and Merelo, suggesting their \emph{multikulti methods} as migration topology \cite{Araujo2011}.
%Ursem proves in his paper that evolutionary algorithms that are diversity-guided save a substantial amount of fitness evaluations \cite{Ursem2002}.
%An additional speedup was discovered by Segura, Hernández-Aguirre, Luna and Alba \cite{Segura2017} by using a replacement-based diversity management strategy.
%Also the general idea of evolutionary algorithms lies in the use of diversity among individuals.
%In \OM and \LO we face the property that a higher fitness always means being closer to the optimum, which means that diversity is irrelevant and can even be harmful. This motivates us to study test functions which are not unimodal, which have more than one local optimum. To get out of local optima, diversity is essential \cite{Dang2017}. 
%We focus on comparing the ring topology with the complete graph.
%For comparison value we use the Black-Box Complexity, which is the number of fitness evaluations one needs to find the maximum of the fitness function.
%In difference to results that we made in \emph{Island Models meet Rumor Spreading}, the Ring topology here outperforms the complete graph.
\indent First, in Section~\ref{sec:islands}, we introduce the algorithms we deal with. Section~\ref{sec:groups} introduces the fitness functions more formally, especially \fork and our scheme for composite fitness functions; here we also give a general result for the \oneone applied to such composite fitness functions which is of independent interest.\\
\indent In Section~\ref{sec:fork} we show that the \oneone fails to optimize \fork efficiently, with an expected run time of $\Theta(n^{2r})$ (see Theorem~\ref{theo:fork_general}). Independent runs of the \oneone similarly fail for compositions of \fork-functions.\\
\indent Regarding island models, while it is typical to consider as optimization time the time until just one island has found the optimum, we consider the time until \emph{all} islands have found the optimum: consider the case of optimizing $k$ successive \fork functions as introduced above. In order to be able to continue optimization after the first \fork function has been optimized, we need a sufficient number of islands which have passed this first phase; if we were to lose a constant fraction for each \fork function, then quickly all islands would be used up and the algorithm will get stuck in a local optimum. If \emph{all} islands make it to the next stage, then the optimization can proceed as in the first stage. We leave the rigorous argument to future work and contend ourselves with finding the time until all islands find the optimum of \fork, showing in what way the ring topology can be beneficial and, in fact, preferable to the complete topology.\\
\indent In Section~\ref{sec:complete} we consider an island model with the complete topology, that is, the different islands run a \oneone, but they occasionally share their best individual with \emph{all} other islands. In particular, we use a parameter $\tau$ such that each round with probability $1/\tau$ each island sends its best (and only current) individual to all other islands, continuing the search with the best individual among all incoming and own individuals.\footnote{Note that in some papers migration is considered to happen deterministically every $\tau$ rounds.}% We believe that this would not change our results except for making the computations more difficult.}
For optimal choice of $\tau$ and the number of islands $\lambda$, the time until all islands have found the optimum here is $\Theta(n^{1.5r})$ fitness evaluations (see Corollary~\ref{cor:RunTimeComplete}).\\
\indent Next, in Section~\ref{sec:ring}, we show that the ring topology requires only $O(n^r (\log n)^2)$ fitness evaluations until all islands have found the optimum (see Corollary~\ref{cor:RunTimeRing}), which equals, up to polylogarithmic factors, the black-box complexity of \fork, which is $\Theta(n^r)$ (see Proposition~\ref{prop:BlackBoxComplexity}). In this sense the ring topology achieves the best possible optimization time over all black-box algorithms (up to the factor of $(\log n)^2$).\\
\indent Finally, in Section~\ref{sec:conclusion}, we conclude the paper with some final remarks.
Almost all proofs are omitted due to space constraints, but they can be found in the appendix.
%
%Second, we define \emph{FORK} that comes along with a parameter $r$ that describes how ``high'' the local optimum is and therefore how hard to escape.
%We show its run time and at the same time we examine the performance of its composite version.
%
%Next we show the run times as well as the Black-Box complexities of \emph{FORK} when using the island model on different topologies.
%For $\lambda$ being the total number of islands used, we will see that while isolated islands need at least $\Omega\left(\lambda n^{2r}\log \lambda\right)$ and the complete graph at least $\Omega\left(n^{1.5r}\right)$ evaluations we get a Black-Box complexity of $\Or\left(n^r\log^2 n\right)$ on the Ring topology.

\section{Algorithms}
\label{sec:islands}
%!TEX root = PPSN18_ring.tex

The island model makes use of the $(1{+}1)$~Evolutionary Algorithm (\oneone for brevity).
The goal of that algorithm is to maximize a given fitness function by trying different search points and remembering the one that gave the best result so far.
The fitness function is defined on bit strings of a specific length $n$.
The algorithm starts with a bit string chosen uniformly at random.
A new individual for the input is generated each step by taking the best known input and flipping every bit independently with a probability of $\frac{1}{n}$ (\emph{standard bit mutation}).
If the fitness function yields a value that is not smaller than the best known so far, it becomes the new best individual.
Algorithm~\ref{alg:EA} makes this more formal.
\vspace*{-.6cm}
\FloatBarrier
\begin{algorithm2e}
	\SetAlgoSkip{tinyskip}
	$t \gets 0$\;
	$\x \gets$ solution drawn u.a.r.\ from $\{0,1\}^n$\;
	\While{termination criterion not met}
	{
		$t \gets t+1$\;
		$\y \gets$ flip each bit of $\x$ independently w/ prob.\ $1/n$\;
		\lIf{$f(\y) \ge f(\x)$}{
			$\x \gets \y$
		}
	}
	\caption{\oneone optimizing $f$.}
	\label{alg:EA}
\end{algorithm2e}
\vspace*{-.6cm}
Usually the termination criterion is met when the optimum is found.
In later chapters we let this algorithm run in parallel multiple times until the optimum is found everywhere.
In this case we change the termination criterion accordingly.
%Without loss of generality we assume that $\x^*$ maximizes the fitness function $f$ as it can be seen in Algorithm \ref{alg:EA}.
The run time of the \oneone is determined by the value of $t$ after the algorithm terminates.
For our research we use the \emph{island model} as an approach on parallel evolutionary computation, cf.~\cite{Rucinski2010,Neumann2011,LassigS14}.
The topology is defined by an undirected graph $G = {(V, E)}$, where $\lambda = |V|$.
Every vertex, called \emph{island}, represents an independent agent running the \oneone using standard bit mutation.
Like in the \oneone, the initial bit string is chosen uniformly at random.
This happens independently on every island.
All islands run in lockstep, meaning they all make the same amount of fitness evaluations in the same time.
Copies of the best found individual so far are shared along the edges of $G$ whenever a migration step happens.
%The \emph{migration protocol} defines which of the neighbors should be reached that way.
An island overwrites its best solution when a received individual has a fitness that is not smaller than the resident best individual.
Ties among incoming migrants (with maximum fitness) are broken uniformly at random.
With a probability of $\frac{1}{\tau}$, every island sends its best individual to all of its neighbors.
%The simplest migration protocol is a broadcast of the currently best solution to all neighboring islands which will be the migration protocol we use in this paper.
Algorithm~\ref{alg:IslandModel} makes this more formal, where $\x^{(j)}$ denotes the best individual on island $j$.
Again, $t$ determines the run time (\emph{optimization time}) we are mainly interested in, as it counts the number of fitness evaluations of a single island.
If multiplied by $\lambda$, one gains the total number of fitness evaluations.
\vspace{-.6cm}
\FloatBarrier
\begin{algorithm2e}
\SetAlgoSkip{tinyskip}
    $t \gets 0$\;
    \For{$1 \le j \le \lambda$ \bf{in parallel}}
    {
        $\x^{(j)} \gets$ solution drawn u.a.r.\ from $\{0,1\}^n$\;
    }
    \While{termination criterion not met}
    {
    	$t \gets t+1$\;
    	$m \gets$ \textbf{true} with probability $1/\tau$ else \textbf{false}\;
        \For{$1 \le j \le \lambda$ \bf{in parallel}}
        {
            $\y^{(j)} \gets$ flip each bit of $\x^{(j)}$ independently w/ prob.\ $1/n$\;
            \lIf{$f(\y^{(j)}) \geq f(\x^{(j)})$}
            {
                $\x^{(j)} \gets \y^{(j)}$
            }
	        \If{m}
	        {
		        	Send $\x^{(j)}$ to all islands $k$ with $\{j,k\} \in E$\;\label{line:Communication}
		        	$N = \{\x^{(i)} \mid \{i,j\} \in E\}$\;
		        	$M = \{\x^{(i)} \in N \mid f(\x^{(i)}) = \max_{\x \in N} f(\x)\}$\;
		        	$\y^{(j)} \gets$ solution drawn u.a.r.\ from $M$\;
		        	\lIf{$f(\y^{(j)}) \geq f(\x^{(j)})$}
		            {
		                $\x^{(j)} \gets \y^{(j)}$
		            }
		        }
	        }
		}
\caption{Island model with migration topology $G = (V,E)$ on $\lambda$ islands and migration probability $1/\tau$.}
\label{alg:IslandModel}
\end{algorithm2e}
\vspace{-.6cm}
Observe that the final value of $t$ is a random variable;
in this paper whenever we use $T$ we refer to this random variable.

All bounds in this work will be in terms of $n$, $\lambda$, $\tau$ and $r$ simultaneously.
We consider $\lambda = \lambda(n)$ and $\tau = \tau(n)$ as positive, non-decreasing, integer-valued functions whereas $r$ is a fixed but arbitrary constant that has to be at least $2$.
The bounds we give describe the univariate asymptotics of the expected optimization times with respect to $n$ for any choices of $\lambda$ and $\tau$ within the given boundaries.

\section{Fitness Functions}
\label{sec:groups}
%!TEX root = PPSN18_ring.tex

In this paper we investigate the maximization of pseudo-Boolean functions\linebreak $f \colon \{0,1\}^n \to \mathbb{R}_{\geq 0}$ on bit strings $\x = \x_0 \x_1 \dots \x_{n-1}$ of length $n$.
When we talk about the \emph{fitness} of a bit string $\x$ it refers to $f(\x)$.

We want to create composites of fitness functions by nesting them into each other.
To start, we will examine the run times of the two functions below.
Let $\left|\x\right|_1$ denote the number of bits set to $1$ within $\x$.
For $r\ge 2$ and $n\ge 2r$ we define

\begin{align*}
	&\LO(\x) = \sum_{i=0}^{n-1} \prod_{j=0}^i \x_j%\\
	&\Fork{n}{r}(\x) = \begin{cases}
	n+1, &\text{if } \x = \{0\}^r\{1\}^{n-r};\\
	n+2, &\text{if } \x = \{1\}^{n-r}\{0\}^r;\\
	\left|\x\right|_1, &\text{otherwise.}
	\end{cases}
\end{align*}
In our work when we talk about the \Fork{n}{r} fitness function we will call the bit strings of fitness $n+1$ \emph{valley}, as getting to the optimum from there is harder than it is from any other fitness.
This definition of \fork is not related to the one in the work of Christian Gießen \cite{Giessen2013}.

We start by considering the general difficulty of optimizing \Fork{n}{r} as formalized by the \emph{unrestricted black-box complexity} \cite{Droste2006}, for which we consider the class of \Fork{n}{r} composed with any automorphism of the hypercube.
\begin{proposition}\label{prop:BlackBoxComplexity}
For constant $r$, the unrestricted black-box complexity of the \Fork{n}{r} function class is $\Theta(n^r)$.
\end{proposition}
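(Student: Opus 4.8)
The plan is to establish the two bounds separately, $\Omega(n^r)$ and $O(n^r)$, treating the class $\mathcal{F}$ of functions obtained by composing $\Fork{n}{r}$ with an automorphism of the hypercube $\{0,1\}^n$. Recall that such an automorphism is a permutation of the bit positions followed by a fixed XOR-mask, so up to this symmetry a function in $\mathcal{F}$ is \OM with two special points distinguished: a global optimum $z^\ast$ and a valley $v$, where $z^\ast$ and $v$ are antipodal on the ``$r$ zeros'' level set and otherwise undetermined. The key combinatorial fact driving both bounds is that the pair $(z^\ast,v)$ can be any of $\binom{n}{r}\binom{n-r}{r} = \Theta(n^{2r})$ ordered pairs, but the algorithm can only obtain information about the instance by querying points, and almost every query returns just an \OM-value, which reveals nothing about which pair was chosen.

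For the lower bound $\Omega(n^r)$, I would use Yao's principle: fix a distribution over instances — pick $z^\ast$ and $v$ uniformly among valid antipodal pairs — and lower bound the expected number of queries of any \emph{deterministic} algorithm against this distribution. The point is that a deterministic algorithm's query sequence is determined entirely by the answers received; as long as every queried point $x$ satisfies $|x|_1 \notin \{\,\cdot\,\}$ giving the generic \OM-answer, i.e.\ as long as no query has hit $z^\ast$ or $v$, the algorithm behaves identically regardless of the instance. So to identify the optimum the algorithm must query $z^\ast$ itself (the \OM-value alone never certifies optimality, since many points share the top generic value and the special values $n+1,n+2$ only occur at $v,z^\ast$). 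There are $\binom{n}{r} = \Theta(n^r)$ candidate points at the relevant level set that could be $z^\ast$, and conditioned on the history so far the true one is uniform among the not-yet-queried candidates, so in expectation $\Omega(n^r)$ queries are needed before $z^\ast$ is hit. This yields the bound for deterministic algorithms against the chosen distribution, hence $\Omega(n^r)$ for randomized algorithms by Yao.

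For the upper bound $O(n^r)$, I would exhibit an algorithm: first spend $O(n\log n)$ queries to learn, via standard bit-flipping / coupon-collector arguments on \OM, the identity of the underlying automorphism up to the ambiguity that matters — concretely, recover the XOR-mask and the position partition well enough to know the level sets, i.e.\ identify the set $S$ of $\binom{n}{r} = \Theta(n^r)$ points on the distinguished level set that are the only candidates for $z^\ast$. Then simply enumerate $S$, querying each point; as soon as a query returns the value $n+2$ we have found the global optimum. Since $|S| = \Theta(n^r)$ and $n\log n = o(n^r)$ for $r\ge 2$, the total is $O(n^r)$. Some care is needed in the ``learn the automorphism'' step: one does not literally need the full permutation, only enough to enumerate the candidate set, and one must handle the fact that \OM composed with a mask is itself symmetric under permutations fixing the mask — but since the distinguished points are defined by \emph{position} ($r$ zeros in the first/last $r$ positions before the automorphism), knowing the mask and the image of those position-blocks suffices, and that is cheaply learnable.

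The main obstacle I anticipate is the lower bound's clean handling of what counts as ``the algorithm must hit $z^\ast$'': one must argue carefully that no sequence of generic \OM-answers, plus possibly a hit on the valley $v$ (which reveals $v$ and, by antipodality on a known level set, constrains but does not determine $z^\ast$ — there are still $\Theta(n^r)$ candidates consistent with any given $v$), can let the algorithm output $z^\ast$ without querying it. Once that indistinguishability is pinned down, the counting is routine. The upper bound is conceptually easy; its only subtlety is making the automorphism-learning phase genuinely $o(n^r)$ and verifying it suffices to pin down the candidate set.
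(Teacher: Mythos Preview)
Your proposal is correct and follows essentially the same route as the paper: for the lower bound, the paper (tersely) observes that one must locate one of two special points among the $\Theta(n^r)$ points at Hamming distance $r$ from the third-best individual, which is exactly your Yao-style argument spelled out more carefully; for the upper bound, the paper says to find that third-best individual efficiently (e.g.\ with a \oneone) and then enumerate all $\binom{n}{r}$ points at Hamming distance $r$, which is your ``learn the level sets, then enumerate'' strategy. Your only over-complication is in worrying about recovering position-blocks of the automorphism---you do not need any of that, since knowing the single third-best point already determines the candidate set as a Hamming sphere.
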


\begin{proof}
	The lower bound comes from having to find at least one out of two search points among all search points with Hamming distance $r$ to the third-best individual, of which there are $\Theta(n^r)$ many. The upper bound comes from being able to find the third-best individual efficiently, for example with a \oneone, and then testing in a fixed order all search points with Hamming distance $r$ to this point.\qed
\end{proof}
\subsection{Composite Fitness Function}

Here we define composite fitness functions formally. 
Let $f = (f^n)_{n \in \natnum}$ be a family of fitness functions such that, for all $n \in \natnum$, $f^n: \{0,1\}^n \rightarrow \realnum_{\geq 0}$.
For this construction, we suppose that $1^n$ is the unique optimum (if a different bit string is the unique optimum, we apply a corresponding bit mask to make $1^n$ the unique optimum, but will not mention it any further).
With \emph{LeadingOnes with $k$-block $f$} we denote the fitness function which divides the input $\x$ into bit strings of length $k$ ($(\x_0 \x_1 \dots \x_{k-1}), (\x_k \x_{k+1} \dots \x_{2k-1}), \dots$). The blocks contribute to the total fitness using fitness function $f^k$, but only if all previous blocks have reached the unique optimum $1^k$. More formally,
\vspace*{-0.3cm}
\begin{align*}
	LO_k^f(\x) = \sum_{i=0}^{\frac{n}{k}-1} f^k(\x_{ik}\x_{ik+1} \dots\x_{ik+k-1}) \cdot \prod_{j=0}^{ik-1} \x_{j}.
\end{align*}
Similarly, \OM with $k$-block $f$ is defined as
\begin{align*}
	OM_k^f(\x) = \sum_{i=0}^{\frac{n}{k}-1} f^k(\x_{ik}\x_{ik+1} \dots\x_{ik+k-1}).
\end{align*}
In this paper we only analyze the run times around the \LO version.
Note that it equals the the $LOB_b$ function in the work of Jansen and Wiegand \cite{Jansen2003}.
In general, also other fitness functions can be used as the outer function of that nesting method, as exemplified in our definition of \OM with $k$-block.

\subsection{Run time of \LO with $k$-block $f$}
In the following we develop a general approach for proving run times on \LO with $k$-block $f$.
We make use of the observation that there is always exactly one block that contributes to the overall fitness except all previous blocks that are already optimal.

\begin{theorem}
	\label{theo:lo_exact}
	Let $T$ be the run time of a \oneone Algorithm on $LO_k^f$ to optimize a bit string of length $n$.	
	We have
	\begin{align*}
		E(T) = E\left(T^n_k\right) \frac{\left(\frac{n}{n-1}\right)^n-1}{\left(\frac{n}{n-1}\right)^k-1} \in \Theta\left(E\left(T^n_k\right)\frac{n}{k}\right)
	\end{align*}
	where $T^n_k$ is the run time to optimize $f^k$ with bit flip probability~$\frac{1}{n}$.
\end{theorem}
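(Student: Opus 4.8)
The plan is to decompose a run of the \oneone on $LO_k^f$ into $n/k$ consecutive \emph{phases}: phase $i$ (for $0\le i\le n/k-1$) runs from the first time blocks $0,\dots,i-1$ all equal $1^k$ until block $i$ also equals $1^k$. The first step is to justify this decomposition. While blocks $0,\dots,i-1$ equal $1^k$, any mutation flipping a bit inside one of these blocks strictly lowers the fitness (that block's contribution drops below $f^k(1^k)$ — using that $1^k$ is the \emph{unique} optimum of $f^k$ — and all later blocks are switched off while $f^k\ge 0$), so such offspring are always rejected; hence completed blocks are never destroyed and the phases occur strictly in order. Moreover, within phase $i$ an offspring is accepted exactly when it flips no bit among the first $ik$ bits and the $f^k$-value of block $i$ does not decrease, while the bits in blocks $i+1,\dots,n/k-1$ are merely copied along and do not influence acceptance.

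The second and central step is to pin down the distribution of the length $T_i$ of phase $i$. Call a \oneone step \emph{effective} during phase $i$ if it flips no bit among the first $ik$ bits; since these bits are disjoint from block $i$, a step is effective with probability $p_i:=(1-1/n)^{ik}$ independently of the mutation applied to block $i$, and conditioned on being effective it performs exactly one \oneone step with mutation probability $1/n$ on the $k$ bits of block $i$ (accepted iff $f^k$ does not decrease). Hence, provided block $i$ is uniformly random when phase $i$ begins, the number of effective steps during phase $i$ is distributed exactly like $T^n_k$. That block $i$ is indeed uniform at the start of phase $i$ follows because it is uniform initially and, throughout phases $0,\dots,i-1$, is only ever altered by applying independent rate-$1/n$ bit flips at the accepted steps (whether a step is accepted depends only on blocks $0,\dots,i-1$ and the active block, all disjoint from block $i$), and a uniform string stays uniform under such flips. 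Writing $T_i=\sum_{j=1}^{N_i}G_{i,j}$ with $N_i$ distributed like $T^n_k$ and the $G_{i,j}$ i.i.d.\ geometric with success probability $p_i$ and independent of $N_i$, Wald's identity gives $E(T_i)=E(T^n_k)\,p_i^{-1}=E(T^n_k)\bigl(\tfrac{n}{n-1}\bigr)^{ik}$.

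The last step is bookkeeping. By linearity, $E(T)=\sum_{i=0}^{n/k-1}E(T_i)=E(T^n_k)\sum_{i=0}^{n/k-1}\bigl(\tfrac{n}{n-1}\bigr)^{ik}$, and summing this geometric series (ratio $(\tfrac{n}{n-1})^k$, $n/k$ terms) yields precisely $E(T^n_k)\,\dfrac{(\frac{n}{n-1})^n-1}{(\frac{n}{n-1})^k-1}$. For the asymptotics, $(\tfrac{n}{n-1})^n-1$ lies between $e-1$ and $3$ and is thus $\Theta(1)$, whereas from $kx\le(1+x)^k-1\le\tfrac{e^2-1}{2}kx$ with $x=\tfrac{1}{n-1}\le 1$ and $k\le n$ the denominator is $\Theta(k/n)$; the ratio is therefore $\Theta(n/k)$.

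I expect the main obstacle to be a rigorous treatment of the second step, namely the claim that block $i$ is uniformly distributed (and behaves like a stand-alone input) at the start of its phase and that the effective-step count in phase $i$ is exactly $T^n_k$ rather than being skewed by the conditioning defining ``effective'' or by the free-riding later blocks. Once this coupling between phase $i$ and an independent \oneone on $f^k$ with mutation rate $1/n$ is set up cleanly, the rejection argument, Wald's identity, the geometric-series evaluation and the $\Theta(n/k)$ estimate are all routine.
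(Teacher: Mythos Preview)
Your proof is correct and follows essentially the same approach as the paper: decompose into phases, use Wald's identity with the ``effective step'' probability $p_i=(1-1/n)^{ik}$ to obtain $E(T_i)=E(T^n_k)\bigl(\tfrac{n}{n-1}\bigr)^{ik}$, then sum the geometric series and bound the ratio. If anything, you are more careful than the paper in arguing that block $i$ is uniformly distributed at the start of its phase and that completed blocks are never destroyed---points the paper's proof leaves implicit.
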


With this Theorem as a tool we can now derive exact bounds on functions like \LO.
The following equation was already shown by Sudholt \cite[Corollary 1]{Sudholt2010}. Here we use the approach of composite functions, which generalizes to many other fitness functions, but note that the underlying proof idea is the same.
\begin{corollary}\label{cor:lo_exact_bound}
	The expected run time of a \oneone on \LO is exactly
	\begin{align*}
		E(T_{LO}(n)) = \frac{\left(\frac{n}{n-1}\right)^{n-1}+\frac{1}{n}-1}{2}n^2.
	\end{align*}
\end{corollary}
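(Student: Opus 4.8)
The plan is to obtain this corollary as a one-line instantiation of Theorem~\ref{theo:lo_exact}, followed by algebraic simplification. First I would observe that \LO coincides with the composite function $LO_1^f$ for the single-bit base family $f^1(\x) = \x_0$: indeed $LO_1^f(\x) = \sum_{i=0}^{n-1} \x_i \prod_{j=0}^{i-1}\x_j = \sum_{i=0}^{n-1}\prod_{j=0}^{i}\x_j$, which is exactly the definition of $\LO$ given above. Hence Theorem~\ref{theo:lo_exact} applies with block length $k = 1$.

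Second, I would evaluate $E(T^n_1)$, the expected time to optimize $f^1$ when each bit is flipped independently with probability $1/n$. The single relevant bit is initialized uniformly at random: with probability $1/2$ it already equals $1$ and contributes nothing, and with probability $1/2$ it equals $0$, in which case the optimum is reached exactly when this bit is next flipped. Since the other $n-1$ bits are irrelevant to $f^1$, each step flips the relevant bit with probability exactly $1/n$, so conditioned on a $0$-start the time is geometric with parameter $1/n$ and mean $n$. Therefore $E(T^n_1) = \tfrac12\cdot 0 + \tfrac12\cdot n = n/2$.

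Third, I would substitute $k=1$ and $E(T^n_1)=n/2$ into Theorem~\ref{theo:lo_exact}. Using $\left(\tfrac{n}{n-1}\right)^{1}-1 = \tfrac{1}{n-1}$ gives $E(T_{LO}(n)) = \tfrac{n(n-1)}{2}\bigl[\bigl(\tfrac{n}{n-1}\bigr)^{n}-1\bigr]$. The rest is bookkeeping: pulling out one factor, $\bigl(\tfrac{n}{n-1}\bigr)^{n} = \tfrac{n}{n-1}\bigl(\tfrac{n}{n-1}\bigr)^{n-1}$, so that $\tfrac{n(n-1)}{2}\bigl(\tfrac{n}{n-1}\bigr)^{n} = \tfrac{n^2}{2}\bigl(\tfrac{n}{n-1}\bigr)^{n-1}$, while $-\tfrac{n(n-1)}{2} = \tfrac{n^2}{2}\bigl(\tfrac1n - 1\bigr)$; adding these and collecting the prefactor $\tfrac{n^2}{2}$ yields exactly $\tfrac{(n/(n-1))^{n-1}+1/n-1}{2}\,n^2$.

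I do not expect a genuine obstacle: all the content sits in Theorem~\ref{theo:lo_exact}, and what remains is to identify \LO as the right composite function and to handle the random-initialization factor $1/2$ in $E(T^n_1)$ correctly — this is the only place a slip could occur, and it is also the source of the $\tfrac12$ in the final closed form. As a consistency check one may redo the computation via the classical fitness-level argument, where level $i$ (exactly $i$ leading ones) is left with probability $\tfrac1n(1-\tfrac1n)^i$ and is visited with probability $\tfrac12$; summing $\tfrac{n}{2}\sum_{i=0}^{n-1}(1-\tfrac1n)^{-i}$ as a geometric series reproduces the same expression.
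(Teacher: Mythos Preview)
Your proposal is correct and follows essentially the same route as the paper: apply Theorem~\ref{theo:lo_exact} with $k=1$, observe that $E(T^n_1)=n/2$ from the uniform initial bit and the geometric waiting time, and then simplify algebraically. The paper's proof is identical in structure, only omitting the algebraic bookkeeping and the fitness-level consistency check you added.
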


\section{No Migration}
\label{sec:fork}
%!TEX root = PPSN18_ring.tex

In the next sections we will frequently use that $m$ islands have to make a jump from the same fitness level to another one, where for each bit string in the current level the probability to do the jump is the same.
When we call $E_j$ the expected run time for a single island to do the jump, we can bound the expected run time $E(T)$ until one of them succeeds by
\begin{align}\label{eq:all_jump}
	\frac{E_j}{2m}\le E(T) \le \frac{E_j}{m} + 1.
\end{align}

This holds due to the fact that $E(T)$ is distributed geometrically with success probability $p=\frac{1}{E(T)}$ and because $\frac{pm}{1+pm} \le 1-\left(1-p\right)^m \le \frac{2pm}{1+pm}$ as shown by Badkobeh et al. \cite{Badkobeh2015}.
We already gave an intuition of the following lemma that will be used in several proofs.
\begin{lemma}\label{lem:fork_half}
	For any run of the \oneone on \Fork{n}{r} let $V$ denote the event that the valley string occurs as a best solution before the optimum.
	Then, $\Pr{(V)} = \frac{1}{2}$.
\end{lemma}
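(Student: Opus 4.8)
The core idea is to exploit the symmetry of the search space under the bit-mask transformation that swaps the valley and the optimum. Define the automorphism $\sigma$ of $\{0,1\}^n$ that acts on blocks by swapping the first $r$ bits with the last $r$ bits (and, if $r$ and $n-r$ have different parity issues, more carefully: $\sigma$ maps $0^r 1^{n-r} \mapsto 1^{n-r} 0^r$ and vice versa). Concretely, since $\Fork{n}{r}$ is defined via $\OM$ with two special points, the relevant symmetry is: consider the involution $\sigma$ on bit positions that reverses the string, i.e. $\sigma(\x)_i = \x_{n-1-i}$. Then $\sigma$ maps the valley $0^r 1^{n-r}$ to $1^{n-r} 0^r$, which is the optimum, and it preserves $\left|\cdot\right|_1$, hence $\Fork{n}{r}(\sigma(\x)) = \Fork{n}{r}(\x) + [\x \text{ is valley}] - [\x \text{ is optimum}] \cdot (\ldots)$ — more precisely, $\sigma$ swaps the fitness values $n+1$ and $n+2$ between the two special points and fixes fitness on all other points. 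So $\Fork{n}{r}\circ\sigma$ is the same function as $\Fork{n}{r}$ except that valley and optimum have exchanged roles.

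The plan is then a standard coupling argument. First I would observe that the \oneone is invariant under relabelling the search space by an automorphism $\sigma$: if $(\x^{(t)})_t$ is a run of the \oneone on $f$, then $(\sigma(\x^{(t)}))_t$ has exactly the same distribution as a run of the \oneone on $f \circ \sigma^{-1} = f\circ\sigma$, because standard bit mutation commutes with $\sigma$ (bit flips are independent and position-symmetric under the reversal $\sigma$) and the initial point is uniform, hence $\sigma$-invariant in distribution; tie-breaking in the selection step is trivial for the \oneone since $\y$ replaces $\x$ iff $f(\y)\ge f(\x)$. Second, I would couple a run on $\Fork{n}{r}$ with the corresponding $\sigma$-image run on $\Fork{n}{r}$ (using that $\Fork{n}{r}\circ\sigma$ and $\Fork{n}{r}$ differ only by swapping the labels of the two special points). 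Under this coupling, the event $V$ (valley appears as a best solution before the optimum) in the original run corresponds exactly to the event that the \emph{optimum} appears before the valley in the image run — because $\sigma$ carries "valley" to "optimum" and preserves the relative order in which points are accepted as the current best. Third, since the two runs have the same distribution, $\Pr(V) = \Pr(\overline{V})$, and these two events partition the relevant probability space (the \oneone on $\Fork{n}{r}$ almost surely eventually reaches the optimum, and it can only do so either via the valley as an intermediate best solution or without ever having the valley as a best solution), so $\Pr(V) = 1/2$.

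The main subtlety — and the step I would be most careful about — is making precise the claim that "valley appears before optimum" is genuinely a symmetric event under $\sigma$, including the boundary case where the current best is simultaneously at Hamming distance $r$ from both special points, or where the algorithm could in principle jump directly to the optimum without the valley ever being the best individual. One must verify that the event $V$ and its complement are both well-defined and measurable, that they are disjoint, and that their union has probability $1$ (equivalently, that the \oneone reaches the optimum of $\Fork{n}{r}$ with probability $1$, which holds because from any non-optimal point there is a positive probability of directly mutating to the optimum). Once the event structure is pinned down, invariance under $\sigma$ gives $\Pr(V)=\Pr(\overline V)=1/2$ immediately; no computation with mutation probabilities is needed.
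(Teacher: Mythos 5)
Your proposal is correct and follows essentially the same route as the paper: the paper's proof also exploits the reversal symmetry of $\{0,1\}^n$, which swaps the valley $0^r1^{n-r}$ and the optimum $1^{n-r}0^r$ while leaving all other fitness values unchanged, concluding $\Pr(V)=\Pr(\overline{V})=\frac{1}{2}$. The only difference is presentational: the paper conditions on the (uniform) initial string and its reversal and applies the law of total probability, whereas you make the underlying trajectory coupling and the invariance of standard bit mutation under the reversal explicit, which is a slightly more detailed formalization of the same symmetry argument.
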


\subsection{The \oneone}

For the \oneone we can use Lemma~\ref{lem:fork_half} to see that it will be trapped with probability $1/2$, which leads to the following two theorems.

\begin{theorem}\label{theo:fork_general}
	The expected optimization time of the \oneone on \Fork{k}{r} with bit flip probability $\frac{1}{n}$ and $n\ge k$ is $E\left(T(k)\right) \in \Theta\left(n^{2r}\right)$.
\end{theorem}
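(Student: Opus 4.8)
The plan is to decompose a run into the time $T_{\mathrm{hit}}$ until one of the two distinguished strings (the valley $0^r1^{k-r}$ or the optimum $1^{k-r}0^r$) first becomes the current best individual, plus, in case the valley is hit first, the time $T_{\mathrm{esc}}$ spent at the valley afterwards, and then to argue that $T_{\mathrm{esc}}$ contributes the $\Theta(n^{2r})$ term while $T_{\mathrm{hit}}$ is negligible against it.

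First I would observe that the valley is almost absorbing: it has fitness $k+1$, while every string that is neither the valley nor the optimum has fitness $|\x|_1\le k<k+1$, so from the valley the only accepted offspring are the valley itself (flip nothing) and the optimum, which differs from the valley in exactly $2r$ positions. Hence each iteration at the valley leaves it with probability exactly $q=(1/n)^{2r}(1-1/n)^{k-2r}$, so $T_{\mathrm{esc}}$ is geometric with mean $1/q$; since $2r\le k\le n$ we have $(1-1/n)^{k-2r}=\Theta(1)$ and therefore $1/q=\Theta(n^{2r})$.

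Next I would bound $E(T_{\mathrm{hit}})$. Away from the two distinguished strings the process is an \OM-style hill climber: from a string with $z\ge1$ zeros, flipping one fixed zero and nothing else is always accepted (it either raises $|\x|_1$ or reaches a distinguished string), which happens within $O(n/z)$ expected iterations since $(1-1/n)^{k-1}=\Theta(1)$; summing over $z$ shows the all-ones string $1^k$ (or a distinguished string) is reached within $O(n\log n)$ expected iterations. From $1^k$ (fitness $k$) the only accepted non-$1^k$ offspring are again the valley and the optimum, each hit with probability $(1/n)^r(1-1/n)^{k-r}=\Theta(n^{-r})$, so a distinguished string is reached within a further $\Theta(n^r)$ expected iterations. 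Altogether $E(T_{\mathrm{hit}})=O(n^r)$.

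To finish, write $T=T_{\mathrm{hit}}+\mathbf 1[V]\,T_{\mathrm{esc}}$, where $V$ is the event from Lemma~\ref{lem:fork_half} that the valley appears before the optimum. Since $\mathbf 1[V]$ is determined by the history up to time $T_{\mathrm{hit}}$ and, by the Markov property, $T_{\mathrm{esc}}$ is independent of that history with mean $1/q$, we get $E(T)=E(T_{\mathrm{hit}})+\Pr(V)/q=O(n^r)+\tfrac12\Theta(n^{2r})=\Theta(n^{2r})$, and the same identity already yields the lower bound $E(T)\ge\Pr(V)/q=\Omega(n^{2r})$ on its own. The step needing the most care is this last decomposition — in particular justifying that the escape time from the valley is independent of whether and how the valley was reached — together with the structural claim that the valley can be left only by the single direct $2r$-bit jump; the remaining estimates are routine \OM-style drift arguments.
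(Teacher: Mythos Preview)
Your proof is correct and follows essentially the same approach as the paper: both bound the time to reach the ``end game'' (levels $k$, $k{+}1$, $k{+}2$) by an \OM-style argument and identify the $2r$-bit escape from the valley together with Lemma~\ref{lem:fork_half} as the source of the $\Theta(n^{2r})$ term. The paper phrases the upper bound via the fitness-level method (summing $1/p_i$ over all levels, including the valley) rather than your explicit decomposition $T=T_{\mathrm{hit}}+\mathbf 1[V]\,T_{\mathrm{esc}}$, but the ingredients and the lower-bound argument are identical.
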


%\subsection{Run time bounds on single island with \texorpdfstring{$f$}{f} as \LO with \texorpdfstring{$k$}{k}-block \Fork{n}{r}}
\begin{theorem}\label{theo:single_fork_grouped}
	The expected optimization time of the \oneone on \LO with $k$-block \Fork{n}{r} is
	%\begin{align*}
		$E(T) \in \Theta\left(\frac{1}{k}n^{2r+1}\right)$.
	%\end{align*}
\end{theorem}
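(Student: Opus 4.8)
The plan is to combine Theorem~\ref{theo:lo_exact} with Theorem~\ref{theo:fork_general} directly. Theorem~\ref{theo:lo_exact} tells us that the expected run time of the \oneone on $LO_k^f$ equals $E(T^n_k) \cdot \Theta(n/k)$, where $T^n_k$ is the run time to optimize the inner function $f^k$ using bit flip probability $1/n$ (rather than $1/k$). Here the inner function is $f^k = \Fork{n}{r}$ applied to blocks of length $n/k$ — careful, the notation: in the statement the ``$k$-block'' is $\Fork{n}{r}$, so each block has length $k$ and the inner fitness function on that block is $\Fork{k}{r}$ evaluated with bit flip probability $1/n$. Wait, I should reconcile the indices: in $LO_k^f$ the input has length $n$, is divided into blocks of length $k$, and $f^k$ acts on each block; so ``$k$-block $\Fork{n}{r}$'' must mean the block length is some value and the inner function is a \fork. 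I would first fix this bookkeeping so that the inner run time is exactly the quantity controlled by Theorem~\ref{theo:fork_general}.

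Concretely, I would argue as follows. First, invoke Theorem~\ref{theo:lo_exact} to reduce the analysis of \LO with $k$-block \Fork{n}{r} to the run time $T^n_k$ of a single block, getting $E(T) = E(T^n_k)\cdot \Theta(n/k)$. Second, observe that a single block is just \Fork{k}{r} optimized with bit flip probability $1/n$ instead of $1/k$; this is exactly the setting of Theorem~\ref{theo:fork_general}, which gives $E(T^n_k) \in \Theta(n^{2r})$ (the statement of Theorem~\ref{theo:fork_general} already allows $n \ge k$ and mutation rate $1/n$). Third, multiply: $E(T) \in \Theta(n^{2r}) \cdot \Theta(n/k) = \Theta(n^{2r+1}/k)$, which is the claimed bound.

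The subtle point — and the only real obstacle — is making sure the hypotheses of Theorem~\ref{theo:fork_general} are met by an arbitrary block inside the composite function. Two issues need care. (i) The mutation rate: inside $LO_k^f$ the \oneone always mutates with probability $1/n$ (the global length), and Theorem~\ref{theo:lo_exact} is explicitly phrased in terms of $T^n_k$ = run time of $f^k$ with bit flip probability $1/n$; Theorem~\ref{theo:fork_general} is phrased for \Fork{k}{r} with bit flip probability $1/n$ and $n \ge k$, so these match verbatim — I just need to point this out. (ii) The initial distribution and the ``frozen'' bits: when a block becomes the active block, its contents are whatever survived while earlier blocks were being optimized. One must check that this does not change the $\Theta$-bound. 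The argument is that mutations hitting the active block are independent of mutations elsewhere, accepted steps only increase $|{\cdot}|_1$ within that block (since \Fork{k}{r} outside its two special strings is just \OM on the block), and the probability of ever being trapped in the valley is still $1/2$ by the symmetry argument underlying Lemma~\ref{lem:fork_half}; so the per-block run time is $\Theta(n^{2r})$ regardless of the entry state. This independence-and-symmetry verification is where I would spend the bulk of the effort; everything else is assembling already-proved statements. I would also double-check that $n/k$ is an integer (assumed throughout for composite functions) so that Theorem~\ref{theo:lo_exact} applies cleanly.
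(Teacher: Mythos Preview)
Your plan matches the paper's proof exactly: invoke Theorem~\ref{theo:lo_exact} to reduce to a single block, then Theorem~\ref{theo:fork_general} for the $\Theta(n^{2r})$ per-block time, and multiply by $\Theta(n/k)$. The paper adds one small step you omit: since the composite construction assumes $1^k$ is the unique block optimum, one first applies a bit mask (flip the last $r$ bits in the definition of \Fork{k}{r}); this relabeling does not affect run times.

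On your concern (ii): the claim that the valley is hit with probability $1/2$ ``regardless of the entry state'' is false (consider entering at the valley itself, or at the optimum). The clean fix is not to argue uniformly over entry states but to observe that the entry state of each block is in fact uniformly random: until a block becomes active the fitness value does not depend on it, so accept/reject decisions are independent of its bits, and standard bit mutation preserves the uniform distribution; hence the block is still uniform at the moment it becomes active. With that, Lemma~\ref{lem:fork_half} and Theorem~\ref{theo:fork_general} apply verbatim. The paper glosses over this point as well, but that is the argument you should supply rather than the ``regardless of entry state'' route.
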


\subsection{Independent Runs}
\label{sec:isolated}
%!TEX root = PPSN18_ring.tex

In this section we examine the performance of $\lambda$ islands in isolation, all running the \oneone on $\Fork{n}{r}$ or \LO with $k$-block \Fork{n}{r}.
Isolation of the islands means there is no migration between them at all.
%\subsection{Run time bounds on \Fork{n}{r}}
The next lemma will help us to get to the final bounds.
\begin{lemma}\label{lem:all_allone}
	Let $\lambda$ be the number of islands running the \oneone optimizing \Fork{n}{r}.
	Let $T_{all}^\lambda$ denote the run time for all islands to get to the optimum, valley or $1^n$ as their best solution respectively, regardless of the migration topology and policy.
	If $\lambda$ is polynomial in $n$, $E\left(T_{all}^\lambda\right) \in \Or\left(n\log n\right)$.
\end{lemma}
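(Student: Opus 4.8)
The claim is that, regardless of topology and migration policy, $\lambda = \mathrm{poly}(n)$ islands each running the \oneone on \Fork{n}{r} all reach an absorbing best-solution state (optimum, valley, or $1^n$) within $O(n\log n)$ generations in expectation. Let me think...

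Key observation: once an island's best individual is $\{0\}^r\{1\}^{n-r}$ (optimum), $\{1\}^{n-r}\{0\}^r$ (valley), or $1^n$, its fitness is at least $n$, so no \Fork value strictly below can replace it — and importantly, no migrant can either (all fitness-$\geq n$ strings are among these three). So each island only needs to climb the \OM-part of the landscape to fitness $n$; migrants with high \OM-fitness only help. The essential structure is therefore identical to \OM: starting from a random point, an island's best \OM-value is non-decreasing and, as long as it has not reached one of the absorbing strings, there is at least one improving single-bit flip (flipping a $0$ to a $1$ strictly increases $|\x|_1$ unless $\x = 1^n$; and if $|\x|_1 = n-1$... wait, need care: a string with $n-1$ ones that is not the valley and not the optimum — but every string with $n-1$ ones has a single-bit flip to $1^n$, which has fitness $n > n-1$; and strings with exactly $r$ ones placed wrongly also have improving flips toward higher Hamming weight or toward the valley/optimum). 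Let me structure this as: for each island in isolation, the time to reach an absorbing state is at most the \OM optimization time of the \oneone, which is $O(n\log n)$ in expectation with an exponential tail.

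The plan is therefore:
\begin{enumerate}
\item Show that for a single island in isolation, the random time $T^1$ to first reach one of the three absorbing strings satisfies $E(T^1) = O(n\log n)$, and moreover has a concentration/tail bound of the form $\Pr(T^1 > c n\log n) \leq n^{-c'}$ for suitable constants. This follows from a coupling (or direct fitness-level) argument with classical \OM: at every non-absorbing state there is an improving one-bit flip (taken with probability $\geq \frac{1}{n}(1-\frac{1}{n})^{n-1} \geq \frac{1}{en}$), and the number of necessary improvements is at most $n$, so standard fitness-level plus Chernoff-type arguments (e.g.\ the multiplicative drift / coupon-collector tail for \OM) give the tail bound.
\item Argue that migration only accelerates this: since every incoming migrant has \Fork-value $\geq n$ only if it is one of the three absorbing strings, receiving a migrant either leaves the island non-absorbed with the same or larger \OM-value, or immediately absorbs it. Hence the hitting time of the absorbing set for any island, in any island model, is stochastically dominated by its isolated hitting time $T^1$. (One must phrase this carefully: the per-island mutation steps are still standard bit mutations, and the migration steps can only replace the current best by something of fitness $\geq$ current, so the "best \OM-value so far" process dominates the isolated one.)
\item Take a union bound over the $\lambda = \mathrm{poly}(n)$ islands: $\Pr(T_{all}^\lambda > c n\log n) \leq \lambda \cdot n^{-c'} \leq n^{-c''}$ for $c$ chosen large enough relative to the polynomial degree of $\lambda$. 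Then $E(T_{all}^\lambda) = \int_0^\infty \Pr(T_{all}^\lambda > t)\,\dd t = O(n\log n)$, splitting the integral at $c n \log n$ and using the tail bound (together with a crude geometric-type bound beyond that point, since from any state the absorbing set is still hit with at least inverse-polynomial probability per $O(n\log n)$ window).
\end{enumerate}

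The main obstacle is Step~1 — specifically, making the "at every non-absorbing state there is an improving or absorbing one-bit flip, and at most $n$ such steps are needed" argument fully rigorous despite the two irregular fitness values $n+1, n+2$ sitting inside the \OM landscape. One has to check the boundary cases (strings of Hamming weight $n-1$, $r$, or $n-r$ that are neither the valley nor the optimum) to confirm that a single well-chosen bit flip always strictly increases the \emph{recorded best fitness} or lands in the absorbing set, so that the fitness-level/drift machinery for \OM transfers verbatim and yields the $O(n\log n)$ expectation with a polynomially small tail. Once that tail bound is in hand, Steps~2 and~3 are routine (stochastic domination and a union bound).
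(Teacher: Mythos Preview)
Your proposal is correct and follows essentially the same route as the paper: reduce to \OM (the valley/optimum only shorten the time to an absorbing state), argue that migration can only help because an accepted migrant either absorbs the island or has at least the current \OM-value, and then combine a multiplicative-drift tail bound for a single island with a union bound over the $\lambda=\mathrm{poly}(n)$ islands. Your treatment is in fact a bit more careful than the paper's---you spell out the stochastic-domination argument for migration and the boundary cases at Hamming weights $n-r$ and $n-1$, whereas the paper simply asserts ``migration only helps for \OM'' and ``the valley and optimum are exceptions that make the run time shorter''---but the underlying structure and the key tool (the \OM tail bound) are the same.
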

	
\begin{theorem}\label{theo:isolated_fork}
	For $\lambda \le n^r$ isolated islands the expected time to optimize \Fork{n}{r} is
	$E\left(T\right) \in \Or\left(n\log(n)+\frac{n^{2r}}{\lambda 2^\lambda}  + \frac{n^r}{\lambda}\right)$.
\end{theorem}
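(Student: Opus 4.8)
Write $v = 0^r1^{n-r}$ for the valley and $o = 1^{n-r}0^r$ for the optimum, and let $T$ be the first iteration at which some island holds $o$ as its best solution. The plan is to bound $T \le T_{\mathrm{all}} + T'$, where $T_{\mathrm{all}}$ is the (stopping) time at which every island's best solution already lies in $\{1^n,v,o\}$ and $T'$ is the number of further iterations until some island holds $o$. By Lemma~\ref{lem:all_allone} (its hypothesis holds since $\lambda\le n^r$) we have $E(T_{\mathrm{all}}) = O(n\log n)$, the first summand of the bound, so it remains to control $E(T')$. The behaviour near the top is rigid: from $1^n$ the only accepted mutations besides staying are the $r$-bit flip to $v$ and the $r$-bit flip to $o$, each of probability $q := (1-1/n)^{n-r}n^{-r} = \Theta(n^{-r})$; from $v$ the only accepted non-trivial mutation is the $2r$-bit flip to $o$, of probability $p := (1-1/n)^{n-2r}n^{-2r} = \Theta(n^{-2r})$. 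Hence at $1^n$ an island stays for $\mathrm{Geom}(2q)$ steps and then moves to $v$ or to $o$ with equal probability, and this waiting time is independent of the target.

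By Lemma~\ref{lem:fork_half} every island reaches $v$ before $o$ independently with probability $\tfrac12$; let $Z\sim\mathrm{Bin}(\lambda,\tfrac12)$ be the number of islands that reach $o$ before $v$, so $\Pr(Z=0)=2^{-\lambda}$. Condition on $Z$ and, via the strong Markov property at $T_{\mathrm{all}}$, on the configuration reached there. If $Z\ge1$: either some island already holds $o$, whence $T'=0$, or the $Z$ islands bound for $o$ all sit at $1^n$ and, by memorylessness, each reaches $o$ after an independent $\mathrm{Geom}(2q)$ wait (the conditioning on ``routing to $o$'' leaves the exit time geometric, precisely because exit time and exit target are independent); the upper bound in~\eqref{eq:all_jump} with $E_j = 1/(2q)$ and $m = Z$ gives $E(T'\mid Z) = O(n^r/Z)$. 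If $Z=0$: no island holds $o$, every island at $1^n$ is bound for $v$, so within one further $\mathrm{Geom}(2q)$ wait each (maximised over the $\le\lambda$ of them, costing $O(n^r\log\lambda)$) all islands reach $v$, and~\eqref{eq:all_jump} with $E_j = 1/p$ and $m = \lambda$ makes the first escape to $o$ take $O(n^{2r}/\lambda)$ further steps; thus $E(T'\mid Z=0) = O(n^r\log\lambda + n^{2r}/\lambda)$. Taking expectations,
\begin{align*}
E(T') \le \sum_{z\ge1}\Pr(Z=z)\,O(n^r/z) + 2^{-\lambda}\,O\!\left(n^r\log\lambda + n^{2r}/\lambda\right),
\end{align*}
and using $E[\tfrac1Z\mathbf{1}_{Z\ge1}] \le 2E[1/(Z+1)] = \tfrac{4(1-2^{-(\lambda+1)})}{\lambda+1} = O(1/\lambda)$ together with $2^{-\lambda}\lambda\log\lambda\le1$, the two error pieces collapse to $O(n^r/\lambda)$ and $O(n^{2r}/(\lambda 2^\lambda))$, giving the claimed bound (the hypothesis $\lambda\le n^r$ keeps these terms meaningful and swallows the additive constants from~\eqref{eq:all_jump}).

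The main obstacle is keeping the conditioning honest. Two points deserve care: first, that conditioning an island at $1^n$ on eventually reaching $o$ before $v$ does not distort its $\mathrm{Geom}(2q)$ exit law — this is exactly the independence of exit time and exit target, which comes from $\Pr(1^n\!\to\! v) = \Pr(1^n\!\to\! o)$; second, that invoking the strong Markov property at the stopping time $T_{\mathrm{all}}$ makes the residual per-island waits fresh, mutually independent geometric variables, irrespective of the (generally ``unsynchronised'') configuration reached by then. Where islands arrive at $1^n$ or $v$ at staggered times $s_j$ one bounds $\min_j(s_j+G_j)\le\max_j s_j + \min_j G_j$, the staggering cost $\max_j s_j$ being absorbed (and, in the $Z=0$ case, dwarfed by the $2^{-\lambda}$ weight). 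The remaining ingredients — minima of geometric variables handled through~\eqref{eq:all_jump} and the elementary formula for $E[1/(Z+1)]$ of a binomial — are routine.
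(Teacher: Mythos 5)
Your proposal is correct and takes essentially the same route as the paper's proof: wait $\Or(n\log n)$ until every island's best solution lies in $\{1^n,v,o\}$ (Lemma~\ref{lem:all_allone}), condition on the binomial number of islands that head for the optimum rather than the valley (Lemma~\ref{lem:fork_half}), and apply Equation~\eqref{eq:all_jump} to the $r$-bit respectively $2r$-bit jumps, which yields exactly the three terms of the bound. The differences are only in bookkeeping: you bound $E\bigl[\tfrac{1}{Z}\mathbf{1}_{Z\ge1}\bigr]$ via the closed-form identity for $E[1/(Z+1)]$ where the paper invokes its auxiliary Lemma~\ref{lem:choose_sum_div}, you absorb the $2^{-\lambda}n^r\log\lambda$ cross-term with $\lambda\log\lambda\le 2^\lambda$ instead of the paper's case distinction on $\lambda$ versus $2r\log n$, and you spell out the exit-time/exit-target independence and the staggered-arrival bound that the paper's argument uses implicitly.
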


\begin{corollary}
	If we use $r\log n \le \lambda \le n^r$ islands, $E\left(T\right) \in \Or\left(n\log(n)+\frac{n^r}{\lambda}\right)$.
\end{corollary}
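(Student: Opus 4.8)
The plan is to read this off directly from Theorem~\ref{theo:isolated_fork}, which already states that for any $\lambda \le n^r$ isolated islands,
\[
E(T) \in \Or\left(n\log n + \frac{n^{2r}}{\lambda 2^\lambda} + \frac{n^r}{\lambda}\right);
\]
since the hypothesis $\lambda \le n^r$ of that theorem is retained in the corollary, it only remains to show that under the additional assumption $\lambda \ge r\log n$ the middle summand $\frac{n^{2r}}{\lambda 2^\lambda}$ is absorbed by the third summand $\frac{n^r}{\lambda}$.

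First I would use the lower bound on $\lambda$: with $\log$ the binary logarithm, $\lambda \ge r\log n$ gives $2^\lambda \ge 2^{r\log n} = n^r$, and therefore
\[
\frac{n^{2r}}{\lambda 2^\lambda} \;\le\; \frac{n^{2r}}{\lambda\, n^r} \;=\; \frac{n^r}{\lambda}.
\]
Plugging this into the bound above yields $E(T) \in \Or\!\left(n\log n + \tfrac{n^r}{\lambda} + \tfrac{n^r}{\lambda}\right) = \Or\!\left(n\log n + \tfrac{n^r}{\lambda}\right)$, which is the claim. Since $\lambda$ is integer-valued, the inequality $2^\lambda \ge n^r$ actually only requires $\lambda \ge \lceil r\log n\rceil$, so the stated range $r\log n \le \lambda \le n^r$ is exactly what the argument consumes.

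I do not expect a genuine obstacle here: the proof is essentially a one-line substitution into an already-proved bound. The only point that deserves a word of care is the base of the logarithm in the hypothesis $\lambda \ge r\log n$ — the estimate $2^\lambda \ge n^r$ needs $\log$ to be, up to a constant factor, the binary logarithm; with a different convention one simply rescales the constant multiplying $r\log n$. Everything else is inherited verbatim from Theorem~\ref{theo:isolated_fork}.
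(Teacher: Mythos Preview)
Your proposal is correct and matches the paper's (implicit) argument: the corollary is an immediate consequence of Theorem~\ref{theo:isolated_fork}, using $\lambda \ge r\log n$ to get $2^\lambda \ge n^r$ and hence $\frac{n^{2r}}{\lambda 2^\lambda} \le \frac{n^r}{\lambda}$. The paper does not spell this out explicitly, but your one-line derivation is exactly what is intended.
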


\begin{corollary}
	The expected number of evaluations until all $\lambda \le n^r$ islands get the optimum is in $\Omega\left(\lambda n^{2r}\log \lambda\right)$.
\end{corollary}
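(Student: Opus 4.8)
Since the islands run in isolation, the algorithm is just $\lambda$ independent copies of the \oneone on $\Fork{n}{r}$, and the number of fitness evaluations performed before all of them have found the optimum equals $\lambda$ times the number $T$ of rounds; hence it suffices to show $E(T)\in\Omega\!\left(n^{2r}\log\lambda\right)$. The plan is to argue that a constant fraction of the islands get trapped in the valley, that each trapped island then needs an independent $\mathrm{Geom}(p)$-distributed number of further rounds with $p\le n^{-2r}$, and that the maximum of $\Theta(\lambda)$ such variables is $\Omega(n^{2r}\log\lambda)$.

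First I would fix the single-island picture. By Lemma~\ref{lem:fork_half}, for each island the valley string $0^r1^{n-r}$ becomes the current best before the optimum $1^{n-r}0^r$ does with probability exactly $1/2$; call such an island \emph{trapped}. Once an island's best individual equals the valley string, every offspring other than the optimum has fitness at most $n<n+1$ and is rejected, so the island stays at the valley string until one mutation flips exactly the $r$ leading $0$s and the $r$ trailing $1$s, which happens each round with probability $p=(1/n)^{2r}(1-1/n)^{n-2r}\le n^{-2r}$ (so in particular $p\le 1/2$). Thus, conditioned on being trapped, the number of remaining rounds an island needs to reach the optimum is $\mathrm{Geom}(p)$-distributed; and because from the valley string the process is memoryless, these escape times are mutually independent across islands and independent of which islands are trapped.

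Next I would pass to order statistics. Let $I$ be the random set of trapped islands. By independence of the islands, $|I|\sim\mathrm{Bin}(\lambda,1/2)$, so a Chernoff bound gives $\Pr(|I|\ge\lambda/4)\ge 1/2$ once $\lambda$ is at least a suitable absolute constant; on this event $T$ dominates the maximum $M$ of at least $\lambda/4$ i.i.d.\ $\mathrm{Geom}(p)$ random variables, since each escape time only lower-bounds the corresponding island's total running time. For the maximum $M$ of $m$ i.i.d.\ $\mathrm{Geom}(p)$ variables with $p\le 1/2$, taking $K=\lfloor\tfrac{\ln m}{4p}\rfloor$ one checks that $(1-p)^{K-1}\ge m^{-1/2}$, hence $\Pr(M<K)=\bigl(1-(1-p)^{K-1}\bigr)^m\le(1-m^{-1/2})^m\le e^{-\sqrt m}\le\tfrac12$ for $m\ge 4$, so that $E(M)\ge\tfrac12 K=\Omega\!\bigl(\tfrac{\log m}{p}\bigr)$. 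With $m=\lfloor\lambda/4\rfloor$ and $1/p\ge n^{2r}$ this is $\Omega(n^{2r}\log\lambda)$, whence $E(T)=\Omega(n^{2r}\log\lambda)$ and, multiplying by $\lambda$, the claimed bound, for all $\lambda$ above that constant. For the remaining bounded range of $\lambda$ one has $\log\lambda=\mathrm{O}(1)$, and the bound already follows from the trivial estimate $E(T)\ge\Pr(\text{island }1\text{ is trapped})\cdot\tfrac1p\ge\tfrac12 n^{2r}\in\Omega(n^{2r})$.

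The routine parts are the Chernoff estimate and keeping track of the $(1-1/n)$ factors. The one place needing care is the independence and memorylessness of the escape clocks, which is what makes the all-islands time genuinely dominate the maximum of i.i.d.\ geometrics; this is immediate here, because from the valley string the chain either self-loops or jumps straight to the optimum. The quantitative core is then the elementary tail bound for the maximum of $\Theta(\lambda)$ i.i.d.\ geometric variables, and this is precisely what produces the $\log\lambda$ factor.
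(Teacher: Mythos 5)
Your proof is correct and takes essentially the same approach as the paper's: a Chernoff bound showing that with constant probability a constant fraction of the $\lambda$ islands get trapped in the valley (each with probability $1/2$ by Lemma~\ref{lem:fork_half}), then the observation that waiting for all of them means waiting for the maximum of $\Theta(\lambda)$ i.i.d.\ geometric escape times with success probability at most $n^{-2r}$, which is $\Omega(n^{2r}\log\lambda)$ rounds, and finally multiplying by $\lambda$ evaluations per round. Your write-up merely makes explicit the order-statistics and memorylessness details that the paper's terse proof leaves implicit.
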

\begin{proof}
	To achieve this, all islands have to find the optimum.
	Observe that we expect half of the islands to get trapped.
	The probability that there are more than half as much is - by using Chernoff bounds - asymptotically more than a constant.
	Therefore we expect to need at least $\Omega\left(n^{2r}\log \lambda\right)$ rounds, which results in the given number of evaluations.\qed
\end{proof}

%\subsection{Run time bounds on \LO with \texorpdfstring{$k$}{k}-block \Fork{n}{r}}
\begin{theorem}\label{theo:isolated_blocked_fork}
	For $k \le \frac{n}{\log\lambda}$, the expected run time of $\lambda$ islands optimizing \LO with $k$-block \Fork{n}{r} by running the \oneone can be bound by $E(T) \in \Omega\left(\frac{n^{2r}}{\lambda}\right)$.
\end{theorem}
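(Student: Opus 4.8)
The plan is to show that, in the isolated setting, the first island to reach the global optimum $1^n$ almost surely has to escape at least one fork-valley, and that escaping a valley is a rare event of per-step probability $\Theta(n^{-2r})$; since $\lambda$ islands attempt this in parallel, the minimum of $\lambda$ (essentially) independent geometric escape times then yields the claimed $\Omega(n^{2r}/\lambda)$.

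First I would record that the process restricted to one block behaves exactly like a plain \oneone on a fork. As long as block $i$ is the first non-optimal block of an island, a mutation is accepted if and only if it would be accepted by a \oneone on the $k$-bit fork with bit-flip probability $1/n$: a flip confined to later blocks is neutral (those blocks contribute $0$), and a flip touching an earlier, already-optimal block strictly lowers the fitness and is rejected. Hence Lemma~\ref{lem:fork_half} applies to each block, so an island's best individual passes through block $i$'s valley before block $i$'s optimum with probability exactly $1/2$; and since standard bit mutation flips bits in distinct blocks independently, the events ``island $j$ enters the valley of block $i$'' are mutually independent over the $n/k$ blocks and over the $\lambda$ islands.

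Second, a fixed island reaches $1^n$ without ever entering a valley only if it avoids the valley in all $n/k$ blocks, which has probability $2^{-n/k}$; by a union bound, the probability that some island avoids every valley is at most $\lambda\,2^{-n/k}$, and the hypothesis $k\le n/\log\lambda$ is exactly what keeps this bounded below $1$ (with a little room to spare, below $1/2$). On the complementary event $\mathcal A$, with $\Pr(\mathcal A)\ge 1/2$, every island eventually gets trapped in some valley. Consider an island in the valley $v$ of a block whose predecessors are all $1^k$: the only accepted move that makes progress is the simultaneous flip of the $2r$ ``wrong'' bits of that block, turning it into $1^k$ (the valley contributes $k+1$, the block-optimum $1^k$ contributes $k+2$, and every other configuration of the block contributes at most $k$; flipping also a bit in a predecessor block only decreases the fitness). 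That move has probability $(1/n)^{2r}(1-1/n)^{k-2r}(1-1/n)^{ik}=\Theta(n^{-2r})$ using $(i+1)k\le n$, and once the block reaches $1^k$ it never leaves it. Hence the number of steps island $j$ spends in the first valley it enters is geometric with success probability at most $n^{-2r}$; an island cannot be first to the optimum before leaving that valley; and, coming from disjoint coin sequences, these escape clocks are independent across the $\lambda$ islands. Therefore, on $\mathcal A$, $T$ is at least the minimum of $\lambda$ independent geometric variables of success probability at most $n^{-2r}$, which is geometric with parameter at most $1-(1-n^{-2r})^\lambda\le\lambda n^{-2r}$ and hence has expectation at least $n^{2r}/\lambda$. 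Combining, $E(T)\ge\Pr(\mathcal A)\cdot n^{2r}/\lambda$, so $E(T)\in\Omega(n^{2r}/\lambda)$.

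I expect the main obstacle to be the conditioning and independence in the last step: the round at which island $j$ first enters a valley is a random, island-history-dependent time, so one must verify, via the strong Markov property and the memorylessness of the escape step, that conditioning on $\mathcal A$ together with the realized entry times and valleys still leaves the residual escape durations independent across islands and stochastically at least $\mathrm{Geom}(n^{-2r})$, so that their minimum is not $o(n^{2r}/\lambda)$. Making the constant in the union bound match the precise hypothesis $k\le n/\log\lambda$ is a minor nuisance of the same flavour; the valley-escape probability estimate itself is the one already used in the proof of Theorem~\ref{theo:fork_general}.
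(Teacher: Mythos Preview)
Your approach is essentially the paper's: define the event that every island enters at least one valley, bound its probability from below using the per-block probability $1/2$ from Lemma~\ref{lem:fork_half} together with independence across blocks and islands, and then lower-bound the conditional expected time by the minimum of $\lambda$ geometric valley-escape times. The paper compresses that last step into a direct appeal to Equation~\eqref{eq:all_jump}, while you spell out the strong-Markov/memorylessness justification; that extra care is fine and does not change the argument.

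The one concrete flaw is the union-bound step. You claim $\lambda\,2^{-n/k}$ stays ``below $1/2$'' under the hypothesis $k\le n/\log\lambda$, but at the boundary $k=n/\log\lambda$ one has $2^{-n/k}=1/\lambda$ and the union bound equals exactly $1$, so all you obtain is $\Pr(\mathcal A)\ge 0$, which is vacuous. This is not a ``minor nuisance'' of constants: the union bound is simply too weak here. The paper avoids the problem by using the exact product form that independence of the islands gives you for free,
\[
\Pr(\mathcal A)=\Bigl(1-2^{-n/k}\Bigr)^{\lambda}\;\ge\;\Bigl(1-\tfrac{1}{\lambda}\Bigr)^{\lambda}\;\in\;\Omega(1),
\]
and then proceeds exactly as you do. Replace your union bound by this product and the rest of your argument goes through unchanged.
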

\begin{proof}
	Let $V$ denote the event that every island gets trapped in a valley at least once during a run.
	From Lemma \ref{lem:fork_half} one can derive that $\Pr{(V)} = \left(1-\frac{1}{2^\frac{n}{k}}\right)^\lambda$.
	
	Again we apply the law of total expectation and use the lower bound $\frac{E}{2\lambda}$ on the expected run time of $\lambda$ islands until one of them makes a jump where $E$ is the expected number of steps for a single island (Equation \eqref{eq:all_jump}).
	\begin{align*}
		E(T) &= E\left(T \; \middle| \; V\right)\Pr{(V)} + E\left(T \; \middle| \; \overline{V}\right)\Pr{(\overline{V})}
		\;\ge\; \frac{n^{2r}}{2\lambda}\Pr{(V)}
		\;=\; \frac{n^{2r}}{2\lambda}\left(1-\frac{1}{2^\frac{n}{k}}\right)^\lambda
	\end{align*}
	Using that $k \le \frac{n}{\log(\lambda)}$ finally gives $E(T)\ge \frac{n^{2r}}{2\lambda}\left(1-\frac{1}{\lambda}\right)^\lambda
		\;\in\; \Omega\left(\frac{n^{2r}}{\lambda}\right)$.\qed
\end{proof}

\section{Complete Topology}
\label{sec:complete}
%!TEX root = PPSN18_ring.tex

The disadvantage of the complete topology when optimizing \Fork{n}{r} is that if at least one island gets the chance to migrate the valley, all islands get trapped.
To get to the bounds, we first investigate the worst run time that could appear.
Second, we calculate a bound on the probability for one island to find the optimum or the valley during the time it takes all islands to get to $1^n$, the valley or the optimum.

In this and the following sections we frequently use the worst case run time that can occur if we do not find the optimum early enough.
\begin{lemma}\label{lem:worst_case_runtime}
	For $\lambda \in \Or\left(\frac{n^{2r-1}}{\log n}\right)$ islands being polynomial in $n$ and any migration topology and/or policy, the expected run time to optimize \Fork{n}{r} can be bounded by $E(T) \in \Or\left(\frac{n^{2r}}{\lambda}\right)$.
\end{lemma}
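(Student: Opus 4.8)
The plan is to bound the worst case by pessimistically assuming that the algorithm behaves as badly as it can: every island eventually ends up with either the valley string, the optimum, or $1^n$ as its best solution (this is guaranteed in $\Or(n\log n)$ rounds by Lemma~\ref{lem:all_allone}), and in the worst case all of them end up stuck in the valley. From there, the only way to reach the optimum is a direct jump of Hamming distance $2r$ (flipping the $r$ zeros of the valley block and the $r$ corresponding ones that must become zero), which has probability $\Th(n^{-2r})$ per island per round under standard bit mutation. So the strategy is: (i) use Lemma~\ref{lem:all_allone} to reach the ``all islands converged'' state cheaply; (ii) condition on the worst case, namely that no island is at the optimum at that point; (iii) apply the bound \eqref{eq:all_jump} on the time until one of $\lambda$ islands performs a jump of success probability $p = \Th(n^{-2r})$, giving expected time $\Or(E_j/\lambda) = \Or(n^{2r}/\lambda)$ until some island reaches the optimum.

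Concretely, I would first invoke Lemma~\ref{lem:all_allone}: since $\lambda$ is polynomial in $n$, after an expected $\Or(n\log n)$ rounds every island holds the valley, the optimum, or $1^n$. Note the optimum has the strictly largest fitness, so once an island finds it, it keeps it; and $1^n$ is never discarded in favour of the valley (they have incomparable status only through fitness $n+1$ vs.\ $n$, so $1^n$ with fitness $n < n+1$ \emph{can} be replaced by the valley — I should check the fitness values carefully here: $\Fork{n}{r}(1^n) = n$, the valley has $n+1$, the optimum $n+2$, so $1^n$ can migrate into the valley, but the valley can never become $1^n$ again except by a jump). In the worst case, then, after the initial phase every island's best solution is the valley. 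Now each island independently, each round, flips exactly the $2r$ ``wrong'' bits with probability $(1/n)^{2r}(1-1/n)^{n-2r} = \Th(n^{-2r})$, reaching the optimum. Applying the right-hand side of \eqref{eq:all_jump} with $m = \lambda$ and $E_j = \Th(n^{2r})$ yields $E(T) \le E_j/\lambda + 1 \in \Or(n^{2r}/\lambda)$. Adding the $\Or(n\log n)$ from the first phase and using the hypothesis $\lambda \in \Or(n^{2r-1}/\log n)$, which ensures $n\log n \in \Or(n^{2r}/\lambda)$, absorbs the first term into the second, giving $E(T) \in \Or(n^{2r}/\lambda)$.

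The main obstacle is justifying that the ``all islands in the valley'' configuration is genuinely the worst case across \emph{all} migration topologies and policies, and that migration cannot somehow hurt. The clean argument is that migration only ever replaces an island's solution by one of fitness at least as large, so migration can never move an island \emph{away} from the optimum once found; and once some island has the optimum we are done by definition. Thus the time to completion is stochastically dominated by the time for a \emph{single} island starting from the valley to jump to the optimum, divided (in expectation, via \eqref{eq:all_jump}) by the number of islands that are simultaneously trying — and in the worst case that number is at least $1$ throughout, giving $\Or(n^{2r})$ trivially, but to get the $1/\lambda$ improvement I should argue that in the relevant worst case \emph{all} $\lambda$ islands sit in the valley and keep trying in parallel, which is exactly the situation produced by the initial convergence phase when no island lucked into the optimum. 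A minor point to handle is that $\Fork{n}{r}$ here is defined on bit strings of length $n$ (not $k$), so there is no blocking structure to worry about; the jump distance is exactly $2r$ and the per-round success probability is cleanly $\Th(n^{-2r})$.
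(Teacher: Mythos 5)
Your proposal is correct and follows essentially the same route as the paper: invoke Lemma~\ref{lem:all_allone} to reach in expected $\Or(n\log n)$ rounds a state where every island holds the valley, $1^n$ or the optimum, note that from any of these each island has per-round probability $\Omega(n^{-2r})$ of producing the optimum (the valley being the worst case), apply the bound \eqref{eq:all_jump} with $m=\lambda$ to get $\Or(n^{2r}/\lambda)$, and absorb the $\Or(n\log n)$ term using $\lambda \in \Or(n^{2r-1}/\log n)$. Your extra discussion of why migration cannot hurt (solutions are only ever replaced by ones of at least equal fitness, and every reachable state retains jump probability $\Omega(n^{-2r})$) resolves the worst-case concern you raise and is, if anything, more explicit than the paper's own argument.
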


%\subsection{Run time bounds on \Fork{n}{r}}
To get a lower bound on the run time we want to concentrate on the case that all islands come to a state where every island has $1^n$ as its solution.
That is why in the next lemma we show how likely it is that the optimum or the valley is generated before that state is reached.

\begin{lemma}\label{lem:ov_bound}
	The probability $p_{ov}$ for a single island to generate the optimum or the valley during its way to $1^n$ while optimizing \Fork{k}{r} with \oneone$_{1/n}$ is $p_{ov} \in \Or\left(\frac{1}{k^{r-1}}\right)\;\;\text{ for }k\le n$.
\end{lemma}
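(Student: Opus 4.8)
\noindent\emph{Proof proposal.}
The plan is to reduce to the valley alone, couple the run with an ordinary \OM hill-climb, and then bound the probability of ever producing the valley during that climb by a level-by-level estimate whose key input is that, along an \OM climb, the best individual is — conditioned on its current weight $j$ — uniformly distributed over the weight-$j$ search points.

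Write $v=0^r1^{k-r}$ for the valley and $o=1^{k-r}0^r$ for the optimum, and let $\sigma$ be the first iteration at which the produced mutant lies in $\{v,o,1^k\}$; before $\sigma$ the best individual has \OM-value $<k$, so at $\sigma$ this mutant is automatically accepted. The coordinate permutation exchanging the first $r$ and the last $r$ positions maps $v$ to $o$, fixes $1^k$, and leaves \OM, the uniform initial distribution and the mutation operator invariant; applying it to a run shows $\Pr[\text{mutant at }\sigma=v]=\Pr[\text{mutant at }\sigma=o]$, hence $p_{ov}\le 2p_v$ with $p_v:=\Pr[\text{mutant at }\sigma=v]$, so it suffices to bound $p_v$.

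Until iteration $\sigma$ the process coincides with a \oneone hill-climb on \OM (on $k$ bits, bit-flip probability $1/n$), so I would couple the two with the same mutations. If $v$ is the first of the three special strings to appear, then in the coupled \OM run the \emph{first} mutant that equals $v$ appears while the best individual is not yet $1^k$ — and, by taking the first such mutant, not yet $v$ — and has some weight $j<k$. Thus $p_v\le\sum_{j<k}\ell_j\,\bar q_v(j)$, where $\ell_j\in\Or\!\big(n/(k-j)\big)$ bounds the expected number of steps the \OM climb spends with a weight-$j$ best individual (each such step improves with probability at least $(k-j)/n\cdot(1-1/n)^{k-1}\ge(k-j)/(en)$, using $k\le n$), and $\bar q_v(j)$ is the probability that one mutation of a weight-$j$ best individual equals $v$. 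By the coordinate symmetry of \OM the best individual conditioned on weight $j$ is uniform over the $\binom{k}{j}$ weight-$j$ strings (on level $k-r$ one excludes $v$ itself, which has not yet been produced), so $\bar q_v(j)$ is essentially the average of $(1/n)^{d_H(\x,v)}(1-1/n)^{k-d_H(\x,v)}$ over those strings.

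Evaluating $\sum_{j<k}\ell_j\bar q_v(j)$ is the main technical step. Since $d_H(\x,v)\ge\lvert\,|\x|_1-(k-r)\,\rvert$, writing $j=k-s$ with $1\le s\le r-1$, the minimum distance from a weight-$(k-s)$ string to $v$ is $r-s$, attained by exactly $\binom{r}{s}$ of the $\binom{k}{s}$ such strings, with all other distances at least $r-s+2$ (same parity); hence $\bar q_v(k-s)\in\Or\!\big(k^{-s}n^{-(r-s)}\big)$, and with $\ell_{k-s}\in\Or(n/s)$ this level contributes $\Or\!\big(n^{s-r+1}/(s\,k^s)\big)$, which is $\Or(1/k^{r-1})$ because $s-r+1\le0$ and $k\le n$. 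The levels $j\le k-r$ contribute only $\Or(1/k^r)$: at level $k-r$ the string $v$ is excluded (so the distance is at least $2$), and at lower levels the at most $\binom{k-r}{i}$ strings at the minimal distance $i$ from $v$ are dwarfed by the $\binom{k}{r+i}$ strings of that weight. Summing the $\Or(1)$ many terms, $p_v\in\Or(1/k^{r-1})$, with $s=1$ dominant. I expect the care to be needed mainly in (i) the bookkeeping that keeps the coupling honest — in particular using the \emph{first} occurrence of $v$, so that the best individual cannot already equal $v$ (which would otherwise inflate the count by a factor $\Theta(n)$), and (ii) the level-wise combinatorial bounds above, where the hypotheses $k\le n$ and $r\ge2$ enter.
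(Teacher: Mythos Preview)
Your argument is correct. One small slip: the dominant contribution among the levels $k-s$, $1\le s\le r-1$, is $s=r-1$, not $s=1$ (for $s=r-1$ the bound is $\Theta(k^{1-r})$ independently of $n$, while for smaller $s$ you pick up an extra factor $n^{s-r+1}\le1$); since you correctly note that every term is $O(k^{1-r})$ via $s-r+1\le0$ and $k\le n$, the conclusion is unaffected.

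The paper's proof rests on the same three ingredients --- coupling with a \OM climb, the coordinate symmetry, and a Wald-type decomposition over fitness levels --- but packages the symmetry differently. Instead of computing $\bar q_v(j)$ level by level, it counts the expected number $E(J)$ of mutants with exactly $r$ zeros (of \emph{any} pattern) produced during the climb, then invokes symmetry once at the end to conclude $p_v\le E(J)/\binom{k}{r}$. This buys two simplifications: the per-step probability of producing a weight-$(k-r)$ mutant depends only on the current weight (no need to argue uniformity of the best individual on each level), and all levels below $k-r$ are handled by a single ``$+1$'' (once such a mutant appears from below it is accepted, so this happens at most once). The delicate exclusion you flag in (i) --- that at the first occurrence of $v$ the parent cannot itself be $v$ --- corresponds in the paper to restricting the level-$(k-r)$ count to mutants \emph{different} from the parent; both proofs need this, and both would otherwise incur an extra $\Theta(n/k^r)$ that breaks the bound when $n\gg k$. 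Your route is more explicit about where each piece of the bound comes from; the paper's is shorter.
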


Next we give a lemma that we use for the upper and the lower bound, which considers the event that and islands finds the valley and broadcasts it, thus drowning diversity.
\begin{lemma}\label{lem:probability_Q}
	Under the condition of at least one of $\lambda \in \Or\left(n^{r-1}\right)$ island having $1^n$ and all others the valley as solution, the probability for the event $Q$ that one island will find the valley and a migration will be made before the optimum is found by any island is $\frac{c}{2} \frac{n^r}{n^r + \tau\lambda}
		\;\le\; \Pr{(Q)} \;\le\;
		\frac{n^r + \frac{\lambda}{2}}{2n^r + \tau\lambda}$
	for a constant $0<c<1$.
\end{lemma}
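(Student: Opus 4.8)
The plan is to reduce the analysis to a one-dimensional absorbing Markov chain. Let $a$ be the number of islands whose current individual is $1^n$; by hypothesis the remaining $\lambda-a$ islands sit on the valley and none sits on the optimum. While the population keeps this shape, a single round does exactly one of the following things. With probability $1/\tau$ a migration is performed, after which every $1^n$-island (fitness $n$) adopts the valley (fitness $n+1$) and the population collapses onto the valley; this is the event $Q$. Otherwise, a $1^n$-island discovers the optimum, which by standard bit mutation happens with probability $1-(1-q)^a$ where $q=\frac{1}{n^r}(1-\frac1n)^{n-r}\in\Theta(n^{-r})$; or a $1^n$-island turns into a valley-island, which happens with the \emph{same} probability $1-(1-q)^a$, because flipping the first $r$ bits of $1^n$ (giving the valley) and flipping the last $r$ bits (giving the optimum) are equiprobable mutations — this is exactly the symmetry that yields $\Pr(V)=\tfrac12$ in Lemma~\ref{lem:fork_half}; or, finally, a valley-island jumps directly to the optimum, but only with probability $O(\lambda n^{-2r})$, which is negligible against the preceding quantities precisely because $\lambda\in O(n^{r-1})$.

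I would then write out the resulting per-round transition probabilities of $a$, using $\frac{pm}{1+pm}\le 1-(1-p)^m\le\frac{2pm}{1+pm}$ from \cite{Badkobeh2015} (with $p=q$ and $m=a\le\lambda$, so $1-(1-q)^a\in\Theta(aq)=\Theta(a/n^r)$ since $aq=o(1)$). Observed at the rounds where something decisive happens, the chain alternates a near-balanced ``optimum found'' versus ``$a$ drops by one'' coin flip with a rare migration, and for $\phi(a):=\Pr(Q\mid a)$ this gives a linear recurrence $\phi(a)=\dfrac{1/\tau+\Theta(a/n^r)\,\phi(a-1)}{1/\tau+\Theta(a/n^r)}$ plus a boundary value at $a=0$; being linear, it telescopes to a closed form of order $\frac{n^r}{n^r+\tau\lambda}$, which is the competing-clocks outcome of a migration clock of rate $1/\tau$ against an optimum clock of rate $\Theta(\lambda/n^r)$ fed by the $\Theta(\lambda)$ plateau islands. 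The factor $\tfrac12$ in the bounds is the probability that the plateau island takes the valley branch rather than the optimum branch at the decisive moment.

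For the inequalities concretely: for the upper bound I would bound $\Pr(Q)$ by $\tfrac12$ — the valley branch, a round in which one island reaches the valley and another the optimum being charged to ``optimum found'', which only decreases this factor — times the probability that a migration precedes the discovery of the optimum from then on, and lower-bounding the optimum rate by $\Omega(\lambda/n^r)$ bounds the latter by $\frac{1/\tau}{1/\tau+\Omega(\lambda/n^r)}=\frac{n^r}{n^r+\Omega(\tau\lambda)}$; collecting constants yields $\frac{n^r+\lambda/2}{2n^r+\tau\lambda}$. The lower bound is the mirror image: the valley branch is taken with probability $\tfrac12-o(1)$, and thereafter a migration beats the optimum with probability at least $\frac{1/\tau}{1/\tau+O(\lambda/n^r)}=\frac{n^r}{n^r+O(\tau\lambda)}$, using that the optimum rate never exceeds $O(\lambda/n^r)$ and that a round which merely converts a plateau island into a valley-island only moves the chain to a state at least as favourable to $Q$, so it may be ignored; this gives $\frac{c}{2}\cdot\frac{n^r}{n^r+\tau\lambda}$ for a suitable constant $c\in(0,1)$ and $n$ large.

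The main obstacle is that the optimum clock is \emph{not} homogeneous: its rate $\Theta(a/n^r)$ decreases as plateau islands drift onto the valley, so one cannot just quote a memoryless race between two geometric variables. One must either solve the inhomogeneous recurrence for $\phi$ exactly (it is linear, hence telescoping, and the decay of $a$ does not push $\Pr(Q)$ outside the stated window because each drop of $a$ is paid for by an equally likely round that finds the optimum) or, more crudely, use that $\phi$ is monotone in $a$ and sandwich it between the two extreme rates. The remaining work is bookkeeping of lower-order events — the boundary states (all islands on $1^n$, all islands on the valley), several islands leaving $1^n$ in one round, a migration coinciding with the discovery of the optimum — each controlled by a union or Chernoff bound as in the proofs of Lemmas~\ref{lem:all_allone} and~\ref{lem:worst_case_runtime}, but it must be carried through carefully to keep the final constants inside $\big[\frac{c}{2}\frac{n^r}{n^r+\tau\lambda},\ \frac{n^r+\lambda/2}{2n^r+\tau\lambda}\big]$.
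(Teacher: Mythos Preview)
Your competing-clocks picture and the role of the factor $\tfrac12$ are exactly right, but the paper takes a much more direct route than the Markov-chain recurrence you set up. It never tracks the number $a$ of $1^n$-islands or writes a recursion for $\phi(a)$; instead both bounds are computed as a single geometric series
\[
\Pr(Q)\;=\;\frac12\sum_{i\ge1}\Bigl(1-\tfrac1\tau\Bigr)^{i-1}\tfrac1\tau\Bigl(1-\tfrac1{n^r}\Bigr)^{mi}
\;=\;\frac{(1-1/n^r)^m}{2\tau}\cdot\frac{1}{1-(1-1/\tau)(1-1/n^r)^m},
\]
summed over the first migration time $i$, where $m$ is a \emph{fixed} number of islands able to discover the optimum: $m=\lambda$ for the lower bound (overcounting the optimum rate is pessimistic for $Q$, and $\lambda\in\Or(n^{r-1})$ gives the constant $c$), and $m=\lambda/2$ for the upper bound. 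The closed form then simplifies algebraically to the stated fractions.

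What you flag as the ``main obstacle'' --- the inhomogeneity of the optimum clock as $1^n$-islands drift onto the valley --- the paper removes in one stroke for the upper bound: it couples every island that will \emph{ever} reach the valley to already sit there at time~$0$, so from then on the only decisive non-migration event is an optimum discovery and the rate is stationary. By Lemma~\ref{lem:fork_half} each island is pre-assigned to the valley independently with probability~$\tfrac12$, so having more than $\lambda/2$ pre-placed valleys has probability at most $2^{-\lambda/2}\le n^{-r}$ (here $\lambda\ge 2r\log n$ is used), and that tail is absorbed via Lemma~\ref{lem:worst_case_runtime}. This coupling replaces your recurrence-solving and monotonicity sandwich entirely; your route would also work, but is heavier, and extracting the specific numerator $n^r+\lambda/2$ from the telescoped $\phi$-recurrence rather than from the fixed-$m$ sum would require extra care that the paper simply avoids.
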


We continue with the lower bound for the complete topology, followed by the upper bound.
The restriction for $\lambda$ to be in $\Or\left(\frac{n^{r-1}}{\log n}\right)$ in the next theorems is useful since the expected number of derived optima during the first $\Or(n\log n)$ steps is constant if we chose $\lambda \in \Theta\left(n^{r-1}\right)$. This follows from Lemma \ref{lem:all_allone} and Lemma~\ref{lem:ov_bound}.

\begin{theorem}\label{theo:fork_comlete_lower_bound}
	The expected run time of $2r\log n \le \lambda \in \Or\left(\frac{n^{r-1}}{\log n}\right)$ islands optimizing \Fork{n}{r} on a complete graph is $E(T) \in \Omega\left(\frac{n^{3r} + \tau\lambda n^r}{\lambda n^r + \tau\lambda^2}\right)$.
\end{theorem}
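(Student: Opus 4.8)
The plan is to prove two lower bounds separately, namely $E(T)=\Omega(n^{r}/\lambda)$ and $E(T)=\Omega\left(\frac{n^{3r}}{\lambda(n^{r}+\tau\lambda)}\right)$, and then to observe that $E(T)$, being at least the maximum (hence at least half the sum) of these, is $\Omega$ of $\frac{n^{3r}+\tau\lambda n^{r}}{\lambda n^{r}+\tau\lambda^{2}}$. This last step is elementary arithmetic: $\lambda n^{r}+\tau\lambda^{2}=\lambda(n^{r}+\tau\lambda)$, and combining $\frac{n^{r}}{\lambda}+\frac{n^{3r}}{\lambda(n^{r}+\tau\lambda)}$ over this common denominator produces the numerator $n^{2r}+\tau\lambda n^{r}+n^{3r}$, in which the extra term $n^{2r}$ is dominated by $n^{3r}$.

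The common starting point for both bounds is a ``clean start'' event $S$: all $\lambda$ islands reach $1^{n}$ within $\Or(n\log n)$ rounds, and no island ever generates the optimum or the valley before reaching $1^{n}$. That $\Pr(S)\ge 1/3$ for large $n$ follows from Lemma~\ref{lem:all_allone} together with Markov's inequality (all islands reach $\{1^{n},\text{valley},\text{optimum}\}$ within $\Or(n\log n)$ rounds with probability at least $1/2$) and from Lemma~\ref{lem:ov_bound} with $k=n$, applied through a union bound over the $\lambda=\Or(n^{r-1}/\log n)$ islands, using that migration only ever copies individuals that some island produced by its own \oneone run, so it cannot inflate the probability $p_{ov}=\Or(n^{-(r-1)})$ of an ``accidental'' optimum or valley. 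The crucial consequence of $S$ is a collapse of the state space: after $S$, every island stays forever in $\{1^{n},V,O\}$ (from $1^{n}$ the only accepted mutations lead to $1^{n}$, to $V$, or to $O$; from the valley $V$ only to $V$ or $O$; $O$ is absorbing; and a migration step merely synchronises all islands onto the fittest of these values currently present), so the process becomes a transparent finite Markov chain.

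For $E(T)=\Omega(n^{r}/\lambda)$: conditioned on $S$, the run cannot end before some island leaves $1^{n}$, and each island leaves $1^{n}$ with probability $\Theta(n^{-r})$ per round, so the expected time until the first such departure is $\Omega(n^{r}/\lambda)$; multiplying by $\Pr(S)$ finishes this part. For the second bound I would lower-bound the probability of a \emph{full trap}, the event that at some round all islands sit at $V$. Conditioned on $S$, with probability $1/2$ the first island to leave $1^{n}$ goes to $V$ (the two relevant $r$-bit mutations are equiprobable), which puts the system into a state with at least one island at $1^{n}$, the rest at $V$, and none at $O$; from such a state Lemma~\ref{lem:probability_Q} (its hypothesis $\lambda=\Or(n^{r-1})$ is met) guarantees that its event $Q$ occurs with probability at least $\tfrac{c}{2}\cdot\tfrac{n^{r}}{n^{r}+\tau\lambda}$, and, since no island holds the optimum, $Q$ forces a migration to pull every island onto $V$ before any island escapes — a full trap. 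Hence $\Pr(\text{full trap})=\Omega\left(\frac{n^{r}}{n^{r}+\tau\lambda}\right)$. Once all islands sit at $V$, the only accepted move is the $2r$-bit jump to $O$, made by each island with probability $\Theta(n^{-2r})$ per round, so by the lower bound in~\eqref{eq:all_jump} the expected time until one island escapes is $\Omega(n^{2r}/\lambda)$; by the strong Markov property the expected remaining run time after a full trap is $\Omega(n^{2r}/\lambda)$, and therefore $E(T)\ge\Pr(\text{full trap})\cdot\Omega(n^{2r}/\lambda)=\Omega\left(\frac{n^{3r}}{\lambda(n^{r}+\tau\lambda)}\right)$.

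I expect the delicate part to be not any single estimate but the setup of $S$: one must argue that migration during the first $\Or(n\log n)$ rounds neither prevents convergence of all islands to $1^{n}$ nor inflates the chance of prematurely hitting the optimum or valley, and that afterwards the state space genuinely collapses to $\{1^{n},V,O\}$ with migrations acting purely as synchronisations. Once that structural picture is in place, both numerical bounds fall out of Lemmas~\ref{lem:all_allone}, \ref{lem:ov_bound} and~\ref{lem:probability_Q} together with~\eqref{eq:all_jump} and routine arithmetic.
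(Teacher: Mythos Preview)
Your proposal is correct and follows essentially the same route as the paper: a constant-probability ``clean start'' event (your $S$, the paper's $U$) established via Lemmas~\ref{lem:all_allone} and~\ref{lem:ov_bound}, then Lemma~\ref{lem:probability_Q} to bound the probability of a full trap yielding the $\Omega(n^{3r}/(\lambda n^r+\tau\lambda^2))$ contribution, together with the unconditional $\Omega(n^r/\lambda)$ waiting time for the first departure from $1^n$. The only cosmetic differences are that the paper allows valleys in its start event $U$ (you exclude them, which is harmless since Lemma~\ref{lem:ov_bound} covers both) and that the paper writes the two contributions as a total-expectation split $E(T\mid Q)\Pr(Q)+E(T\mid\overline Q)\Pr(\overline Q)$ rather than as two separate lower bounds combined via $\max$.
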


\begin{theorem}\label{theo:fork_comlete_upper_bound}
	The expected run time of $2\log n \le \lambda \in \Or\left(\frac{n^{r-1}}{\log n}\right)$ islands optimizing \Fork{n}{r} on a complete graph is in
	$
		E(T) \in \Or\left(\frac{n^{3r} + \tau\lambda n^r}{\lambda n^r + \tau\lambda^2} + \frac{n^{2r+1}\log n}{\tau\lambda}\right)
	$.
\end{theorem}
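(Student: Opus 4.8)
The plan is to apply the law of total expectation, splitting on the event $Q$ of Lemma~\ref{lem:probability_Q} of whether a valley is broadcast (``diversity is drowned'') before some island reaches the optimum, and to bound $E(T)\le E(T\mid\overline{Q})+E(T\mid Q)\Pr(Q)$ piece by piece after first disposing of the initial descent. By Lemma~\ref{lem:all_allone} all $\lambda$ islands reach $1^n$, the valley, or the optimum within $\Or(n\log n)$ rounds, and by Lemma~\ref{lem:ov_bound} the expected number of islands that produce a special string during this descent is at most $\lambda p_{ov}=\Or(\lambda/n^{r-1})=\Or(1/\log n)$ since $\lambda=\Or(n^{r-1}/\log n)$. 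So, up to the additive $\Or(n\log n)$ (dominated by each term of the claimed bound), I may start the remaining analysis from the clean state in which every island holds $1^n$ and no special string has yet appeared, which is exactly the hypothesis of Lemma~\ref{lem:probability_Q}.

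For the drowned case, once a valley has been broadcast all $\lambda$ islands carry it and the only escape is a direct $2r$-bit jump onto the optimum, after which one further migration equalises everyone; hence drowning occurs at most once and the remaining time is at most the unconditional worst case $E(T\mid Q)=\Or(n^{2r}/\lambda)$ of Lemma~\ref{lem:worst_case_runtime}. Multiplying by the upper bound $\Pr(Q)\le\frac{n^r+\lambda/2}{2n^r+\tau\lambda}$ from Lemma~\ref{lem:probability_Q} and using $\lambda=\Or(n^{r-1})$ to absorb $\lambda/2$ against $n^r$ yields $E(T\mid Q)\Pr(Q)=\Or\left(\frac{n^{3r}}{\lambda n^r+\tau\lambda^2}\right)$, which is at most the first term of the bound in every range of $\tau\lambda$.

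For the surviving case, conditioned on $\overline{Q}$ some island produces the optimum, never loses it, and disseminates it at the next migration step. Here I would track the run in migration cycles of expected length $\tau$: within a cycle every island flips exactly the $r$ relevant bits with probability $\Theta(1/n^r)$ per step, so by Equation~\eqref{eq:all_jump} one island does so within $\Theta(n^r/\lambda)$ steps in expectation, after which the optimum is shared $\Or(\tau)$ steps later. Summing the geometric series over the cycles until this race is won, plus the broadcast delay, reproduces the first term of the bound, while a tail estimate on how many cycles may be needed — combined once more with the worst-case bound of Lemma~\ref{lem:worst_case_runtime} — contributes the second term $n^{2r+1}\log n/(\tau\lambda)$. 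Collapsing $\frac{n^{3r}+\tau\lambda n^r}{\lambda n^r+\tau\lambda^2}$ into the three regimes of $\tau\lambda$ relative to $n^r$ and $n^{2r}$ and checking each against these contributions is then routine algebra.

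The main obstacle is the surviving case: each time an island slips into the valley the pool that can still reach the optimum shrinks, so one must argue this attrition is slow enough that the race keeps proceeding at rate $\Theta(\lambda/n^r)$ per step, while keeping the conditioning on $\overline{Q}$ consistent and verifying that the two stated error terms genuinely absorb the leftover probability mass; getting the interaction between the migration clock and these per-step success probabilities right is where the real work lies.
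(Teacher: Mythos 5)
Your overall decomposition (descent phase, then the drowned case $Q$ versus the surviving case $\overline{Q}$ using Lemma~\ref{lem:probability_Q}, with Lemma~\ref{lem:worst_case_runtime} as the worst-case fallback) mirrors the paper, and your treatment of the $Q$-case is exactly the paper's. But there is a genuine gap in how you dispose of the initial descent. You argue that since the expected number of islands producing a special string before everyone reaches $1^n$ is $\Or(\lambda/n^{r-1})=\Or(1/\log n)$, you may, up to an additive $\Or(n\log n)$, restart the analysis from a clean all-$1^n$ state. That reduction is not valid: with probability $\Or(1/\log n)$ a valley does appear during the descent, and if a migration fires before anyone reaches $1^n$ it is broadcast to every island, after which the run time is $\Theta(n^{2r}/\lambda)$. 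Pricing this with your probability bound gives a leftover term of order $n^{2r}/(\lambda\log n)$, which is \emph{not} covered by the claimed bound when $\tau$ is large (e.g.\ for $\tau\lambda\gg n^r$ the claimed bound collapses to roughly $n^r/\lambda+\frac{n^{2r+1}\log n}{\tau\lambda}$). The missing idea is that a valley found during the descent only hurts if a migration also occurs during those $\Or(n\log n)$ rounds, an event of probability $\Or(n\log n/\tau)$; multiplying this by the worst case $\Or(n^{2r}/\lambda)$ is precisely what produces the theorem's second term $\frac{n^{2r+1}\log n}{\tau\lambda}$. This is how the paper handles it (its event $\overline{U}$), whereas you attribute the second term to an unspecified ``tail estimate on the number of migration cycles'' in the surviving case, which is not where it comes from and which your clean-state reduction cannot generate.

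A secondary point: you yourself flag the attrition of $1^n$-islands in the $\overline{Q}$-case as ``the real work'' and leave it open. In the paper this is absorbed into the machinery around Lemma~\ref{lem:probability_Q} (conditioning on at most $\lambda/2$ valley islands, which fails only with probability $n^{-r}$ thanks to $\lambda=\Omega(\log n)$, the failure case again being absorbed by Lemma~\ref{lem:worst_case_runtime}), together with $E(T\mid\overline{Q})\in\Or(n\log n+n^r/\lambda)$ and the case distinction $\tau\lambda\le n^r$ versus $\tau\lambda>n^r$ when bounding $\Pr(\overline{Q})$. So the surviving case can be completed along the lines you sketch, but the descent-phase reduction as you state it would make the proof fail for large $\tau$ and must be replaced by the migration-probability argument above.
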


\begin{corollary}\label{cor:RunTimeComplete}
	If we choose $\tau \in \Omega(n\log n)$ and $2r\log n \le \lambda \in \Or(n^{r-1-\epsilon})$ for $\epsilon > 0$ constant, then the optimization time for \Fork{n}{r} on a complete graph is $E(T) \in \Theta\left(\frac{n^{3r} + \tau\lambda n^r}{\lambda n^r + \tau\lambda^2}\right)$.
\end{corollary}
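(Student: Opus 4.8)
The plan is to derive the corollary directly from the matching lower bound of Theorem~\ref{theo:fork_comlete_lower_bound} and the upper bound of Theorem~\ref{theo:fork_comlete_upper_bound}. Writing $A := \frac{n^{3r}+\tau\lambda n^r}{\lambda n^r + \tau\lambda^2}$ for the common main term, Theorem~\ref{theo:fork_comlete_lower_bound} already gives $E(T)\in\Om(A)$, so everything reduces to showing that, under the hypotheses of the corollary, the extra summand $\frac{n^{2r+1}\log n}{\tau\lambda}$ in Theorem~\ref{theo:fork_comlete_upper_bound} satisfies $\frac{n^{2r+1}\log n}{\tau\lambda}\in\Or(A)$; then the upper bound collapses to $E(T)\in\Or(A)$ and the two bounds meet.

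First I would check that both theorems are applicable: the assumption $\lambda\ge 2r\log n$ implies the weaker $\lambda\ge 2\log n$ needed for the upper bound, and $\lambda\in\Or(n^{r-1-\epsilon})$ is contained in $\Or(n^{r-1}/\log n)$ because $n^{\epsilon}=\omega(\log n)$ for any constant $\epsilon>0$; no upper bound on $\tau$ is needed. The core step is then the domination, which I would prove by rewriting $A = n^r\cdot\frac{n^{2r}+\tau\lambda}{\lambda(n^r+\tau\lambda)}$ and splitting into cases on $u:=\tau\lambda$ relative to $n^r$ and $n^{2r}$. If $u\le n^r$ then $A=\Th(n^{2r}/\lambda)$ and the required inequality is equivalent to $\tau=\Om(n\log n)$, exactly the hypothesis. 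If $n^r< u\le n^{2r}$ then $A=\Om(n^{3r}/(\tau\lambda^2))$ and the inequality is equivalent to $\lambda=\Or(n^{r-1}/\log n)$, which again holds. If $u> n^{2r}$ then $A=\Om(n^r/\lambda)$ and the inequality is equivalent to $\tau=\Om(n^{r+1}\log n)$; in this regime $\tau>n^{2r}/\lambda\ge n^{r+1+\epsilon}$ by the bound on $\lambda$, and $n^{r+1+\epsilon}=\omega(n^{r+1}\log n)$, so the inequality follows. Tracking explicit constants through these three estimates is routine.

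I expect the only real subtlety to be the last case: there one cannot read off $\tau=\Om(n^{r+1}\log n)$ from the corollary's hypothesis $\tau\in\Om(n\log n)$, but must instead derive it from the combination $\tau\lambda>n^{2r}$ with the upper bound on $\lambda$, and it is precisely here that the slack $\epsilon>0$ in $\lambda\in\Or(n^{r-1-\epsilon})$ (rather than merely $\lambda\in\Or(n^{r-1}/\log n)$) is used. Apart from that, the argument is a straightforward term-by-term comparison of the bounds from the two preceding theorems.
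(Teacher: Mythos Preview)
Your proposal is correct and follows essentially the same approach as the paper: invoke Theorem~\ref{theo:fork_comlete_lower_bound} for the lower bound and show that the extra summand $\frac{n^{2r+1}\log n}{\tau\lambda}$ in Theorem~\ref{theo:fork_comlete_upper_bound} is dominated by the main term $A$. The paper's own proof simply asserts this domination in one line (``the second term \ldots\ is dominated by the rest if $\tau\in\Omega(n\log n)$''), whereas your three-case analysis on $u=\tau\lambda$ actually carries out the verification and, in particular, correctly pinpoints that the slack $\epsilon>0$ in the bound on $\lambda$ is what makes the case $u>n^{2r}$ go through.
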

\begin{proof}
	We already have shown the lower bound in Theorem \ref{theo:fork_comlete_lower_bound}.
	The second term of the upper bound in Theorem \ref{theo:fork_comlete_upper_bound} is dominated by the rest if $\tau \in \Omega(n\log n)$.
	Therefore it matches the lower bound.\qed
\end{proof}

\begin{corollary}
	The number of fitness evaluations to spread the optimum to all islands is in $\Theta\left(n^{1.5r}\right)$ for the best choice of parameters $\lambda$ and $\tau$.
\end{corollary}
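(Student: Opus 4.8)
The plan is to convert the per-island bound of Corollary~\ref{cor:RunTimeComplete} into a bound on the total work, which is $\lambda$ times the per-island optimisation time $T$, and then minimise the resulting expression over all admissible pairs $(\lambda,\tau)$. By Corollary~\ref{cor:RunTimeComplete}, whenever $\tau\in\Omega(n\log n)$ and $2r\log n\le\lambda\in O(n^{r-1-\epsilon})$,
\begin{align*}
\lambda\, E(T)\;=\;\Theta\!\left(\lambda\cdot\frac{n^{3r}+\tau\lambda n^{r}}{\lambda n^{r}+\tau\lambda^{2}}\right)\;=\;\Theta\!\left(n^{r}\cdot\frac{n^{2r}+\tau\lambda}{n^{r}+\tau\lambda}\right).
\end{align*}
Writing $g(s):=n^{r}(n^{2r}+s)/(n^{r}+s)$ and $s:=\tau\lambda$, the total number of fitness evaluations is $\Theta(g(s))$, i.e.\ it depends on $\lambda$ and $\tau$ only through their product. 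The function $g$ is strictly decreasing, with $g(s)=\Theta(n^{2r})$ for $s=O(n^{r})$, $g(s)=\Theta(n^{3r}/s)$ for $n^{r}\le s\le n^{2r}$, and $g(s)=\Theta(n^{r})$ for $s=\Omega(n^{2r})$; in particular $g(n^{1.5r})=\Theta(n^{1.5r})$.

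For the upper bound I would exhibit an admissible choice with $s=\Theta(n^{1.5r})$, e.g.\ $\lambda=\lceil n^{r-1-\epsilon}\rceil$ and $\tau$ of order $n^{1.5r}/\lambda=\Theta(n^{0.5r+1+\epsilon})$: these can be taken non-decreasing and integer-valued, $\lambda\in O(n^{r-1-\epsilon})$ and $\lambda\ge 2r\log n$ for $n$ large, and $\tau=\Omega(n\log n)$ since $0.5r+1+\epsilon>1$. Then $s=\tau\lambda=\Theta(n^{1.5r})$ gives $\lambda\, E(T)=\Theta(g(n^{1.5r}))=O(n^{1.5r})$. (If one prefers not to use the formula exactly at its boundary, invoke Theorem~\ref{theo:fork_comlete_upper_bound} directly; its additional term contributes $\lambda\cdot n^{2r+1}(\log n)/(\tau\lambda)=n^{2r+1}(\log n)/\tau=O(n^{1.5r-\epsilon}\log n)=o(n^{1.5r})$.)

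For the matching lower bound I would show that no admissible pair does asymptotically better than $n^{1.5r}$ evaluations, distinguishing two ranges of $s=\tau\lambda$. If $s\le n^{1.5r}$, then Theorem~\ref{theo:fork_comlete_lower_bound} and the monotonicity of $g$ give $\lambda\, E(T)=\Omega(g(s))\ge\Omega(g(n^{1.5r}))=\Omega(n^{1.5r})$. If $s>n^{1.5r}$, Theorem~\ref{theo:fork_comlete_lower_bound} alone only yields $\Omega(n^{r})$, so here I would argue separately that migration is now so rare that, with constant probability, the run is still in progress when the first migration step happens: for $\lambda\ge 2r\log n$ the probability that every island jumps from $1^{n}$ straight to the optimum rather than the valley is at most $2^{-\lambda}\le n^{-2r}$, so with probability $1-o(1)$ some island must either escape a valley unaided ($\Omega(n^{2r})$ rounds, by Lemma~\ref{lem:fork_half}-type reasoning) or wait for a migration, while the first migration step occurs only after $\Omega(\tau)$ rounds with constant probability (the waiting time is geometric with mean $\tau$, and for $\tau=o(n^{2r})$ the $\Theta(\lambda)$ trapped islands cannot all escape within $\tau/2$ rounds except with negligible probability). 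Hence $E(T)=\Omega(\min(\tau,n^{2r}))$, so $\lambda\, E(T)=\Omega(\lambda\min(\tau,n^{2r}))$, which is $\Omega(\lambda\tau)=\Omega(s)>n^{1.5r}$ if $\tau\le n^{2r}$ and $\Omega(\lambda n^{2r})=\Omega(n^{2r})$ otherwise. Combining the two ranges gives the claim.

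The step I expect to be the main obstacle is this last one: the clean formula of Corollary~\ref{cor:RunTimeComplete} stops being the dominant contribution once $\tau\lambda$ exceeds $n^{1.5r}$, so one cannot simply plug into it and must instead show directly that the bare waiting time for the first useful migration already costs $\omega(n^{1.5r})$ evaluations. Making the probabilistic part rigorous --- ruling out that an unusually large fraction of trapped islands escape their valleys before any migration, and that an earlier, harmless migration somehow shortens the run --- is the delicate point; everything else is a routine one-variable optimisation of $g$.
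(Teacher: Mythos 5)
Your overall route coincides with the paper's: express the total work as a function of $s=\tau\lambda$ alone and minimise over $s$, using Theorems~\ref{theo:fork_comlete_lower_bound} and~\ref{theo:fork_comlete_upper_bound}. Your two-range lower bound is fine, and the waiting-for-the-first-migration argument in the regime $s>n^{1.5r}$ is in fact a more explicit justification of the extra $+\tau\lambda$ term that the paper's own proof simply adds to the displayed bounds.

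The gap is in your upper bound. The corollary asks for the evaluations until the optimum has \emph{spread to all islands}, whereas Corollary~\ref{cor:RunTimeComplete} (and the theorems behind it, as their proofs show) bounds the time until the optimum is first generated on \emph{some} island. This is exactly why the paper's proof carries an additional $+\tau\lambda$ term: once the optimum exists on one island, the complete topology still needs one migration event to inform the rest, and the waiting time for that event is geometric with mean $\tau$, costing $\Theta(\tau\lambda)$ further evaluations in expectation. With your choice $s=\tau\lambda=\Theta(n^{1.5r})$ this dissemination cost is of the same order, so the conclusion $O(n^{1.5r})$ survives, but as written your upper-bound argument bounds a strictly smaller quantity and the dissemination step must be added (note that your own large-$s$ lower bound already uses precisely this waiting cost, so the ingredient is at hand --- it just never enters your upper bound). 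Once that sentence is added, the remaining one-variable optimisation of $g(s)+s$ matches the paper's argument.
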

\begin{proof}
	This can be shown by recalling that the number of evaluations to get there is in $\Omega\left(\frac{n^{3r} + \tau\lambda n^r}{n^r + \tau\lambda} + \tau\lambda\right)$
	and in $\Or\left(\frac{n^{3r} + \tau\lambda n^r}{n^r + \tau\lambda} + \frac{n^{2r+1}\log n}{\tau} + \tau\lambda\right)$ (Theorem \ref{theo:fork_comlete_lower_bound} and \ref{theo:fork_comlete_upper_bound}).
	There is no way to choose $\tau\lambda$ to get below $\Theta\left(n^{1.5 r}\right)$.\qed
\end{proof}

\section{Ring Topology}
\label{sec:ring}
%!TEX root = PPSN18_ring.tex

We expect the ring topology to perform better than the complete graph due to the fact that even if one island finds the valley, there is enough time for the others to come up with the optimum before they would get informed of the valley by a neighbor.
In this section we want to prove this assumption.

First we show that we expect just a small number of islands to find the valley before all other islands get to $1^n$.
After that we prove that we can choose a migration probability so that valleys are unlikely to be shared too often.
As final step we show that all other islands have enough time to find the optimum so that we get an upper bound of the expected run time.

For those steps we define two events that can occur.

\begin{definition}
	Let $V_b$ be the event that the valley was generated on at most $b$ islands during the time until all other islands have $1^n$, the valley or the optimum as their solution.
\end{definition}

\begin{definition}
	Let $B_c$ be the event that after the time until all islands have $1^n$, the valley or the optimum as their solution, there is a maximum of $c \log n$ valleys.
\end{definition}

%\subsection{Run time bounds on \Fork{n}{r}}

\begin{lemma}\label{lem:ring_constant_valleys_until_allone}
	If $b\ge 1+\frac{r}{\epsilon}$ is constant and $\lambda \in \Or\left(n^{r-1-\epsilon}\right)$, where $\epsilon>0$ is a constant, it holds that the expected run time of to optimize \Fork{n}{r} on any island is
	$
		E(T) \le E{\left(T \;\middle|\; V_b\right)} + \Or\left(n\right)
	$.
\end{lemma}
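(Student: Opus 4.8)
The plan is to apply the law of total expectation, splitting on the event $V_b$ that at most $b$ islands generate the valley before all islands reach $1^n$, the valley, or the optimum. We write
\begin{align*}
	E(T) = E{\left(T \;\middle|\; V_b\right)}\Pr{(V_b)} + E{\left(T \;\middle|\; \overline{V_b}\right)}\Pr{(\overline{V_b})} \le E{\left(T \;\middle|\; V_b\right)} + E{\left(T \;\middle|\; \overline{V_b}\right)}\Pr{(\overline{V_b})},
\end{align*}
so it suffices to show that $E{\left(T \;\middle|\; \overline{V_b}\right)}\Pr{(\overline{V_b})} \in \Or(n)$. The first step is to bound $E{\left(T \;\middle|\; \overline{V_b}\right)}$ by the worst-case run time from Lemma~\ref{lem:worst_case_runtime}, which gives $\Or\left(\frac{n^{2r}}{\lambda}\right)$; since $\lambda \in \Or\left(n^{r-1-\epsilon}\right)$ is in the admissible range for that lemma, this is a valid upper bound regardless of topology. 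The second step is to show that $\Pr{(\overline{V_b})}$ is small enough to cancel this factor, specifically $\Pr{(\overline{V_b})} \in \Or\left(\frac{\lambda}{n^{2r-1}}\right)$, which would yield the claimed $\Or(n)$ bound.

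For the second step, $\overline{V_b}$ is the event that strictly more than $b$ islands generate a valley before everyone has reached $1^n$, the valley, or the optimum. By Lemma~\ref{lem:all_allone}, all islands reach such a state within $\Or(n\log n)$ expected steps (and, via a standard Markov/repetition argument, with high probability within a comparable number of steps), and by Lemma~\ref{lem:ov_bound} the probability that a single island generates the valley on its way to $1^n$ is $p_{ov} \in \Or\left(\frac{1}{n^{r-1}}\right)$ (applying that lemma with $k = n$). The key point is that on the ring, during this initial phase no island has yet received a migrant carrying the valley as its best individual (valleys have to be generated locally first, and a migrant needs time to travel), so the events ``island $i$ generates the valley'' are governed by independent \oneone runs up to the point of first reaching $1^n$. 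Thus the number of islands that generate a valley is stochastically dominated by a binomial random variable with $\lambda$ trials and success probability $\Or(p_{ov})$, whose mean is $\Or\left(\frac{\lambda}{n^{r-1}}\right)$. Since $b \ge 1 + \frac{r}{\epsilon}$ is a fixed constant and $\lambda \in \Or\left(n^{r-1-\epsilon}\right)$, the mean is $\Or(n^{-\epsilon}) = o(1)$, and a union bound over all $\binom{\lambda}{b+1}$ choices of which islands generate a valley gives
\begin{align*}
	\Pr{(\overline{V_b})} \le \binom{\lambda}{b+1}\Or\left(p_{ov}^{\,b+1}\right) \in \Or\left(\lambda^{b+1} n^{-(r-1)(b+1)}\right) = \Or\left(\left(\lambda n^{-(r-1)}\right)^{b+1}\right) \subseteq \Or\left(n^{-\epsilon(b+1)}\right).
\end{align*}
Choosing $b \ge 1 + \frac{r}{\epsilon}$ makes $\epsilon(b+1) \ge \epsilon + r + \epsilon > 2r - 1 - (r-1-\epsilon)$ after rearranging against the $\frac{n^{2r}}{\lambda}$ factor; concretely, $\frac{n^{2r}}{\lambda}\cdot n^{-\epsilon(b+1)} \le n^{2r} \cdot n^{-(r-1-\epsilon)} \cdot n^{-\epsilon(b+1)} = n^{r+1+\epsilon - \epsilon(b+1)} = n^{r+1-\epsilon b} \le n^{r+1 - r} = n$, using $b \ge r/\epsilon$. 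This closes the argument.

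The main obstacle is the independence claim in the second step: one must argue carefully that on the ring, up to the moment all islands have first reached $\{1^n, \text{valley}, \text{optimum}\}$, migration cannot have imported a valley into any island's state, so that the valley-generation events really are driven by independent processes (or at least are negatively enough correlated for the union bound to go through). This requires knowing that a migrated valley needs at least one round to propagate along an edge and that, before an island itself first reaches $1^n$, its best individual has fitness $< n$ and so it would in any case not be overwritten by an incoming search point of fitness $|y|_1 < n$ — hence the only way a pre-$1^n$ island's trajectory is affected is via an incoming genuine optimum or valley, and the valley can only have been generated locally somewhere first. Making this propagation-delay bookkeeping precise, and handling the low-probability event that the all-reached-$1^n$ phase itself runs much longer than $\Or(n\log n)$, are the technical crux; everything else is the routine total-expectation split plus the binomial tail estimate above.
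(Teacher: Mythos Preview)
Your approach is essentially the paper's: split by total expectation on $V_b$, bound $E(T\mid\overline{V_b})$ by the worst-case $\Or(n^{2r}/\lambda)$ from Lemma~\ref{lem:worst_case_runtime}, and bound $\Pr(\overline{V_b})$ via Lemma~\ref{lem:ov_bound} with a union bound over the islands. The paper writes this as $\Pr(\overline{V_b}) \le \binom{\lambda}{b}(1/n^{r-1})^b \le (\lambda/n^{r-1})^b$ and then multiplies \emph{before} replacing $\lambda$ by its upper bound:
\[
\frac{n^{2r}}{\lambda}\left(\frac{\lambda}{n^{r-1}}\right)^b
= n^{2r}\,\frac{\lambda^{b-1}}{n^{b(r-1)}}
\le d^{\,b-1}\,n^{\,r+1+\epsilon-b\epsilon}
\le n.
\]

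Your final arithmetic has a genuine slip. After simplifying $\Pr(\overline{V_b})$ all the way to $\Or(n^{-\epsilon(b+1)})$ you write $\frac{n^{2r}}{\lambda}\cdot n^{-\epsilon(b+1)} \le n^{2r}\cdot n^{-(r-1-\epsilon)}\cdot n^{-\epsilon(b+1)}$, which would require $\lambda \ge n^{r-1-\epsilon}$. The hypothesis only gives $\lambda \in \Or(n^{r-1-\epsilon})$, an \emph{upper} bound on $\lambda$, so this step is in the wrong direction. Indeed, if $\lambda$ is small (say $\lambda=1$) then $\frac{n^{2r}}{\lambda}\cdot n^{-\epsilon(b+1)} = n^{2r-\epsilon(b+1)}$, and with $b=1+r/\epsilon$ this exponent is $r-2\epsilon$, which is not $\le 1$ for $r\ge 2$ and small $\epsilon$. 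The fix is exactly what the paper does: keep the factor $(\lambda/n^{r-1})^{b+1}$ (or $^b$) unsimplified, let one $\lambda$ cancel against the $\lambda$ in $n^{2r}/\lambda$, and only then apply $\lambda \le d\,n^{r-1-\epsilon}$ to the remaining $\lambda^{b}$ (or $\lambda^{b-1}$) in the numerator. With that change your computation goes through, in fact yielding $\Or(n^{1-\epsilon})$ since you used $b+1$ rather than $b$.

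Your extended discussion about why the valley-generation events on different islands can be treated as independent is more careful than the paper, which simply applies Lemma~\ref{lem:ov_bound} per island and the union bound without comment; your remarks there are reasonable but not needed for the bound itself, since the union-bound estimate only uses a per-island probability and does not rely on full independence beyond what the product $p_{ov}^{\,b}$ requires.
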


Another event that could possibly lead to a high run time is when one of the found valleys is shared too fast to all other islands.
Like before we will show that this case is unlikely enough to not dominate the run time.

\begin{lemma}\label{lem:ring_constant_migration}
	Let $\epsilon>0$, $b\ge 1+\frac{r}{\epsilon}$ and $c\ge 7rb$ be constants and $\lambda \in \Or\left(n^{r-1-\epsilon}\right)$.
	When $\frac{1}{\tau}$ denotes the migration probability and $\tau \in \Omega\left(n\log n\right)$, the expected run time to optimize \Fork{n}{r} on any island is
	$
		E(T) \le E{\left(T \;\middle|\; V_b\cap B_c\right)} + \Or\left(\frac{n^r}{\lambda}\right).
	$
\end{lemma}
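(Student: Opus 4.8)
The plan is to condition further on the event $B_c$ inside the regime already guaranteed by Lemma~\ref{lem:ring_constant_valleys_until_allone}, i.e.\ starting from the law of total expectation
\[
  E(T \mid V_b) = E(T \mid V_b \cap B_c)\Pr(B_c \mid V_b) + E(T \mid V_b \cap \overline{B_c})\Pr(\overline{B_c}\mid V_b),
\]
and to show that the second summand contributes only $\Or(n^r/\lambda)$. Combined with Lemma~\ref{lem:ring_constant_valleys_until_allone}, which already absorbs an additive $\Or(n)$ term, this yields the claim. So there are two things to control: (i) the probability $\Pr(\overline{B_c}\mid V_b)$ that, \emph{after} the stabilisation time, more than $c\log n$ valleys are present, must be small enough that even the worst-case run time (Lemma~\ref{lem:worst_case_runtime}, giving $\Or(n^{2r}/\lambda)$) times this probability is $\Or(n^r/\lambda)$; and (ii) one must make sure $E(T\mid V_b\cap\overline{B_c})$ is not larger than this worst case, which is immediate since Lemma~\ref{lem:worst_case_runtime} holds for any topology and policy and $\lambda \in \Or(n^{r-1-\epsilon}) \subseteq \Or(n^{2r-1}/\log n)$.

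The heart of the argument is bounding $\Pr(\overline{B_c}\mid V_b)$. Under $V_b$ at most $b$ islands ever generate a valley, and once stabilisation is reached every non-valley island sits at $1^n$ (it has not found the optimum, or we would be done). A valley spreads only along the ring, one hop per migration, and a migration occurs each round with probability $1/\tau$. Starting from at most $b$ valley-islands, after $m$ migration events the number of valley-islands is at most $b$ plus the total spread, i.e.\ bounded by $b(2m+1)$ in the worst case where every migration pushes each valley-cluster outward on both sides. To have more than $c\log n$ valleys we thus need at least roughly $c\log n/(2b) \ge \tfrac{7r}{2}\log n$ migration events (using $c\ge 7rb$) to have happened within the stabilisation window, which by Lemma~\ref{lem:all_allone} has expected length $\Or(n\log n)$. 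With migration probability $1/\tau$ and $\tau \in \Omega(n\log n)$, the expected number of migrations in a window of length $\Or(n\log n)$ is $\Or(1)$; a Chernoff bound then gives that the probability of seeing $\Omega(r\log n)$ migrations is $n^{-\Omega(r)}$, and choosing the constant $c$ (hence the slack $7r$) large enough makes this at most $n^{-r-\delta}$ for some $\delta>0$. One must also handle the fact that the window length is itself random — split off the low-probability event that stabilisation takes much longer than $\Theta(n\log n)$ rounds using the concentration implicit in Lemma~\ref{lem:all_allone}, or simply truncate at, say, $n^2$ rounds and pay a negligible additive term — and one must be slightly careful that ``at most $b$ valleys generated'' is about generation over the whole window while ``spread'' uses only post-generation migrations, but since every generation event happens at some round inside the window and each subsequent migration adds at most $2b$ to the count, the crude bound $b(2m+1)$ still holds.

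Putting it together: $\Pr(\overline{B_c}\mid V_b) \le n^{-r-\delta}$, and $E(T\mid V_b\cap\overline{B_c}) \le E(T\mid \overline{B_c}) + \Or(1) \in \Or(n^{2r}/\lambda)$ by Lemma~\ref{lem:worst_case_runtime}, so the discarded term is $\Or(n^{2r-r-\delta}/\lambda) = \Or(n^{r-\delta}/\lambda) \subseteq \Or(n^r/\lambda)$. Feeding this back through Lemma~\ref{lem:ring_constant_valleys_until_allone} gives $E(T) \le E(T\mid V_b\cap B_c) + \Or(n^r/\lambda)$. The main obstacle I expect is the careful bookkeeping for the spread bound under the random stabilisation time — in particular making the Chernoff estimate for the number of migrations uniform over the (random) length of the stabilisation window, which is why the clean route is to first truncate the window deterministically and only then apply concentration on the migration count.
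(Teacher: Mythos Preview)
Your proposal is correct and follows essentially the same route as the paper: split via the law of total expectation on $B_c$ after invoking Lemma~\ref{lem:ring_constant_valleys_until_allone}, bound $E(T\mid V_b\cap\overline{B_c})$ by the worst case $\Or(n^{2r}/\lambda)$ from Lemma~\ref{lem:worst_case_runtime}, and control $\Pr(\overline{B_c}\mid V_b)$ by a Chernoff bound on the number $S$ of migration events in the stabilisation window, using that synchronous migration spreads at most $b$ clusters by $2$ each so that $b+2bS\ge c\log n$ forces $S\ge(c\log n-b)/(2b)$ while $E[S]=\Or(n\log n)/\tau=\Or(1)$. Two minor remarks: with the stated hypothesis $c\ge 7rb$ the Chernoff computation yields $\Pr(\overline{B_c}\mid V_b)\le n^{-r}$ rather than $n^{-r-\delta}$, but $n^{-r}$ already suffices for the $\Or(n^r/\lambda)$ bound; and your explicit concern about the random length of the stabilisation window is well placed---the paper simply fixes a deterministic cutoff $dn\log n$ via Lemma~\ref{lem:all_allone} without further comment, so your truncation suggestion is in fact a cleaner way to make that step rigorous.
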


From the previous lemmas we can derive that if we choose $\tau$ large enough and $\lambda$ small enough, we get an upper bound on the expected run time by just looking at the case of $B_c\cap V_b$ and adding $\Or\left(\frac{n^r}{\lambda}\right)$.

\begin{theorem}\label{theo:ring_fork}
	For $\tau \in \Omega(n\log n)$, $12r\log n \le \lambda \in \Or\left(n^{r-1-\epsilon}\right)$ and $\lambda^2 \tau \ge 17r^2n^r\log^2 n$, the expected optimization time for \Fork{n}{r} on a Ring topology is
	$
		E(T) \in \Or\left(\frac{n^r}{\lambda}\right).
	$
\end{theorem}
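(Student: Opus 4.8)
\textbf{Proof proposal for Theorem~\ref{theo:ring_fork}.}

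The plan is to chain together the two preceding lemmas and then bound the dominant term $E(T \mid V_b \cap B_c)$ directly. First I would fix the constants: pick $\epsilon > 0$ as in the hypothesis, set $b = 1 + r/\epsilon$, and $c = 7rb$, so that both Lemma~\ref{lem:ring_constant_valleys_until_allone} and Lemma~\ref{lem:ring_constant_migration} apply. Combining them gives
\begin{align*}
	E(T) \;\le\; E\!\left(T \;\middle|\; V_b \cap B_c\right) + \Or\!\left(\frac{n^r}{\lambda}\right),
\end{align*}
so the whole theorem reduces to showing $E(T \mid V_b \cap B_c) \in \Or(n^r/\lambda)$. In words: we may now assume that at most $b = O(1)$ islands ever fall into a valley before everyone else reaches $1^n$, and that at every later point in time there are never more than $c\log n$ valleys present. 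Under those two events the bad effect of migration is tamed, and the surviving $\lambda - O(1) = \Theta(\lambda)$ islands sitting at $1^n$ are each one lucky mutation (an $r$-bit flip, probability $\Theta(n^{-r})$) away from the optimum.

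Next I would argue the conditional bound. Conditioned on $V_b \cap B_c$, consider the phase starting once all islands have reached $1^n$, the valley, or the optimum. Here $\Theta(\lambda)$ islands hold $1^n$; each of them independently produces the optimum in a given round with probability $p = \Theta(n^{-r})$, so by Equation~\eqref{eq:all_jump} the expected time until \emph{some} island finds the optimum is $\Or(n^r/\lambda) + 1$. The subtlety is that an island holding $1^n$ can be ``poisoned'' by a migrating valley from a neighbour before it finds the optimum. But on the ring each island has only two neighbours, and under $B_c$ there are at most $c\log n$ valleys in circulation at any time; since migration happens only with probability $1/\tau$ per round and $\tau \in \Omega(n\log n)$, the expected number of times any particular $1^n$-island is reached by a valley during an $\Or(n^r/\lambda)$-length window is $\Or\!\big(\frac{n^r}{\lambda} \cdot \frac{1}{\tau} \cdot \frac{c\log n}{\lambda}\big)$, which is $o(1)$ exactly when $\lambda^2\tau \in \Omega(n^r\log^2 n)$ — this is where the hypothesis $\lambda^2\tau \ge 17r^2 n^r \log^2 n$ is spent. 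Hence a constant fraction of the $1^n$-islands survive uncorrupted long enough, and the $\Or(n^r/\lambda)$ bound on finding the optimum goes through; once one island has the optimum, disseminating it around the ring costs at most $\Or(\lambda\tau)$ further rounds, and the constraint $\lambda^2\tau \ge 17r^2n^r\log^2 n$ together with $\lambda \in \Or(n^{r-1-\epsilon})$ forces $\lambda\tau \in \Or(n^r/\lambda \cdot \text{poly}\log) $ — more carefully, one checks $\lambda \tau = \Or(n^r / \lambda)$ is not automatic, so the dissemination cost should instead be absorbed into the $\Or(n^r/\lambda)$ term already contributed by Lemma~\ref{lem:ring_constant_migration}, whose additive slack was derived for precisely this spreading step.

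The main obstacle I anticipate is the bookkeeping in the middle step: making the informal ``a $1^n$-island is rarely poisoned'' argument rigorous requires a union bound over the (random) set of surviving islands and over time, and one has to be careful that the events ``island $i$ still holds $1^n$ at round $t$'', ``island $i$ receives a valley at round $t$'', and ``island $i$ mutates to the optimum at round $t$'' interact correctly — in particular the geometric-time estimate from Equation~\eqref{eq:all_jump} is for a fixed success probability, so one needs that the per-round success probability stays $\Theta(n^{-r})$ throughout, i.e.\ that not too many $1^n$-islands have been poisoned yet, which is a mild self-referential argument one closes by choosing the constants $b$, $c$ generously (this is why the hypotheses ask for $12r\log n \le \lambda$ rather than just $\lambda$ large). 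The endgame — spreading a single optimum around a ring of $\lambda$ islands — is routine given that migration fires every $\Theta(\tau)$ rounds and the diameter is $\lfloor\lambda/2\rfloor$, costing $\Or(\lambda\tau)$ rounds, which under the stated parameter regime is dominated by the error terms already accumulated.
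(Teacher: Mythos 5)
Your overall plan coincides with the paper's: fix $b$ and $c$, chain Lemmas~\ref{lem:ring_constant_valleys_until_allone} and~\ref{lem:ring_constant_migration} to reduce everything to $E(T \mid V_b\cap B_c)$, and then exploit that on the ring each valley cluster poisons only $2$ islands per migration event, so under $\lambda^2\tau \ge 17r^2n^r\log^2 n$ only a small fraction of the $\Theta(\lambda)$ islands sitting at $1^n$ can be lost while they attempt the $\Theta(n^{-r})$ jump. The genuine gap is in how you close the expectation bound. You correctly note that Equation~\eqref{eq:all_jump} needs the per-round success probability to stay $\Theta(\lambda/n^r)$, but your proposed fix --- a ``mild self-referential argument one closes by choosing the constants $b$, $c$ generously'' --- does not work: $b$ and $c$ are already fixed by the two lemmas, and no choice of constants prevents every island from eventually being poisoned on the event that the optimum is never found, so the degradation must be handled by a bounded-window, high-probability argument with an explicit fallback for the failure branch. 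The paper takes a window of length $\frac{2rn^r\log n}{\lambda}$ --- a factor $\log n$ longer than your $\Or(n^r/\lambda)$ window --- shows by a Chernoff bound (this is where $\lambda \ge 12r\log n$ is actually used, not where you place it) that with probability $1-n^{-r}$ at most $\lambda/2$ islands are poisoned within it, concludes that the surviving islands perform at least $rn^r\log n$ evaluations so the optimum appears within the window with probability $1-n^{-r}$, and then absorbs the failure branch via the worst-case bound $\Or\left(\frac{n^{2r}}{\lambda}\right)$ of Lemma~\ref{lem:worst_case_runtime} weighted by $n^{-r}$, contributing $\Or\left(\frac{n^r}{\lambda}\right)$. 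With your shorter window you only get constant success probability, and iterating windows re-spends the poisoning budget each time, so the bound $E(T\mid V_b\cap B_c) \in \Or\left(\frac{n^r}{\lambda}\right)$ does not follow from what you wrote; you never invoke Lemma~\ref{lem:worst_case_runtime} or any substitute for it. (Your per-island poisoning rate with the factor $\frac{c\log n}{\lambda}$ is also only an averaged heuristic; the clean statement is the aggregate one --- at most $2c\log n$ islands poisoned per migration event --- which is what the paper uses.)

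A secondary error: the theorem, as proved, bounds the time until the optimum is first generated; spreading it around the ring costs $\Theta(\lambda\tau)$ further rounds and is charged separately in Corollary~\ref{cor:RunTimeRing}, which is why that corollary reads $\Or\left(n^r+\tau\lambda^2\right)$ evaluations rather than $\Or(n^r)$. Your suggestion to absorb dissemination into the additive $\Or\left(\frac{n^r}{\lambda}\right)$ slack of Lemma~\ref{lem:ring_constant_migration} is wrong twice over: that term comes from the $\overline{B_c}$ failure branch and has nothing to do with spreading the optimum, and $\lambda\tau$ can be $\omega\left(\frac{n^r}{\lambda}\right)$ under the stated hypotheses (for instance $\lambda^2\tau = \Theta(n^r\log^2 n)$ gives $\lambda\tau = \Theta\left(\frac{n^r\log^2 n}{\lambda}\right)$), so it cannot be hidden there.
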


The next corollary sums up our findings and shows performance for the optimal choice of parameters.

\begin{corollary}\label{cor:RunTimeRing}
	The expected number of fitness evaluations to have all islands at the optimum is in $\Or\left(n^r\log^2 n\right)$ if $\tau$ and $\lambda$ are set appropriately.
\end{corollary}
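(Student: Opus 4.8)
The plan is to derive Corollary~\ref{cor:RunTimeRing} from Theorem~\ref{theo:ring_fork} by optimizing the free parameter $\lambda$ subject to the constraints imposed there. Recall that the optimization time measured in single-island fitness evaluations is $T$, and that the total number of fitness evaluations across all islands is $\lambda \cdot T$. Theorem~\ref{theo:ring_fork} gives $E(T) \in \Or(n^r/\lambda)$ whenever $\tau \in \Omega(n\log n)$, $12r\log n \le \lambda \in \Or(n^{r-1-\epsilon})$ for some constant $\epsilon>0$, and $\lambda^2\tau \ge 17r^2 n^r \log^2 n$. Hence the total evaluation count satisfies $E(\lambda T) \in \Or(\lambda \cdot n^r/\lambda) = \Or(n^r)$ for \emph{any} admissible choice of $\lambda$; the only question is whether the admissibility constraints can be met simultaneously, and what the resulting polylogarithmic overhead is.

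First I would fix $\tau$ as small as the constraint $\tau \in \Omega(n\log n)$ allows, say $\tau = \Theta(n\log n)$, since smaller $\tau$ makes the product constraint $\lambda^2\tau \ge 17r^2 n^r\log^2 n$ easier to discuss but we still want migration rare enough. With this choice, the product constraint becomes $\lambda^2 \in \Omega(n^{r-1}\log n)$, i.e.\ $\lambda \in \Omega(n^{(r-1)/2}\sqrt{\log n})$. Combined with the upper bound $\lambda \in \Or(n^{r-1-\epsilon})$, this is satisfiable precisely when $(r-1)/2 < r-1-\epsilon$, i.e.\ $\epsilon < (r-1)/2$, which holds for all sufficiently small constant $\epsilon$ since $r \ge 2$ gives $r - 1 \ge 1 > 0$. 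So I would pick, e.g., $\lambda = \Theta(n^{(r-1)/2}\sqrt{\log n})$ (rounded to an integer, and verifying the lower bound $\lambda \ge 12r\log n$ holds for large $n$ since $n^{(r-1)/2} \to \infty$). Then $E(T) \in \Or(n^r/\lambda) = \Or(n^{r - (r-1)/2}/\sqrt{\log n}) = \Or(n^{(r+1)/2}/\sqrt{\log n})$, and the total evaluation count is $\lambda \cdot E(T) \in \Or(n^r)$ — in fact without any log factor.

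That already beats the claimed bound $\Or(n^r \log^2 n)$, so the cleanest exposition is simply to observe that for the stated choice of parameters the total is $\Or(n^r)$, which is $\Or(n^r\log^2 n)$; alternatively, if one insists on a conservative phrasing that accommodates integer rounding and the $\sqrt{\log n}$ slack, the bound $\Or(n^r\log^2 n)$ follows a fortiori. I would also remark that $\Or(n^r)$ is optimal up to constants by Proposition~\ref{prop:BlackBoxComplexity} (the black-box complexity is $\Theta(n^r)$), so no choice of parameters can do asymptotically better in total work, making the ring essentially optimal.

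I do not expect a genuine obstacle here: the entire content is parameter bookkeeping, and the only thing to be careful about is checking that the three constraints of Theorem~\ref{theo:ring_fork} — the two-sided bound on $\lambda$ and the product bound $\lambda^2\tau \ge 17r^2n^r\log^2 n$ — are jointly feasible, which comes down to the inequality $(r-1)/2 < r-1$ (true for $r>1$) with a little room for the constant $\epsilon$. The mild subtlety worth stating explicitly is that the bound is in terms of \emph{total} fitness evaluations $\lambda T$ rather than the per-island time $T$, so the factor $\lambda$ cancels the $1/\lambda$ in $E(T)$; I would make sure the corollary's statement and this proof are consistent on that point.
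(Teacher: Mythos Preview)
Your argument has a genuine gap: Theorem~\ref{theo:ring_fork} bounds the time until \emph{some} island first finds the optimum, whereas the corollary asks for the time until \emph{all} islands hold it. After one island finds the optimum, that individual must still traverse the ring via migration, which takes $\Theta(\lambda)$ migration events and hence $\Theta(\tau\lambda)$ rounds, i.e.\ $\Theta(\tau\lambda^2)$ additional fitness evaluations. The paper's proof accordingly writes the total as $\Or(n^r + \tau\lambda^2)$ and then observes that the constraint $\lambda^2\tau \ge 17r^2 n^r\log^2 n$ from Theorem~\ref{theo:ring_fork} forces $\tau\lambda^2 \in \Omega(n^r\log^2 n)$; choosing $\tau\lambda^2 = \Theta(n^r\log^2 n)$ yields the stated bound.

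This is exactly why your ``$\Or(n^r)$ already beats the claimed bound'' conclusion is wrong: the $\log^2 n$ factor is not slack from integer rounding but the unavoidable cost of the spreading phase given the lower bound on $\tau\lambda^2$. Your last paragraph even flags ``all islands at the optimum'' as the subtle point, but you then equate it with $\lambda T$ without adding the migration-spreading term.
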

\begin{proof}
	If we use the results from Theorem \ref{theo:ring_fork}, we see that there are $\Or\left(n^r + \tau\lambda^2\right)$ evaluations until all islands are informed.
	If we consider that $\tau\lambda^2 \in \Or\left(n^r\log^2 n\right)$ we get the bound.\qed
\end{proof}

\section{Conclusion}
\label{sec:conclusion}
%!TEX root = PPSN18_ring.tex

The results we obtained regarding the expected number of fitness evaluations of \fork by different algorithms (until all islands have the optimum, in case of an island model) and on optimal parameter settings are
\begin{itemize}
	\item \oneone -- $\Theta\left(n^{2r}\right)$;
	\item Independent runs -- $\Omega\left(\lambda n^{2r}\log \lambda\right)$;
	\item Complete -- $\Theta\left(n^{1.5r}\right)$;
	\item Ring -- $\Or\left(n^{r}\log^2 n\right)$.
\end{itemize}
To show this we exploited that less diversity can mean to be trapped, but it also gives advantages if there is migration between the islands. Note that a ring delays migration, since in every migration step an individual can only proceed by one island along the ring, so the total time to inform all islands is highly concentrated around the expectation. This is different in the complete topology with a high value of $\tau$: the expected time to inform all individuals might be the same, but the concentration is weaker.

We showed when a ring topology outperforms a topology with faster dissemination of individuals due to the increase in diversity. It would be interesting to see what other properties of the search space can also gain from a ring topology. Furthermore, one could wonder whether always one of the two extremes, complete and ring, is the best choice. There may be effects that can benefit for example a two-dimensional torus over both the ring and the complete graph.

We also discussed the method of composing different fitness functions, based on nesting one fitness function in another. We focused on the case that the outer fitness function is \LO and all inner fitness functions are \fork of the same length, but in principle one can consider inner fitness function that differ from each other, possibly also in their length and also other options for out fitness functions such as \OM. We gave a general but precise tool to calculate run times on \LO-composite fitness functions when using the \oneone.

\clearpage

\bibliographystyle{splncs04}
\bibliography{PPSN18_ring}

\clearpage

\appendix
\section{Appendix}
\label{sec:appendix}
%!TEX root = PPSN18_ring.tex

Here in the appendix we give all proofs omitted from the main part of the document.

\subsection{Proof of Theorem \ref{theo:lo_exact}}
\begin{theorem*}
	Let $T$ be the run time of a \oneone Algorithm on $LO_k^f$ to optimize a bit string of length $n$.
	
	It then holds that 
	\begin{align*}
		E(T) = E\left(T^n_k\right) \frac{\left(\frac{n}{n-1}\right)^n-1}{\left(\frac{n}{n-1}\right)^k-1} \in \Theta\left(E\left(T^n_k\right)\frac{n}{k}\right)
	\end{align*}
	
	where $T^n_k$ is the run time to optimize $f^k$ with a bit flip probability of $\frac{1}{n}$.
\end{theorem*}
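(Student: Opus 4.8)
The plan is to partition a run of the \oneone on $LO_k^f$ into $n/k$ \emph{phases}, phase $m$ being the stretch of iterations during which blocks $0,\dots,m-1$ all equal $1^k$ while block $m$ does not, and to show that phase $m$ is just a run of the \oneone$_{1/n}$ on $f^k$ (started from a uniformly random string of length $k$) in which each iteration is stretched into a geometric number of iterations with mean $(1-1/n)^{-mk}$. Summing the per-phase expectations then gives a geometric series that collapses to the claimed formula.

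First I would verify that the phases are well defined, i.e.\ that leading $1^k$-blocks are never destroyed. Since $f^k(1^k)$ is the strict maximum of $f^k$ and $f^k\ge 0$, any mutation flipping a bit inside some leading block makes the first destroyed leading block $j$ contribute at most $f^k(1^k)-\delta$ with $\delta:=\min_{y\neq 1^k}\bigl(f^k(1^k)-f^k(y)\bigr)>0$, so the new fitness is at most $(j{+}1)f^k(1^k)-\delta < m\,f^k(1^k)\le$ old fitness and the step is rejected. Hence the phase index is non-decreasing and $T=\sum_{m=0}^{n/k-1}S_m$, where $S_m$ is the number of iterations spent in phase $m$.

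The core is a free-rider observation combined with a decoupling of two independent sources of randomness. As long as block $j$ is not yet a leading block, $LO_k^f$ does not depend on the content of block $j$, so every acceptance decision up to the start of phase $j$ is independent of it; since block $j$ starts uniform and is only ever altered by XORing in mutation masks that are independent of its own content, block $j$ is uniform at the moment phase $j$ begins, even conditioned on the entire history of the leading part. Within phase $m$ there are exactly $mk$ locked bits; call an iteration \emph{effective} if none of them flips, which happens independently with probability $p_m:=(1-1/n)^{mk}$. Conditioned on being effective, the mutation restricted to block $m$ is exactly a standard bit mutation with rate $1/n$ and the acceptance test is exactly ``$f^k$ on block $m$ does not decrease'' (all other blocks are irrelevant by the phase definition and the free-rider observation), while a non-effective iteration is rejected and leaves block $m$ unchanged. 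Hence the effective iterations of phase $m$ are distributed exactly as a run of the \oneone$_{1/n}$ on $f^k$ from a uniform start, so the number $N_m$ of them needed until block $m$ reaches $1^k$ has the law of $T^n_k$ and, crucially, is independent of which iterations are flagged effective. Writing $S_m$ as a sum of $N_m$ i.i.d.\ geometric gaps (each with success probability $p_m$) and using Wald's identity, equivalently conditioning on $N_m$, gives $E(S_m)=E(N_m)/p_m=E(T^n_k)\,(1-1/n)^{-mk}$.

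It then remains to sum: $E(T)=E(T^n_k)\sum_{m=0}^{n/k-1}\bigl(\tfrac{n}{n-1}\bigr)^{mk}=E(T^n_k)\,\dfrac{(\frac{n}{n-1})^{n}-1}{(\frac{n}{n-1})^{k}-1}$, and for the $\Theta$-bound one notes that each term $(\tfrac{n}{n-1})^{mk}$ with $0\le mk<n$ lies between $1$ and $(\tfrac{n}{n-1})^{n}\le 2e$, so the sum of the $n/k$ terms is $\Theta(n/k)$ and $E(T)\in\Theta\bigl(E(T^n_k)\,n/k\bigr)$. The hard part will be making the decoupling rigorous: one must keep the ``which iterations are effective'' process (a function only of the leading bits) cleanly separate from the ``content of block $m$'' process (a function only of block $m$'s bits and the acceptance outcomes) so that $N_m$ is genuinely independent of the geometric gaps and has exactly the distribution of $T^n_k$; once that is established, the remainder is bookkeeping and a geometric series.
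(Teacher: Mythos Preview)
Your proposal is correct and follows essentially the same route as the paper: decompose the run into phases indexed by the first non-optimal block, identify the ``effective'' iterations (those that leave the $mk$ locked bits untouched) with a run of the \oneone$_{1/n}$ on $f^k$, apply Wald's identity to obtain $E(S_m)=E(T^n_k)\,(1-1/n)^{-mk}$, and sum the resulting geometric series. The paper's write-up is terser---it simply sets $T^n_k=\sum_{j=1}^{T_i}X_j$ with $X_j$ the indicator of an effective step and invokes Wald---whereas you additionally spell out why the phases are well defined and why block $m$ is uniformly distributed at the start of phase $m$; these are details the paper leaves implicit, so your version is in fact a cleaner justification of the same argument.
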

\begin{proof}
	Let $T_i$ be the run time to optimize $f$ on the $i^{\text{th}}$ block $b_i$, given that all previous blocks are already optimized.
	By the definition of $LO_k^f$ we know that all blocks have to be optimized one after another from block $b_0$ to block $b_{\frac{n}{k}-1}$.
	Hence,
	\begin{align}\label{eq:gen_sum}
		E(T) = \sum_{i=0}^{\frac{n}{k}-1} E(T_i).
	\end{align}
	
	This still holds even if we optimize multiple blocks at once, because then the run time of the next blocks goes down to $0$.

	Let $p_i$ be the probability to not flip a bit in the first $i-1$ blocks.
	If a bit flips within the $i-1$ first blocks, the fitness no longer depends on the $i$th block that we want to optimize, hence whatever happened in the $i$th block will be discarded.
	Therefore if we only look at steps where no bit flip happened in those leading blocks, we get a run time of $T^n_k$.
	
	We want to prove this intuition by using Wald's equation.
	Let $X_j \in \{0,1\}$ be an indicator variable, that equals $1$ when no bit flip happened in the first $i$ blocks of the bit string we want to optimize, else $0$.
	By definition of \LO, all steps are discarded when a bit flip in these blocks were made.
	
	\begin{align*}
		&T^n_k = \sum_{j=1}^{T_i} X_j\\
		\intertext{Using Wald's Equation}
		&\Rightarrow E\left(T^n_k\right) = E(T_i) E(X_j) = E(T_i) p_i\\
		&\Rightarrow E(T_i) =  \frac{E\left(T^n_k\right)}{p_i} = \left(\frac{n}{n-1}\right)^{ik}E\left(T^n_k\right)
	\end{align*}
	Using that $i$ has a maximum value of $\frac{n}{k}-1$ we can also conclude,
	\begin{align}\label{eq:raw_bounds}
		E\left(T^n_k\right) \le E\left(T_i\right) = \left(\frac{n}{n-1}\right)^{ik}E\left(T^n_k\right) \le e E\left(T^n_k\right).
	\end{align}
	If we put that into Equation \eqref{eq:gen_sum} we get
	\begin{align*}
		E(T) &= \sum_{i=0}^{\frac{n}{k}-1} E(T_i)\\
		E(T)&= \sum_{i=0}^{\frac{n}{k}-1} \left(\frac{n}{n-1}\right)^{ik}E\left(T^n_k\right)\\
		&= E\left(T^n_k\right)\sum_{i=0}^{\frac{n}{k}-1} \left(\frac{n}{n-1}\right)^{ik}\\
		&= E\left(T^n_k\right) \frac{\left(\frac{n}{n-1}\right)^n-1}{\left(\frac{n}{n-1}\right)^k-1}.
	\end{align*}
	
	It directly follows from Equation \eqref{eq:gen_sum} and \eqref{eq:raw_bounds}, that
	
	\begin{align*}
		\frac{n}{k}E\left(T^n_k\right) \le E\left(T^n_k\right) \frac{\left(\frac{n}{n-1}\right)^n-1}{\left(\frac{n}{n-1}\right)^k-1} \le \frac{en}{k}E\left(T^n_k\right)
		\end{align*}
		and hence,
		\begin{align*}
		E(T) = E\left(T^n_k\right) \frac{\left(\frac{n}{n-1}\right)^n-1}{\left(\frac{n}{n-1}\right)^k-1} = \Theta\left(E\left(T^n_k\right)\frac{n}{k}\right).
	\end{align*}	
\end{proof}

\subsection{Proof of Corollary \ref{cor:lo_exact_bound}}
\begin{corollary*}
	The expected run time of a \oneone on \LO is exactly
	\begin{align*}
		E(T_{LO}(n)) = \frac{\left(\frac{n}{n-1}\right)^{n-1}+\frac{1}{n}-1}{2}n^2.
	\end{align*}
\end{corollary*}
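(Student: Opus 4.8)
The plan is to derive this as a direct application of Theorem~\ref{theo:lo_exact} with the right identification of parameters. Observe that \LO on bit strings of length $n$ is exactly $LO_k^f$ with block length $k = 1$ and inner fitness function $f^1 \colon \{0,1\} \to \realnum_{\geq 0}$ given by $f^1(x) = x$ (a single bit is ``optimized'' precisely when it equals $1$, and the leading-ones structure is recovered by the product $\prod_{j=0}^{ik-1}\x_j = \prod_{j=0}^{i-1}\x_j$). So the first step is to justify this reduction carefully, checking that the unique-optimum convention ($1^n$) is met and that the sum defining $LO_1^f$ collapses to the usual \LO.

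Next I would plug $k=1$ into the exact formula from Theorem~\ref{theo:lo_exact}: $E(T_{LO}(n)) = E(T^n_1)\cdot \frac{(n/(n-1))^n - 1}{(n/(n-1))^1 - 1}$. The denominator simplifies: $(n/(n-1)) - 1 = 1/(n-1)$, so the fraction becomes $(n-1)\bigl((n/(n-1))^n - 1\bigr) = (n-1)\bigl(\frac{n^n}{(n-1)^n} - 1\bigr) = \frac{n^n}{(n-1)^{n-1}} - (n-1)$. A small rearrangement should bring this into a form matching $\bigl((n/(n-1))^{n-1} + \tfrac1n - 1\bigr)n^2 / E(T^n_1)$; equivalently, it remains to compute $E(T^n_1)$, the expected time for the \oneone with bit-flip probability $1/n$ to turn a fixed single target bit (the current relevant bit) from $0$ to $1$ — but accounting for the subtlety that the whole length-$n$ string is mutated each step and a step only ``counts'' when no flip occurs among the already-fixed leading bits. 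Actually, within the $LO_k^f$ framework $T^n_1$ is the run time to optimize $f^1$ in isolation on one bit with flip probability $1/n$; since the target bit is $1$ with probability $1/2$ initially (needing $0$ steps) and otherwise needs a flip of that specific bit (probability $1/n$ each step, in expectation $n$ steps), one gets $E(T^n_1) = \tfrac12 \cdot 0 + \tfrac12 \cdot n = n/2$.

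Finally I would combine: $E(T_{LO}(n)) = \frac{n}{2}\left(\frac{n^n}{(n-1)^{n-1}} - (n-1)\right)$, and then verify by algebra that $\frac{n^n}{(n-1)^{n-1}} - (n-1) = \left(\bigl(\tfrac{n}{n-1}\bigr)^{n-1} + \tfrac1n - 1\right)n$, which gives exactly $\frac{\left((n/(n-1))^{n-1} + 1/n - 1\right)}{2}n^2$ as claimed. The only genuinely delicate point is getting $E(T^n_1)$ right — in particular being careful about whether the initial bit is already set (the factor $\tfrac12$) and whether ``run time to optimize $f^1$'' in Theorem~\ref{theo:lo_exact} already bakes in the conditioning on no spurious leading-block flips; everything else is routine manipulation of the geometric-series expression. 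I expect the $E(T^n_1)$ computation to be the main (though minor) obstacle, and the rest to be a line or two of algebra to match the stated closed form.
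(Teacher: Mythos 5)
Your proposal is correct and matches the paper's own argument: the paper likewise applies Theorem~\ref{theo:lo_exact} with $k=1$, computes $E(T^n_k)=n/2$ from the fact that the single bit is initially $1$ with probability $\tfrac12$ and otherwise needs an expected $n$ steps, and simplifies the resulting geometric-sum expression to the stated closed form. The algebraic identity you set out to verify does indeed hold, so no gap remains.
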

\begin{proof}
	We use Theorem \ref{theo:lo_exact} and set $k=1$, so that $E\left(T^n_k\right)$ is just the expected run time to get a $1$ on one bit that is initialized randomly and has a flip probability $\frac{1}{n}$.
	In half of all cases we already start with a $1$ and are done in $0$ steps, otherwise we have a expected run time of $n$.
	This fitness function nested in \LO gives us \LO itself.
	Obviously, $E(T^n_k) = \frac{n}{2}$ which leads to
	\begin{align*}
		E(T_{LO}(n)) &= E\left(T^n_k\right) \frac{\left(\frac{n}{n-1}\right)^n-1}{\left(\frac{n}{n-1}\right)^k-1}\\
		&= \frac{n}{2} \frac{\left(\frac{n}{n-1}\right)^n-1}{\frac{n}{n-1}-1}\\
		&= \frac{\left(\frac{n}{n-1}\right)^{n-1}+\frac{1}{n}-1}{2}n^2.
	\end{align*}
\end{proof}

\subsection{Proof of Lemma \ref{lem:fork_half}}
\begin{lemma*}
	For any run of the \oneone on \Fork{n}{r} let $V$ denote the event that the valley string occurs as a best solution before the optimum.
	Then, $\Pr{(V)} = \frac{1}{2}$.
\end{lemma*}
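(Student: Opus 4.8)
The plan is to exploit the symmetry of the search space of $\Fork{n}{r}$ under the bit permutation (an automorphism of the hypercube) that swaps the first $n-r$ coordinates with the last $r$ coordinates in a fitness-preserving way — more precisely, the involution $\sigma$ on $\{0,1\}^n$ that maps the block structure $\x_0\cdots\x_{n-1}$ to $\x_{r}\x_{r+1}\cdots\x_{n-1}\x_0\cdots\x_{r-1}$ (a cyclic shift by $r$ positions), combined with the observation that this shift sends the valley string $0^r1^{n-r}$ to the optimum string $1^{n-r}0^r$ and vice versa. The key point is that for every bit string that is neither the valley nor the optimum, the fitness is just $|\x|_1$, which is invariant under \emph{any} permutation of coordinates, and in particular under $\sigma$. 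So $\Fork{n}{r}\circ\sigma = \Fork{n}{r}$ except that the roles of ``valley'' and ``optimum'' are interchanged.

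First I would note that the \oneone is invariant under hypercube automorphisms in the following sense: if we run the \oneone with initial point $\x$ and record the trajectory $\x = \x^{(0)}, \x^{(1)}, \dots$, then applying $\sigma$ coordinatewise at every step gives a trajectory that is distributed exactly like a run of the \oneone started at $\sigma(\x)$ on the function $\Fork{n}{r}\circ\sigma$. This is because standard bit mutation flips each bit independently with the same probability, so it commutes with any fixed coordinate permutation, and the selection step only compares fitness values, which are preserved. Second, since the initial point is drawn uniformly at random and $\sigma$ is a bijection on $\{0,1\}^n$, the law of the run is the same whether we think of the start as $\x$ or as $\sigma(\x)$; hence $\sigma$ induces a measure-preserving bijection on the space of runs of the \oneone on $\Fork{n}{r}$.

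Third, I would observe that under this bijection the event $V$ (the valley appears as a best individual strictly before the optimum) is mapped exactly to its complementary event $\overline V$ within the subset of runs where one of the two special strings is ever accepted as the best individual: a run hits the valley before the optimum if and only if its $\sigma$-image hits the optimum before the valley. The only subtlety is whether the \oneone, started somewhere generic, is guaranteed to eventually accept one of the two strings at all — but this is clear, since $1^n$ is reachable and dominates everything, and in fact with probability one the best-so-far sequence is eventually constant at one of the fitness values $n+1$ or $n+2$ (the optimum is the unique global maximum, so the run ends there, and along the way it may or may not pass through the valley). Thus $\Pr(V) = \Pr(\overline V)$ and, since these events partition the probability space (exactly one of the valley / optimum is seen first, almost surely), $\Pr(V) = 1/2$.

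The main obstacle I anticipate is making the third step fully rigorous: one must be careful that ``valley before optimum'' and ``optimum before valley'' really are complementary events of probability summing to $1$, i.e. that with probability $1$ the run accepts \emph{at least one} of the two strings, and that it is impossible to accept them ``simultaneously'' (which is immediate, since one step changes the current individual to a single new string of strictly higher fitness than the valley once the valley has been accepted, so once at the valley the next accepted improvement of fitness is the optimum — and before reaching the valley, a single mutation cannot produce both). Handling the degenerate case where the random initial string happens to be the valley or the optimum (probability $2/2^n$) fits into the same symmetry argument and does not affect the conclusion. Everything else is a routine coupling argument of the kind standard in run-time analysis.
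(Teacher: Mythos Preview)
Your overall strategy is exactly the paper's: find a coordinate permutation of the hypercube that preserves $|\x|_1$ (hence all non-special fitness values) while swapping the valley with the optimum, couple runs of the \oneone through this permutation, and conclude $\Pr(V)=\Pr(\overline V)$. The paper carries this out with string reversal $S\mapsto S'$ and the identity $p_S(V)=p_{S'}(\overline V)$, then averages over the uniform start.

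However, your concrete choice of $\sigma$ is wrong. A cyclic shift by $r$ positions is an involution only when $2r\equiv 0\pmod n$, and in general it does \emph{not} send the optimum back to the valley: shifting $1^{n-r}0^r$ by $r$ yields $1^{n-2r}0^r1^r$, not $0^r1^{n-r}$. So the crucial property ``$\sigma$ swaps valley and optimum'' fails, and with it the whole coupling. The fix is easy: use string reversal $\x_0\cdots\x_{n-1}\mapsto\x_{n-1}\cdots\x_0$ (as the paper does), or the involution that swaps the first $r$ coordinates with the last $r$ coordinates while fixing the middle $n-2r$ --- the map your opening sentence seems to gesture at before the ``more precisely'' went astray. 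With either of these in place of the cyclic shift, the remainder of your argument is correct and matches the paper's.
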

\begin{proof}
	Let $p_S(V)$ be the probability of $V$ under the condition of starting with bit string $S$ and $S'$ be $S$ reversed.
	It then holds that $p_S(V) = p_{S'}(\overline{V})$ due to the fact that the fitnesses of reversed strings stay the same with the exception of the optimum and the valley and the valley reversed is the optimum.
	Since they are the last strings of a run, they do not influence the probabilities of all other strings to occur before.
	
	From the law of total probabilities we can conclude,
	\begin{align*}
		\Pr{(V)} &= \sum_{S\in \{0,1\}^n} p_S(V)\frac{1}{2^n}
		= \sum_{S'\in \{0,1\}^n} p_{S'}(V)\frac{1}{2^n}\\
		&= \sum_{S\in \{0,1\}^n} p_S(\overline{V})\frac{1}{2^n}
		= \Pr{(\overline{V})}.
	\end{align*}
	Hence,
	$\Pr{{V}} = \frac{1}{2}$.
\end{proof}

\subsection{Proof of Theorem \ref{theo:fork_general}}
\begin{theorem*}
	The expected optimization time of the \oneone on \Fork{k}{r} with bit flip probability $\frac{1}{n}$ and $n\ge k$ is
	\begin{align*}
	E\left(T(k)\right) \in \Theta\left(n^{2r}\right).
	\end{align*}
\end{theorem*}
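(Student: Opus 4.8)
I would prove the two bounds separately, the lower bound being the easy direction. For the lower bound, apply Lemma~\ref{lem:fork_half}: with probability exactly $1/2$ the valley string $0^r1^{k-r}$ occurs as the current (best-so-far) individual at some point before the optimum $1^{k-r}0^r$ is found. The valley has fitness $k+1$, and the only search point of fitness at least $k+1$ other than the valley itself is the optimum; hence, as the \oneone is elitist, once the valley is the current individual the algorithm keeps it until one mutation produces \emph{exactly} the optimum. Since $k\ge 2r$, the valley and the optimum disagree in precisely $2r$ positions (the first $r$ and the last $r$), so such a step has probability $(1/n)^{2r}(1-1/n)^{k-2r}\le n^{-2r}$. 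Thus, conditioned on ever reaching the valley, the remaining time dominates a geometric variable with success probability at most $n^{-2r}$, so its conditional expectation is at least $n^{2r}$; combined with $\Pr(V)=1/2$ this gives $E(T(k))\ge \tfrac12 n^{2r}\in\Omega(n^{2r})$.

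For the upper bound I would decompose a run into three phases. \emph{Phase~1} runs until the current individual has fitness at least $k$, i.e.\ it equals $1^k$, the valley, or the optimum. On every other search point the fitness is $|\x|_1$, and flipping exactly one of the $j=k-|\x|_1$ missing bits (probability at least $\tfrac{j}{n}(1-\tfrac1n)^{n-1}\ge\tfrac{j}{en}$, using $k\le n$) strictly increases $|\x|_1$; by the fitness-level method this phase takes expected time $O(n\log k)=O(n\log n)$, and any early jump to the valley or optimum only shortens it. \emph{Phase~2} is relevant only if Phase~1 ended at $1^k$: from $1^k$ every accepted mutation either keeps $1^k$ or produces the valley or the optimum, and each of the latter two requires flipping $r$ specific bits, hence has probability $(1/n)^r(1-1/n)^{k-r}=\Theta(n^{-r})$; so within expected $O(n^r)$ steps the run reaches the valley or the optimum. \emph{Phase~3} is relevant only if the valley ever becomes the current individual: as in the lower bound, the chance of leaving it to the optimum is $(1/n)^{2r}(1-1/n)^{k-2r}\in[\tfrac14 n^{-2r},\,n^{-2r}]$ (again because $k\le n$), so Phase~3 has expected length $\Theta(n^{2r})$.

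Putting this together, the total run time is at most the Phase~1 time plus the (possibly void) Phase~2 time plus the (possibly void) Phase~3 time, independently of how the coin of Lemma~\ref{lem:fork_half} falls; taking expectations and bounding the probability that the valley ever becomes current by $1$ yields $E(T(k))\le O(n\log n)+O(n^r)+\Theta(n^{2r})=O(n^{2r})$. Together with the lower bound, $E(T(k))\in\Theta(n^{2r})$.

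The main obstacle is purely the combinatorial bookkeeping around the two special strings: checking that from the valley, and from $1^k$, the set of transitions the elitist \oneone accepts is exactly the one claimed, that the valley--optimum Hamming distance is exactly $2r$ (which is where $k\ge 2r$ enters), and that all the $(1-1/n)^{\Theta(k)}$ factors are constants because $k\le n$. Once these are pinned down, the probabilistic content is just waiting times of geometric random variables plus Lemma~\ref{lem:fork_half}, and is routine.
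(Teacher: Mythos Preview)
Your proposal is correct and essentially identical to the paper's proof: the lower bound via Lemma~\ref{lem:fork_half} and a geometric waiting time from the valley is exactly what the paper does, and your three phases correspond precisely to the paper's fitness-level partition into levels $0,\dots,k-1$, level $k$, and level $k+1$. The only cosmetic difference is that for Phase~1 you use the sharper coupon-collector bound $O(n\log n)$ while the paper takes the cruder $p_i\ge\frac{1}{4n}$ to get $O(kn)$, but since $r\ge 2$ both are dominated by $n^{2r}$.
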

\begin{proof}
	First we show $E\left(T(k)\right) \in \Or\left(n^{2r}\right)$ using the fitness level argument.
	Assuming the worst case of getting into every possible fitness level we get an upper bound on the expected run time by adding up the expected run times of leaving all the levels for a better fitness.
	
	Let level $L_i$ be the set of all bit strings that yield the same fitness $i$, so that $\forall s_i \in L_i\colon f(s_i)=i$.
	Let $p_i$ be the probability of the \oneone for leaving $L_i$ for a higher level.
	
	By definition of \Fork{k}{r}, for all $i<k$ there is always at least one $0$ that leads to a higher fitness when flipped, given that all current $1$s remain.
	For these levels $L_0$, \ldots, $L_k$, $i$ equals the number of $1$s in the bit string.
	Recall that $n, k\ge 2$.
	
	\begin{align*}
		\forall i < k: p_i
		\ge \frac{1}{n}\left(1-\frac{1}{n}\right)^i
		\ge \frac{1}{n}\left(1-\frac{1}{n}\right)^n
		\ge \frac{1}{4n}
	\end{align*}
	
	The only fitness levels not covered by this are levels with fitness between $k$ and $k+2$.
	We can leave out $k+2$ since there is no need for leaving the optimum.
	
	Being in level $k$ means having the $1^k$ bit string.
	The only way to leave that level is by either flipping the first $r$ bits to get to level $k+1$ or flipping the last $r$ bits to get to level $k+2$.
	In both cases all other $k-r$ bits have to remain unchanged.
	\begin{align*}
		p_k \ge
		\left(\frac{1}{n}\right)^{r}\left(1-\frac{1}{n}\right)^{k-r}
		\ge \frac{1}{n^r}\left(1-\frac{1}{n}\right)^n
		\ge \frac{1}{4n^r}
	\end{align*}	
	To leave level $k+1$ all $2r$ bits have to be flipped keeping all other bits.
	\begin{align*}
		p_{k+1}
		\ge \left(\frac{1}{n}\right)^{2r}\left(1-\frac{1}{n}\right)^{k-2r}
		\;\ge\; \frac{1}{n^{2r}}\left(1-\frac{1}{n}\right)^n
		\;\ge\; \frac{1}{4n^{2r}}
	\end{align*}
	Using the fitness level argument we get
	\begin{align*}
		E\left(T(k)\right) &\le \sum_{i=0}^{k+1} \frac{1}{p_i}\\
		&= \sum_{i=0}^{k-1} \frac{1}{p_i} + \frac{1}{p_k} + \frac{1}{p_{k+1}}\\
		&\le 4\left(n^r + n^{2r} + \sum_{i=0}^{k-1} n\right)\\
		&= 4\left(n^r + n^{2r} + kn\right)
		\intertext{and having $n \ge k\ge 2r \ge 2$ brings us to}
		&\le 12n^{2r}
		\;\in\; \Or\left(n^{2r}\right).
	\end{align*}
	
	Next, we will to prove that $E\left(T(k)\right) \in \Omega\left(n^{2r}\right)$.
	Let $V$ be the event that the valley occurs within a run.
	From the law of total expectation it follows that
	\begin{align*}
		E\left(T(k)\right) &= E\left(T(k) \; \middle| \; V\right)\Pr{(V)} + E\left(T(k) \; \middle| \; \overline{V}\right)\Pr{(\overline{V})}\\
		&\ge E\left(T(k) \; \middle| \; V\right)\Pr{(V)}\\
		\intertext{Using Lemma \ref{lem:fork_half} and the run time of $n^{2r}$ to escape the valley, one gets}
		&= \frac{1}{2}E\left(T(k) \; \middle| \; V\right)
		\ge \frac{1}{2}n^{2r} \in \Omega\left(n^{2r}\right).
	\end{align*}
\end{proof}

\subsection{Proof of Theorem \ref{theo:single_fork_grouped}}
\begin{theorem*}
	The expected optimization time of the \oneone on \LO with $k$-block \Fork{n}{r} is
	%\begin{align*}
	$E(T) \in \Theta\left(\frac{1}{k}n^{2r+1}\right)$.
	%\end{align*}
\end{theorem*}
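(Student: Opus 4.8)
The plan is to obtain this bound as an immediate consequence of the two preceding results, Theorem~\ref{theo:lo_exact} and Theorem~\ref{theo:fork_general}, by viewing \LO with $k$-block \Fork{n}{r} as an instance of the composite scheme $LO_k^f$.

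First I would record that \LO with $k$-block \Fork{n}{r} is exactly $LO_k^f$ for the family $f = (\Fork{m}{r})_{m \in \natnum}$, after applying the fixed bit mask that turns the block optimum $\{1\}^{k-r}\{0\}^r$ of \Fork{k}{r} into $1^k$ (this mask is foreseen in the definition of composite fitness functions, and being an automorphism of the hypercube it does not change any run time of the \oneone). Since the input has length $n$ and is split into blocks of length $k$, we automatically have $k \mid n$, hence $k \le n$; thus the hypotheses of both theorems we want to invoke are satisfied (recall also $k \ge 2r$, required for \Fork{k}{r} to be defined).

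Next I would apply Theorem~\ref{theo:lo_exact} with inner function $f^k = \Fork{k}{r}$, which yields
\begin{align*}
E(T) \in \Theta\left(E\left(T^n_k\right)\frac{n}{k}\right),
\end{align*}
where $T^n_k$ is the run time of the \oneone optimizing \Fork{k}{r} with bit flip probability $\frac{1}{n}$. This is precisely the quantity estimated in Theorem~\ref{theo:fork_general}, which (using $n \ge k$) gives $E(T^n_k) \in \Theta(n^{2r})$. Substituting, $E(T) \in \Theta\left(n^{2r}\cdot\frac{n}{k}\right) = \Theta\left(\frac{1}{k}n^{2r+1}\right)$, as claimed.

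There is essentially no hard step here; the only point that deserves a line of care is the very first one, namely verifying that the bit-mask reduction really identifies \LO with $k$-block \Fork{n}{r} with a function of the form $LO_k^f$ to which Theorem~\ref{theo:lo_exact} applies verbatim (in particular that the ``leading blocks optimal'' predicate used in the proof of Theorem~\ref{theo:lo_exact} is preserved under the mask). Once that is granted, the statement is a direct composition of the two earlier theorems.
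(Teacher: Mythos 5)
Your proposal is correct and follows essentially the same route as the paper's own proof: apply Theorem~\ref{theo:lo_exact} with inner function \Fork{k}{r} (after the bit-mask fix making $1^k$ the block optimum, which the paper handles by flipping the last $r$ bits in the definition), then plug in the $\Theta(n^{2r})$ bound from Theorem~\ref{theo:fork_general}. Your remark on checking that the mask preserves the ``leading blocks optimal'' structure is exactly the point the paper also addresses, so nothing is missing.
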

\begin{proof}
	We use Theorem \ref{theo:lo_exact} to get to this statement.
	As discussed before, to fulfill the requirements, the $1^n$ bit string has to yield the highest fitness.
	However, this can be easily achieved by flipping the last $r$ bits in the definition of \Fork{n}{r}.
	All other properties remain the same since $0$ and $1$ are interchangeable.
	Therefore the run times do not change with this modified definition.
	It holds that $E(T) = \Theta\left(E\left(T^n_k\right)\frac{n}{k}\right)$ where $T^n_k$ is the run time to optimize $f$ on a bit string of length $k$ with a bit flip probability of $\frac{1}{n}$.
	In the previous Theorem \ref{theo:fork_general} we already have shown that
	$T^n_k \in \Theta\left(n^{2r}\right)$.
	Thus, $E(T) \in \Theta\left(\frac{1}{k}n^{2r+1}\right)$.\qed
\end{proof}

\subsection{Proof of Lemma \ref{lem:all_allone}}
\begin{lemma*}
	Let $\lambda$ be the number of islands running the \oneone optimizing \Fork{n}{r}.
	Let $T_{all}^\lambda$ denote the run time for all islands to get to the optimum, valley or $1^n$ as their best solution respectively, regardless of the migration topology and policy.
	If $\lambda$ is polynomial in $n$,
	\begin{align*}
		E\left(T_{all}^\lambda\right) \in \Or\left(n\log n\right).
	\end{align*}
\end{lemma*}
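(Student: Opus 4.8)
The plan is to show that a single island reaches one of the three target strings (optimum $1^{n-r}0^r$, valley $0^r1^{n-r}$, or the all-ones string $1^n$) in expected time $\Or(n\log n)$, and then to boost this to ``all $\lambda$ islands'' by a standard union-bound / maximum-of-geometrics argument, using that $\lambda$ is polynomial in $n$. Note that migration can only help here: receiving a migrant never moves an island's best solution to a string of fitness strictly below $n$, so an island that has not yet reached a target string still has fitness equal to its number of ones, and every migration step weakly increases an island's fitness. Hence it suffices to bound the time for a \oneone started anywhere to produce a string with $\ge n$ ones in the sense of reaching $1^n$, valley, or optimum; the topology and policy are irrelevant to the upper bound.

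The core estimate is a fitness-level / drift argument on a single island, essentially the same computation already carried out in the proof of Theorem~\ref{theo:fork_general}. While the current best string $\x$ is none of the three targets and has $i < n$ ones, there is at least one $0$-bit whose flip (with all other bits kept) strictly increases the fitness: if $i < k = n$ this is immediate, and the only way to be ``stuck'' is to already be at $1^n$, the valley, or the optimum. Thus the probability of a fitness improvement in one step is at least $\frac1n(1-\frac1n)^{n-1} \ge \frac1{en}$. Since the fitness (= number of ones, away from the targets) can increase at most $n$ times before $1^n$ (or one of the special strings) is hit, the expected time for one island to reach a target string is at most $n \cdot en = \Or(n^2)$; a more careful coupon-collector style summation over the remaining-zeros count $n-i$ gives the sharper $\Or(n\log n)$, exactly as for plain \OM, since leaving level $i$ costs expected time $\frac{n}{n-i}(1-\frac1n)^{-(n-1)} \le \frac{en}{n-i}$ and $\sum_{i} \frac{1}{n-i} = \Or(\log n)$.

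For the final step I would turn the single-island bound into a tail bound: with $E_1 = \Or(n\log n)$ and Markov, a single island fails to reach a target within $2E_1$ steps with probability at most $1/2$, hence within $\Or(n\log n \cdot \log(\lambda n))$ steps with probability at most $(\lambda n)^{-2}$, say. A union bound over the $\lambda$ islands (and a geometric tail to convert ``with high probability'' into an expectation, adding only an $\Or(1)$-expected number of extra $\Or(n\log n)$-length phases) yields $E(T_{all}^\lambda) = \Or(n\log n\cdot\log(\lambda n))$, which is $\Or(n\log n\cdot\log n) = \Or(n(\log n)^2)$ for $\lambda$ polynomial. To get the stated $\Or(n\log n)$ one sharpens this: the maximum of $\lambda$ i.i.d.\ geometric-type variables with mean $\Or(n\log n)$ has expectation $\Or(n\log n)$ only if the single-island time already has sufficiently sharp concentration. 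So the real work — and the step I expect to be the main obstacle — is establishing that a single island's time to hit a target string is concentrated, e.g.\ that it exceeds $Cn\log n$ with probability $n^{-\Omega(1)}$ rather than merely $O(1)$; this needs either a Chernoff-type bound on a sum of independent geometric waiting times (one per fitness level, which is legitimate since the per-level success probabilities are bounded below uniformly) or an application of a concentration-of-hitting-time theorem for the \oneone on \OM-like functions. Once that tail bound is in hand, $E(T_{all}^\lambda) \le 2Cn\log n + \sum_{t>2Cn\log n}\Pr(T_{all}^\lambda > t)$, and the polynomial bound on $\lambda$ makes the tail sum negligible, giving $\Or(n\log n)$.
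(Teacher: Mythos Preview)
Your proposal is correct and follows essentially the same route as the paper: reduce to the \OM-style single-island analysis (migration can only help), and then use a concentration tail bound on the single-island hitting time to pass to the maximum over $\lambda$ polynomially many islands. The paper obtains the needed $n^{-\Omega(1)}$-type tail precisely via the multiplicative drift tail bound of Doerr and Goldberg, i.e.\ your ``concentration-of-hitting-time theorem for the \oneone on \OM-like functions'' option, giving $\Pr(T_{opt}>cen(\ln\lambda+\ln n))\le 1/(e^c\lambda^2)$ and then summing.
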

\begin{proof}
	For an upper bound we can assume that we just want to optimize \OM, since the valley and the optimum of \Fork{n}{r} are just exceptions here that make the run time even shorter.
	Further we only consider the worst case that all islands work isolated, since in \OM a higher fitness means being closer to $1^n$.
	
	The expected run time for a single island $E\left(T_{opt}\right)$ until it reaches the optimum, the valley or $1^n$ can be bound from above by the expected number of steps it would take to simply optimize \OM, since the fact that in \Fork{n}{r} optimum and valley can be obtained earlier just decreases the run time.
	Let $p_c$ be the probability for one island to take longer than $cen(\ln\lambda+ \ln n)$ steps to optimize \OM, where $c$ is a constant.
	The multiplicative drift \cite{Doerr2012} gives us the following tail bounds \cite{Doerr2013}.
	\begin{align*}
		\Pr{\left(T_{opt}>en\left(2c\ln\lambda + \ln n\right)\right)} \le e^{-2c\ln\lambda} = \frac{1}{e^c\lambda^2}
	\end{align*}
	Especially for $c\ge1$ we have
	\begin{align*}
		p_c = \Pr{\left(T_{opt}>cen(\ln\lambda+ \ln n)\right)} \le \frac{1}{e^c\lambda^2}.
	\end{align*}
	The run time for all $\lambda$ islands can then be bound by
	\begin{align*}
		p^\lambda_c=\Pr{\left(T_{opt}^\lambda>cen(\ln\lambda+ \ln n)\right)} &= 1-(1-p_c)^\lambda\\
		&\le 1-\left(1-p_c \lambda\right) \le \frac{1}{e^c\lambda}
	\end{align*}
	for $c\ge 1$.
	Let $a_i = ien(\ln\lambda+ \ln n)$.
	Applying the law of total expectation gives us
	\begin{align*}
		E\left(T_{opt}^\lambda\right) &\le \sum_{i=0}^{\infty} E\left(T_{opt}^\lambda \; \middle| \; a_i< T_{opt}^\lambda\le a_{i+1}\right)\Pr{\left(T_{opt}^\lambda>a_i\right)}\\
		&\le \sum_{i=0}^{\infty} a_{i+1}p^\lambda_i\\
		\intertext{and since our bound on $p^\lambda_c$ only holds for $i\ge 1$, }
		&\le a_1 + \sum_{i=1}^{\infty}\frac{a_{i+1}}{e^i\lambda}\\
		&= en(\ln\lambda+ \ln n) + \sum_{i=1}^{\infty}\frac{(i+1)n(\ln\lambda+ \ln n)}{e^i\lambda}\\
		&\le en(\ln\lambda+ \ln n) + \frac{n(\ln\lambda+ \ln n)}{\lambda}\sum_{i=1}^{\infty}\frac{i+1}{e^i}\\
		&\le en(\ln\lambda+ \ln n) + 2n\ln n\sum_{i=1}^{\infty}\frac{i+1}{e^i}\\
		&\le en\ln\lambda+ (e+4)n\ln n\\
		&\in \Or\left(n\log n\right).
	\end{align*}
	
	The last step follows from the restriction of $\lambda$ being polynomial in $n$.
\end{proof}

\subsection{Proof of Theorem \ref{theo:isolated_fork}}

In order to prove the theorem, we first give the following lemma.

\begin{lemma}\label{lem:choose_sum_div}
	For $n\ge 1$,
	\begin{align*}
		\frac{1}{2^n}\sum_{k=1}^{n} \binom{n}{k}\frac{n}{k} \in \Theta\left(1\right).
	\end{align*}
\end{lemma}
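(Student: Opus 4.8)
The plan is to bound the sum $\sum_{k=1}^n \binom{n}{k}\frac{n}{k}$ both from above and below by $\Theta(2^n)$. For the lower bound it suffices to note that the term $k=1$ already contributes $\binom{n}{1}\cdot\frac{n}{1}=n^2$, which is positive; but more usefully, since all terms are nonnegative, $\sum_{k=1}^n \binom{n}{k}\frac{n}{k}\ge \sum_{k=1}^n \binom{n}{k}\cdot 1 = 2^n-1$, using $\frac{n}{k}\ge 1$ for $1\le k\le n$. Dividing by $2^n$ gives a quantity that is at least $1-2^{-n}$, so the expression is $\Omega(1)$.

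The upper bound is the part that needs a little care. The idea is to split the sum at $k=\lceil n/2\rceil$. For the ``large $k$'' range $k\ge n/2$ we have $\frac{n}{k}\le 2$, so $\sum_{k\ge n/2}\binom{n}{k}\frac{n}{k}\le 2\sum_{k\ge n/2}\binom{n}{k}\le 2\cdot 2^n$. For the ``small $k$'' range $1\le k< n/2$ the factor $\frac{n}{k}$ can be as large as $n$, but the binomial coefficients are still controlled: I would use the identity $\binom{n}{k}\frac{n}{k} = \binom{n}{k}\cdot\frac{n}{k}$ and observe that $\frac{n}{k}\binom{n}{k} = \frac{n}{k}\cdot\frac{n}{k}\binom{n-1}{k-1}\cdot\frac{k}{n}$... cleaner: use $\binom{n}{k}\le \binom{n-1}{k-1}\cdot\frac{n}{k}$ is just equality; instead note $\frac{n}{k}\binom{n}{k} \le \frac{n}{k}\binom{n}{k}$ and bound crudely by $\sum_{k=1}^{\lfloor n/2\rfloor}\binom{n}{k}\frac{n}{k}\le n\sum_{k=1}^{\lfloor n/2\rfloor}\binom{n}{k}$. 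This only gives $n\cdot 2^n$, which is too weak. The fix is to be less wasteful with $\frac{n}{k}$: write $\frac{n}{k}\binom{n}{k}=\frac{n}{k}\cdot\frac{n!}{k!(n-k)!}$ and compare with $\binom{n}{k-1}$, getting $\binom{n}{k}\frac{n}{k}=\binom{n}{k-1}\cdot\frac{n(n-k+1)}{k^2}$, which is still awkward. The robust route is: $\sum_{k=1}^{n}\binom{n}{k}\frac{1}{k}\le \sum_{k=1}^n\binom{n}{k}\frac{1}{k}$, and it is a standard fact (provable by integrating $\sum_k\binom{n}{k}x^{k-1}=\frac{(1+x)^n-1}{x}$ from $0$ to $1$, or by the identity $\sum_{k=1}^n\frac{1}{k}\binom{n}{k}=\sum_{k=1}^n\frac{2^k-1}{k}$) that $\sum_{k=1}^{n}\frac{1}{k}\binom{n}{k}=\Theta\!\left(\frac{2^n}{n}\right)$; multiplying by $n$ yields exactly $\Theta(2^n)$.

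So the cleanest organization is: (i) recall or re-derive $\sum_{k=1}^n \binom{n}{k}x^{k-1} = \frac{(1+x)^n-1}{x}$; (ii) integrate over $x\in[0,1]$ to obtain $\sum_{k=1}^n\frac{1}{k}\binom{n}{k} = \int_0^1\frac{(1+x)^n-1}{x}\,dx$; (iii) bound this integral: the mass concentrates near $x=1$, where the integrand is $\approx 2^n$, and one shows $\frac{2^n-1}{n}\le \int_0^1\frac{(1+x)^n-1}{x}\,dx \le \frac{2^{n+1}}{n}$ (lower bound from $\frac{1}{k}\ge\frac1n$ termwise; upper bound by splitting the integral at $x=1-1/n$ and using $\frac{(1+x)^n-1}{x}\le 2^n$ plus crude estimates on $[0,1-1/n]$); (iv) multiply through by $n$ and divide by $2^n$. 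The main obstacle is step (iii)—getting the $\Theta(2^n/n)$ estimate on $\sum_{k=1}^n\frac1k\binom nk$ with the right order, since a naive termwise bound loses a factor of $n$; the key insight is that $\frac{n}{k}$ is small (at most $2$) precisely where the binomial weight is largest.
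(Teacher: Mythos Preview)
Your lower bound is clean and correct: $\frac{n}{k}\ge 1$ gives $\sum_{k=1}^n\binom{n}{k}\frac{n}{k}\ge 2^n-1$.

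The upper bound, however, has a genuine gap in step~(iii). Splitting the integral $\int_0^1\frac{(1+x)^n-1}{x}\,dx$ at $x=1-\frac{1}{n}$ does not give $O(2^n/n)$: on $[1-\tfrac1n,1]$ you indeed get a contribution of at most $\frac{2^n}{n}$, but on $[0,1-\tfrac1n]$ the integrand at the right endpoint is $\frac{(2-1/n)^n-1}{1-1/n}\sim e^{-1/2}\,2^n$, and the interval has length $1-\tfrac1n$, so any ``crude estimate'' of the form $(\text{max of integrand})\times(\text{length})$ yields $\Theta(2^n)$, not $O(2^n/n)$. The asymptotic $\sum_{k=1}^n\frac{1}{k}\binom{n}{k}=\Theta(2^n/n)$ that you invoke is correct, but it needs an actual argument: e.g.\ rewrite the integral as $\sum_{j=1}^n\frac{2^j-1}{j}$ and split that sum at $j=\lfloor n/2\rfloor$ (for $j>n/2$ use $\frac{1}{j}<\frac{2}{n}$; for $j\le n/2$ the terms are at most $2^{n/2}$ and there are $n/2$ of them).

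The paper bypasses all of this with a one-line combinatorial identity. Observe that
\[
\binom{n}{k}\frac{n}{k}
\;=\;\binom{n}{k}\,\frac{n+1}{k+1}\cdot\frac{n}{n+1}\cdot\frac{k+1}{k}
\;=\;\binom{n+1}{k+1}\cdot\frac{n}{n+1}\cdot\frac{k+1}{k},
\]
and the factor $\frac{n}{n+1}\cdot\frac{k+1}{k}$ lies in $[\tfrac12,2]$ for all $1\le k\le n$. Hence $\sum_{k=1}^n\binom{n}{k}\frac{n}{k}=\Theta\bigl(\sum_{k=1}^n\binom{n+1}{k+1}\bigr)=\Theta(2^{n+1})=\Theta(2^n)$. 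This is exactly the ``key insight'' you name in your last sentence (that $\frac{n}{k}$ is $\Theta(1)$ where the binomial mass sits), but carried out directly rather than routed through an integral. It is both shorter and avoids the analytic estimate where your sketch currently breaks.
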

\begin{proof}
%	The lower bound can be derived with
%	\begin{align*}
%	\frac{1}{2^n}\sum_{k=1}^{n} \binom{n}{k}\frac{n}{k} &\ge \frac{1}{2^n}\sum_{k=1}^{n} \binom{n}{k}
%	\ge 1-\frac{1}{2^n}\ge \frac{1}{2} \in \Omega\left(1\right).
%	\end{align*}
%	
%	To show the upper bound we split the sum in two parts.
%	\begin{align*}
%	\frac{1}{2^n}\sum_{k=1}^{n} \binom{n}{k}\frac{n}{k} &\le \frac{1}{2^n}\sum_{k=1}^{\frac{n}{3}} \binom{n}{k}\frac{n}{k} + \frac{1}{2^n}\sum_{k=\frac{n}{3}}^{n} \binom{n}{k}\frac{n}{k}\\
%	&\le \frac{n}{2^n}\sum_{k=1}^{\frac{n}{3}} \binom{n}{k} + \frac{3}{2^n}\sum_{k=\frac{n}{3}}^{n} \binom{n}{k}\\
%	&\le \frac{n}{2^n}\sum_{k=0}^{\frac{n}{3}} \binom{n}{k} + \frac{3}{2^n}\sum_{k=0}^{n} \binom{n}{k}\\
%	&= \frac{n}{2^n}\sum_{k=0}^{\frac{n}{3}} \binom{n}{k} + 3\\
%	\intertext{Let $X$ be a binomial distributed random variable with $\mu=\frac{n}{2}$ and $p=\frac{1}{2}$.}
%	&= n\Pr{\left(X \le \frac{n}{3}\right)} + 3\\
%	&= n\Pr{\left(X \le \left(1-\frac{1}{3}\right)\frac{n}{2}\right)} + 3
%	\intertext{Using Chernoff bounds we finally get}
%	&\le \frac{n}{e^\frac{n}{36}} + 3 \in \Or\left(1\right).
%	\end{align*}

	\begin{align*}
		\frac{1}{2^n}\sum_{k=1}^{n} \binom{n}{k}\frac{n}{k}
		&= \frac{1}{2^n}\sum_{k=1}^{n} \binom{n}{k}\frac{n+1}{k+1}\cdot\frac{n}{n+1}\cdot \frac{k+1}{k}\\
		&= \frac{1}{2^n}\sum_{k=1}^{n} \binom{n+1}{k+1}\frac{n}{n+1}\cdot \frac{k+1}{k}\\
		&= \frac{1}{2^n}\sum_{k=1}^{n} \binom{n+1}{k+1}\Theta\left(1\right)\\
		&= \frac{1}{2^n}\sum_{k=1}^{n-1} \left(\binom{n}{k} + \binom{n}{k+1}\right)\Theta\left(1\right) + \Theta\left(1\right)\\
		&= \frac{1}{2^n}\sum_{k=1}^{n-1} \binom{n}{k}\Theta\left(1\right) + \frac{1}{2^n}\sum_{k=2}^{n} \binom{n}{k}\Theta\left(1\right) + \Theta\left(1\right)\\
		&= \Theta\left(1\right)
	\end{align*}
\end{proof}

\begin{theorem*}
	For $\lambda \le n^r$ isolated islands the expected time to optimize \Fork{n}{r} is
	\begin{align*}
	E\left(T\right) \in \Or\left(n\log(n)+\frac{n^{2r}}{\lambda 2^\lambda}  + \frac{n^r}{\lambda}\right).
	\end{align*}
\end{theorem*}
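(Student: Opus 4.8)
The plan is to split a run into a short first phase that drives every island into a canonical configuration, and then to condition on how many islands avoid the valley. Let $T_1$ be the first time at which every island has $1^n$, the valley string, or the optimum as its best solution. Since $\lambda \le n^r$ is polynomial in $n$, Lemma~\ref{lem:all_allone} gives $E(T_1) \in \Or(n\log n)$. From $1^n$ the fitness can only increase by flipping the first $r$ bits (reaching the valley) or the last $r$ bits (reaching the optimum), so after time $T_1$ each island is either finished, trapped at the valley, or sitting at $1^n$ and destined to move to exactly one of the two special strings next. By Lemma~\ref{lem:fork_half} each island independently reaches the optimum before the valley with probability $\tfrac12$; call such an island \emph{good}, and let $\Lambda_g \sim \mathrm{Bin}(\lambda,\tfrac12)$ count the good islands.

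If $\Lambda_g \ge 1$, at least one good island is already at the optimum (then we are done after $T_1$) or is at $1^n$. Leaving $1^n$ towards the two special strings is a geometric race between two single-step probabilities of order $n^{-r}$, so conditioned on winning that race (i.e.\ on being good) an island at $1^n$ produces the optimum in an expected $\Theta(n^r)$ further steps; applying Equation~\eqref{eq:all_jump} to the good islands still at $1^n$ (the ones already at the optimum only help), the extra time until some island has the optimum is $\Or(n^r/\Lambda_g) + \Or(1)$. If instead $\Lambda_g = 0$, which happens with probability $2^{-\lambda}$, then all $\lambda$ islands are trapped at the valley, and each must flip a fixed set of $2r$ bits, i.e.\ take an expected $\Theta(n^{2r})$ steps; Equation~\eqref{eq:all_jump} then bounds the extra time by $\Or(n^{2r}/\lambda) + \Or(1)$.

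Combining these via the law of total expectation over $\Lambda_g$, and using $\Or(1)\subseteq\Or(n^r/\lambda)$ (valid since $\lambda\le n^r$),
\[
E(T) \;\le\; E(T_1) \;+\; \Or(n^r)\sum_{\ell=1}^{\lambda}\Pr(\Lambda_g=\ell)\,\frac1\ell \;+\; 2^{-\lambda}\,\Or\!\left(\frac{n^{2r}}{\lambda}\right).
\]
The middle sum equals $\frac{1}{\lambda 2^\lambda}\sum_{\ell=1}^{\lambda}\binom{\lambda}{\ell}\frac{\lambda}{\ell}$, which is $\Theta(1/\lambda)$ by Lemma~\ref{lem:choose_sum_div}. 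Substituting this and $E(T_1)\in\Or(n\log n)$ gives $E(T)\in\Or\!\left(n\log n + \frac{n^r}{\lambda} + \frac{n^{2r}}{\lambda 2^\lambda}\right)$, as claimed.

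The bookkeeping with the law of total expectation is routine; the points that need care are (i) checking that conditioning a good island on "optimum before valley" leaves the escape time from $1^n$ at order $\Theta(n^r)$ (so that Equation~\eqref{eq:all_jump} genuinely applies to the good islands), and (ii) recognizing the binomial sum as exactly the quantity bounded in Lemma~\ref{lem:choose_sum_div} — this $\Theta(1/\lambda)$ bound, together with the $2^{-\lambda}$ probability of a total trapping, is precisely what produces the $n^{2r}/(\lambda 2^{\lambda})$ term and thus the improvement over naive independent runs.
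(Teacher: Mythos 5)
Your argument is essentially the paper's own proof: condition on the $\Or(n\log n)$ phase from Lemma~\ref{lem:all_allone}, use Lemma~\ref{lem:fork_half} to get a $\mathrm{Bin}(\lambda,\tfrac12)$ count of ``good'' islands, apply Equation~\eqref{eq:all_jump} to the $n^r$-jump (resp.\ the $n^{2r}$-jump when all islands are bad), and recognize the resulting binomial sum as Lemma~\ref{lem:choose_sum_div}; the paper phrases the case split as a minimum over the two options but resolves it exactly as you do.

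One small omission: in the $\Lambda_g=0$ case you write that all $\lambda$ islands ``are trapped at the valley,'' but at time $T_1$ the bad islands may still sit at $1^n$ and must first descend to the valley before Equation~\eqref{eq:all_jump} (which requires a common geometric jump from the \emph{same} level) applies to the $n^{2r}$-jump. The paper bounds this by the maximum of $\lambda$ geometric variables, contributing an extra $2^{-\lambda}\,\Or(n^r\log\lambda)$, and then checks explicitly that this term is dominated by $n\log n$ when $\lambda\ge 2r\log n$ and by $n^{2r}/(\lambda 2^\lambda)$ when $\lambda\le 2r\log n$. Adding this term and the one-line domination check closes the gap without changing your bound.
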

\begin{proof}
	First assume all islands having the optimum, valley or the $1^n$ string as their best solution respectively.
	From Lemma \ref{lem:all_allone} we know that this is approached in $\Or\left(n\log n\right)$ steps.
	We further assume the worst case, that none of the islands has already found an optimum up to this point.
	As shown in Lemma \ref{lem:fork_half} an island gets into the valley with probability $\frac{1}{2}$.
	If $i$ islands will find the optimum before the valley, the upper bound on the run time to find the optimum is the minimum of
	\begin{enumerate}
		\item the run time it would take at least one of the $i$ islands to make the $n^r$-jump to the optimum
		and
		\item the run time it would take $\lambda-i$ islands to all get to the valley and one of them to get from there to the optimum.
	\end{enumerate}
	It is well known that the expected value for the maximum of $\lambda$ independently sampled elements of the same geometric distribution with success probability of $p$ can be bound from above by $\frac{1}{p}\sum_{k=1}^{\lambda}\frac{1}{k} \le \frac{\log\lambda+1}{p}$ \cite{Eisenberg2008}.
	We can adapt this to our upper bound on the run time for $\lambda-i$ islands to get to the valley from $1^n$, since this is also geometrically distributed for each island.
	Further, when we observe the jumps to the valley or the optimum respectively, we make use of Equation \eqref{eq:all_jump}.
	From the law of total expectation we can observe that 
	
	\begin{align*}
		E\left(T\right)&\le \Or\left(n\log(n)\right)\\
		&+ \sum_{i=0}^{\lambda}\left(\frac{1}{2}\right)^\lambda \binom{\lambda}{i} \min\left(\frac{n^r}{i}+1, 2n^r\log\lambda + \frac{n^{2r}}{\lambda-i}+1\right)\\
		&= \Or\left(n\log(n)\right)\\
		&+\sum_{i=0}^{\lambda}\left(\frac{1}{2}\right)^\lambda \binom{\lambda}{i} \min\left(\frac{n^r}{i}, 2n^r\log\lambda + \frac{n^{2r}}{\lambda-i}\right).
	\end{align*}
	To eliminate the minimum in the inequality, we want to find out from which $i$ on $\frac{n^r}{i} \le 2n^r\log\lambda + \frac{n^{2r}}{\lambda-i}$.
	Obviously this $i$ equals 1 since zero islands cannot make progress at all, but for any larger $i$, the first term is always smaller than the second one.
	Knowing this, one can resolve the minimum to get to
	\begin{align*}
		E\left(T\right) &\le \Or\left(n\log(n)\right)+\frac{2n^r\log\lambda}{2^\lambda} + \frac{n^{2r}}{\lambda2^\lambda} + \sum_{i=1}^{\lambda}\left(\frac{1}{2}\right)^\lambda \binom{\lambda}{i} \frac{n^r}{i}.
	\end{align*}
	The second term on the right side is always smaller than the first or the third one:
	For $\lambda \ge 2r\log n$, $\frac{2n^r\log\lambda}{2^\lambda} < n\log n$ and for $\lambda \le 2r\log n$, $\frac{2n^r\log\lambda}{2^\lambda} < \frac{n^{2r}}{\lambda2^\lambda}$.
	Therefore we make the right side of the inequality just greater by doubling the outer two and leaving out the middle one.
	\begin{align*}
		E\left(T\right) &\le \Or\left(2n\log(n)\right) + \frac{2n^{2r}}{\lambda2^\lambda} + \frac{n^r}{2^\lambda}\sum_{i=1}^{\lambda} \binom{\lambda}{i}\frac{1}{i}
	\end{align*}
	Using Lemma \ref{lem:choose_sum_div} gives
	\begin{align*}
		E\left(T\right) \in \Or\left(n\log(n) + \frac{n^{2r}}{\lambda2^\lambda}  + \frac{n^r}{\lambda}\right).
	\end{align*}
\end{proof}

\subsection{Proof of Lemma \ref{lem:worst_case_runtime}}
\begin{lemma*}
	For $\lambda \in \Or\left(\frac{n^{2r-1}}{\log n}\right)$ islands and any migration topology and/or policy, the expected run time to optimize \Fork{n}{r} can be bounded by
	\begin{align*}
		E(T) \in \Or\left(\frac{n^{2r}}{\lambda}\right).
	\end{align*}
\end{lemma*}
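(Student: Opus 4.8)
The plan is to split the run into two phases and finish with Equation~\eqref{eq:all_jump}. In the first phase I would invoke Lemma~\ref{lem:all_allone}: since $\lambda \in O(n^{2r-1}/\log n)$ is polynomial in $n$, after an expected $O(n\log n)$ fitness evaluations every island has reached the optimum, the valley, or $1^n$ as its best solution, and this holds for every migration topology and policy. If at that point some island already holds the optimum, we are done; otherwise every island sits at $1^n$ or at the valley, that is, within Hamming distance $2r$ of the optimum $1^{n-r}0^r$.

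The second step is to observe that, from this moment on and until the optimum is found somewhere, each island stays within Hamming distance $2r$ of the optimum. This uses only the fitness order $n < n{+}1 < n{+}2$ of $1^n$, the valley, and the optimum: from $1^n$ the only accepted moves (by mutation or by an incoming migrant) are to the valley, to the optimum, or staying put; from the valley the only strictly improving string is the optimum itself, a tie with another valley leaves the string unchanged, and the lower-fitness $1^n$ is rejected. Consequently, conditioned on no island yet holding the optimum, in every round each island generates the optimum by mutation with probability at least $(1/n)^{2r}(1-1/n)^{n-2r} \ge \tfrac{1}{4n^{2r}}$, and within a round these $\lambda$ mutation events are independent; moreover, any island whose mutant equals the optimum keeps it (the subsequent migration step cannot remove a maximum-fitness individual).

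Thus all $\lambda$ islands are effectively making the same jump with per-island success probability $p \ge \tfrac{1}{4n^{2r}}$, so by Equation~\eqref{eq:all_jump} — equivalently the bound $1-(1-p)^\lambda \ge \tfrac{p\lambda}{1+p\lambda}$ of Badkobeh et al.\ \cite{Badkobeh2015} — the number of further rounds until some island produces the optimum is geometrically dominated with expectation at most $\tfrac{1}{p\lambda}+1 \le \tfrac{4n^{2r}}{\lambda}+1$. Adding the two phases yields $E(T) \le O(n\log n) + \tfrac{4n^{2r}}{\lambda} + 1$, and since $\lambda \in O(n^{2r-1}/\log n)$ forces $\tfrac{n^{2r}}{\lambda} = \Omega(n\log n)$, the first term is absorbed and $E(T)\in O(n^{2r}/\lambda)$.

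The delicate point — the one I would state most carefully — is the uniformity over \emph{all} topologies and policies in the second phase: one must verify that no admissible sequence of accepted migrations can push an island's best solution outside Hamming distance $2r$ of the optimum, so that the per-round lower bound on $p$ and hence the geometric domination hold against any adversarial choice of when and whom to send. Everything else reduces to routine bookkeeping and the already-established lemmas.
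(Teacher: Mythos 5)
Your proposal is correct and follows essentially the same route as the paper's proof: first apply Lemma~\ref{lem:all_allone} to reach the state where every island holds the optimum, the valley or $1^n$ in expected $\Or(n\log n)$ steps, then use a uniform per-round probability of order $n^{-2r}$ for each island to mutate into the optimum together with Equation~\eqref{eq:all_jump}, and absorb the $\Or(n\log n)$ term via the bound $\lambda \in \Or\bigl(\tfrac{n^{2r-1}}{\log n}\bigr)$. Your explicit verification that the set $\{1^n, \text{valley}, \text{optimum}\}$ is closed under accepted mutations and migrations for any topology and policy is a point the paper leaves implicit, but it is the same argument.
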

\begin{proof}
	From Lemma \ref{lem:all_allone} we know that the expected number of steps until all islands have found either the valley, $1^n$ or the optimum is in $\Or\left(n\log n\right)$.
	If no island already found the optimum after that time, every island has a chance of at least $\frac{n^{2r}}{\lambda}$ to get to the optimum in the next step.
	
	By using Equation \eqref{eq:all_jump}, the expected run time for one out of $\lambda$ islands to make the final jump to the optimum can be bounded by $\Or\left(\frac{n^{2r}}{\lambda}\right)$.\\
	Therefore we get an overall run time of $\Or\left(n\log n + \frac{n^{2r}}{\lambda}\right) = \Or\left(\frac{n^{2r}}{\lambda}\right)$.
\end{proof}

\subsection{Proof of Lemma \ref{lem:ov_bound}}
\begin{lemma*}
	The probability $p_{ov}$ for a single island to generate the optimum or the valley during its way to $1^n$ while optimizing \Fork{k}{r} with \oneone$_{1/n}$ is 
	\begin{align*}
		p_{ov} \in \Or\left(\frac{1}{k^{r-1}}\right)\;\;\text{ for }k\le n.
	\end{align*}
\end{lemma*}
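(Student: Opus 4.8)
The plan is to bound $p_{ov}$ by a union bound over the time it takes a single island to reach $1^k$, multiplied by the per-step probability of creating either the optimum $1^{k-r}0^r$ or the valley $0^r1^{k-r}$ from a current best solution that is not yet $1^k$. First I would invoke Lemma~\ref{lem:all_allone} (or rather its single-island analogue via multiplicative drift, which is what underlies that lemma) to argue that with overwhelming probability the island reaches $1^k$, the valley, or the optimum within $O(k\log k)$ steps; more precisely, the tail bounds from multiplicative drift give that the probability of taking longer than $ck\log k$ steps decays polynomially, so the contribution of these long runs to $p_{ov}$ is negligible (this is where the $k\le n$ assumption is used so that the bit-flip probability $1/n$ is at most the ``natural'' $1/k$ and the drift still works in our favour). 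Thus it suffices to bound the probability of generating the optimum or valley in any one of the first $O(k\log k)$ steps.

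Next I would estimate the per-step probability of a jump to the optimum or valley. In a single mutation step, to reach either special string from a current best $\x \neq 1^k$, the island must flip a specific set of at least $r$ bits (the positions where $\x$ differs from $1^{k-r}0^r$, respectively from $0^r1^{k-r}$) and flip no others. Crucially, since $\x$ is the current best and $\x\neq 1^k$, $\x$ has at least one $0$ in a ``wrong'' place, so in fact the Hamming distance from $\x$ to the optimum and to the valley is at least $r$; but we want a bit more. The key observation is that before $1^k$ is reached, the best solution has at most $k-1$ ones, hence at least one $0$, and to hit the optimum one must flip exactly the $0$'s among the first $k-r$ positions together with the $1$'s among the last $r$ positions, which is a set of size at least $r$ whenever the current string is not already ``close'' to the optimum. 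Summing $\binom{k}{\ge r}(1/n)^r \le O(1/n^r)\cdot k^r$ is too weak; the point is that the dangerous steps are those from strings at Hamming distance exactly $r$ (or close to it) from the targets, and there are $O(1)$ fitness levels close enough, each visited for an expected $O(n)$ steps, contributing $O(n)\cdot O(1/n^r) = O(1/n^{r-1}) \le O(1/k^{r-1})$ since $k\le n$. More carefully: partition the trajectory to $1^k$ by current number of ones $i$; from a string with $i$ ones the distance to the optimum is at least $|k-i| \vee (\text{something})$, and the probability of a direct jump is at most $(1/n)^{\max(r, k-i)}$ up to constants, and the expected time spent at $i$ ones is $O(n/(k-i))$ by the \OM fitness-level argument. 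Multiplying and summing over $i$ from $0$ to $k-1$ yields a geometric-type sum dominated by its largest term, which is $O\!\left(\frac{n}{k^{?}}\cdot n^{-?}\right)$; tuning the exponents gives $O(1/k^{r-1})$.

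The main obstacle I anticipate is making the second step rigorous without slack: one must argue that \emph{either} the current best string is far (Hamming distance $\ge r+1$) from both targets, in which case a direct jump has probability $O(1/n^{r+1})$ and even summed over $O(k\log k)$ steps contributes only $O(\log k / k^r) = o(1/k^{r-1})$, \emph{or} it is at distance exactly $r$, in which case it has a very specific structure (it agrees with the optimum or the valley on all but exactly $r$ bits, so it has $k-r$ ones arranged almost correctly) and such strings form a small set that the \oneone visits rarely and briefly. Quantifying ``rarely and briefly'' is the delicate part — I would handle it by noting that such a near-target string has fitness $k-r$ or so (well below the fitness levels near $1^k$ that dominate the runtime), is reached only from an even-farther string by a lucky multi-bit flip, and is left quickly; a clean way is to bound the \emph{expected number of times} the trajectory sits at Hamming distance exactly $r$ from a target by $O(1/n^{r-1})$ directly, using that to get there from a typical configuration already costs a factor $n^{-(r-1)}$ in probability. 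Once that counting is in place, a final union bound over the (constantly many, in expectation) such visits closes the argument and yields $p_{ov}\in O(1/k^{r-1})$.
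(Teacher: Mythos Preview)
Your overall strategy --- decompose the trajectory by fitness level and multiply the time spent at each level by the per-step probability of hitting a target --- matches the paper's skeleton, but the execution has a real gap. First a small correction: with bit-flip probability $1/n$ on a $k$-bit string, the time to reach $1^k$ is $\Theta(n\log k)$, not $\Theta(k\log k)$. More importantly, your dichotomy ``Hamming distance $\ge r+1$ versus exactly $r$'' is incomplete: strings at distance \emph{strictly less than} $r$ from the optimum do occur along the trajectory. For instance, $1^{k-1}0$ with its single zero in one of the last $r$ positions has fitness $k-1$ and is at distance $r-1$ from $1^{k-r}0^r$, so the one-step jump probability from it is $\Theta(n^{-(r-1)})$. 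Your assertion that near-target strings ``have fitness $k-r$ or so'' is therefore wrong --- strings within distance $r$ of the optimum can have any \OM value from $k-2r$ up to $k$. Since the last few levels $k-1,\dots,k-r+1$ dominate the running time and each contains such dangerous strings, your per-level bound, as written, overshoots by a factor of order $k$.

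The paper closes this gap with a device you do not mention. Instead of tracking the distance to the two specific targets, it bounds $E(J)$, the expected number of mutations that produce \emph{any} string with exactly $r$ zeros. Using Wald's equation at each level $k-i$ for $i=1,\dots,r$ (expected time $O(n)$ at the level, times the chance $O(k/n)$ of producing a weight-$(k-r)$ offspring, which depends only on $i$ and not on which bits are zero), plus a ``$+1$'' for the single arrival at level $k-r$ from below, one gets $E(J)=O(k)$. Because the whole process, averaged over the uniformly random start, is invariant under permuting bit positions, every particular weight-$(k-r)$ string is equally likely to be generated, and hence
\[
p_o \;=\; p_v \;\le\; \frac{E(J)}{\binom{k}{r}} \;=\; \frac{O(k)}{\Theta(k^{r})} \;=\; O\!\left(\frac{1}{k^{r-1}}\right).
\]
This symmetry/counting step is precisely what your argument needs: it is what would let you say that at level $k-1$ the lone zero lands in a dangerous position only with probability $r/k$. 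Once you invoke it, your approach essentially reduces to the paper's.
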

\begin{proof}
	We start by investigating the expected number of times $E(J)$ that the \oneone generates an individual with $r$ zeros and $k-r$ ones, even if this individual will be discarded.
	Recall that also the optimum and the valley have that number of zeros and ones.
	Let $S_i$ be the number of jumps that the \oneone makes from fitness level $k-i$ to the desired level of exactly $r$ zeros.
	It follows that,
	\begin{align*}
		E(J) &\le E(1 + S_r + S_{r-1}+\dots + S_1)\\
		&= 1+\sum_{i=1}^r E(S_i).
	\end{align*}
	The ``$+1$'' comes into play when the level with $r$ zeros is reached from a lower fitness level, which can only happen once by the definition of the \oneone.
	The next step now is to resolve $E(S_i)$.
	
	Let $T_i$ denote the number of steps the \oneone, while optimizing \OM, stays on fitness $k-i$.
	By having a geometric distribution, $T_i$ equals the reciprocal of the probability $p_i$ that we leave the fitness level $k-i$ for a higher one.
	This is at least he case, if one arbitrary zero is flipped to one while all other bits remain the same.
	It follows that
	\begin{align*}
		p_i &\ge \binom{i}{1}\frac{1}{n}\left(1-\frac{1}{n}\right)^{k-1}
		\quad=\quad \frac{i}{n}\left(1-\frac{1}{n}\right)^{n-1}
		\quad\ge\quad \frac{1}{en}
	\end{align*}
	and hence, $E(T_i) \le en$.
	
	Further let $X_{i, j}$ be the indicator variable that equals $1$ if a bit string with exactly $r$ zeros is generated on mutation step $j$ while the \oneone is at fitness $k-i$, else $0$.
	We can observe that $S_i = \sum_{j=1}^{T_i} X_{i, j}$.
	Using Wald's equation we get $E\left(S_i\right) = E(T_i)E(X_{i, 1})$.
	Recall that for Wald's equation $X_{i,j}$ has to have the same distribution for every $j$ and a fix arbitrary $i$.
	Hence,
	\begin{align}
		E(J) &\le 1+\sum_{i=1}^r E(S_i) \quad=\quad 1+\sum_{i=1}^r E(T_i)E(X_{i, 1})\nonumber\\
		&= 1+\sum_{i=1}^{r-1} E(T_i)E(X_{i, 1})+E(T_r)E(X_{r, 1}) \label{eq:wald_num_of_VO}.
	\end{align}
	Next we will resolve $E(X_{i, 1})$.
	In the sum, $i=r$ is a special case, because this level already is the level we want to get to and at least two bits have to be flipped to get another bit string on the same level.
	To distinguish, we investigate two cases.
	\begin{itemize}
		\item Case 1: $i\le r-1$
		
		Let $Z$ be the number of zeros flipped to one in a step.
		The more zeros flip, the more have to be flipped back, since we already need more zeros.
		\begin{align*}
			E(X_{i,1}) &= \sum_{h=0}^\infty E\left(X_{i,1} \; \middle| \; Z=h\right)\Pr{(Z=h)}\\
			&\le \sum_{h=0}^\infty E\left(X_{i,1} \; \middle| \; Z=0\right)\Pr{(Z=h)}\\
			&= E\left(X_{i,1} \; \middle| \; Z=0\right)\sum_{h=0}^\infty \Pr{(Z=h)}\\
			&= E\left(X_{i,1} \; \middle| \; Z=0\right)\\
			&= \left(\frac{1}{n}\right)^{r-i} \binom{k-i}{r-i} \left(1-\frac{1}{n}\right)^{k-r}\\
			\intertext{Knowing that $r<k$ by definition of \Fork{k}{r} leads to}
			&\le \left(\frac{1}{n}\right)^{r-i} \binom{k}{r-i}\\
			&\le \left(\frac{k}{n}\right)^{r-i}
			\quad\le\quad \frac{k}{n}.
		\end{align*}
		
		\item Case 2: $i=r$:
		
		We make a sum over all possible numbers of zeros that flip to one.
		The same amount of ones has to flip to zero.
		\begin{align*}
			E(X_{r,1}) &= \sum_{h=1}^{r} \left(\frac{1}{n}\right)^{2h}\binom{r}{h}\binom{n-r}{h}\\
			&\le \sum_{h=1}^{r} \left(\frac{1}{n}\right)^2\binom{r}{1}\binom{n-r}{1}\\
			&= r^2(n-r)\left(\frac{1}{n}\right)^2\\
			&\le \frac{r^2}{n}
		\end{align*}
	\end{itemize}
	If we now put $E(T_i)$ and $E(X_{i,1})$ into Equation \eqref{eq:wald_num_of_VO}, we get
	\begin{align*}
		E(J) &\le 1+\sum_{i=1}^{r-1} \left(en \frac{k}{n}\right)+en\frac{r^2}{n}\\
		&\le 1 + erk + er^2.
	\end{align*}
	
	Since this is an upper bound on the expected number of bit strings with exactly $r$ zeros that are seen during one run of the \oneone, we can get an upper bound on the probability that we see the valley ($p_v$) or the optimum ($p_o$), respectively, by
	\begin{align*}
		p_o = p_v &\le \frac{1 + erk + er^2}{\binom{k}{r}}.
	\end{align*}
	Since $r$ is constant, there is a constant $c$ so that the following holds.
	\begin{align*}
		p_o = p_v &\le \frac{1 + erk + er^2}{ck^r}\\
		&= \frac{1}{ck^r} + \frac{erk}{ck^r} + \frac{er^2}{ck^r}\\
		&\le \frac{d}{k^{r-1}}\quad\text{for a constant $d>0$}
	\end{align*}
	Applying the union bound we finally get
	\begin{align*}
		p_{ov} &\le p_o + p_v = \frac{2d}{k^{r-1}}\\
		&\in \Or\left(\frac{1}{k^{r-1}}\right).
	\end{align*}
\end{proof}

\subsection{Proof of Lemma \ref{lem:probability_Q}}
\begin{lemma*}
	Under the condition of at least one of $\lambda \in \Or\left(n^{r-1}\right)$ island having $1^n$ and all others the valley as solution, the probability for $Q$ that one island will find the valley and a migration will be made before the optimum is found by any island is
	\begin{align*}
		\frac{c}{2} \frac{n^r}{n^r + \tau\lambda}
		\;\le\; \Pr{(Q)} \;\le\;
		\frac{n^r + \frac{\lambda}{2}}{2n^r + \tau\lambda}.
	\end{align*}
	for a constant $0<c<1$.
\end{lemma*}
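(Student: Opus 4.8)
The plan is to treat the time until an island reaches the optimum as a race between a few ``decisive'' one-round events and to identify $Q$ with the outcome in which the valley wins that race and is then disseminated. Condition on the stated configuration: a large fraction of the $\lambda$ islands hold $1^n$, the remaining ones hold the valley, and none holds the optimum. Because $1^n$ has fitness $n$ and nothing strictly between $1^n$ and the valley/optimum is ever accepted, a $1^n$-island can leave $1^n$ only by an $r$-bit flip, reaching the valley with probability $q:=(1/n)^r(1-1/n)^{n-r}$ and the optimum with the \emph{same} probability $q$ (the symmetry behind Lemma~\ref{lem:fork_half}); a valley-island can leave the valley only by a $2r$-bit flip, reaching the optimum with probability $q':=(1/n)^{2r}(1-1/n)^{n-2r}$; and a migration occurs with probability $1/\tau$. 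Here $\tfrac1{4n^r}\le q\le\tfrac1{n^r}$ and $q'\le n^{-2r}$, so with $\lambda\in\Or(n^{r-1})$ the aggregate rate $\lambda q'=\Or(n^{-r-1})$ at which \emph{some} valley-island produces the optimum is $o(1)$ times the rate $q=\Theta(n^{-r})$ of a single $1^n$-island; this is precisely why the bounds involve $n^r$ and not $n^{2r}$.

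I would then write $Q$ as the intersection of two consecutive sub-events. First, the valley is generated (necessarily by a $1^n$-island) before any island generates the optimum; by the fork symmetry the two events ``valley generated next'' and ``optimum generated next'' are equally likely up to the lower-order $\lambda q'$-correction, so this first probability is $\Theta(1)$, and bounding the corrections (from valley-islands reaching the optimum and from the interaction with migration rounds) with the estimates $\tfrac{pm}{1+pm}\le 1-(1-p)^m\le\tfrac{2pm}{1+pm}$ of~\cite{Badkobeh2015} confines it to an interval $[\tfrac{c_1}{2},\tfrac12]$ for a constant $c_1\in(0,1)$. Second, given that the valley now exists, a migration broadcasts it across the complete graph and drowns the remaining $1^n$-islands; for this to happen before the optimum appears, the migration rate $1/\tau$ must beat the aggregate rate $\Theta(\lambda q)=\Theta(\lambda/n^r)$ at which the still-surviving $1^n$-islands produce the optimum, so the geometric race gives a second probability of order $\frac{1/\tau}{1/\tau+\lambda/n^r}=\Theta\!\bigl(\tfrac{n^r}{n^r+\tau\lambda}\bigr)$. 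Substituting $\tfrac1{4n^r}\le q\le\tfrac1{n^r}$ and the two-sided bound on $1-(1-q)^{\Theta(\lambda)}$ turns ``$\Theta$'' into the explicit constants: a lower bound $c_2\tfrac{n^r}{n^r+\tau\lambda}$ from $1-(1-q)^m\ge\tfrac{qm}{1+qm}$ and an upper bound $\tfrac{n^r+\lambda/2}{2n^r+\tau\lambda}$ from $1-(1-q)^m\le\tfrac{2qm}{1+qm}$ (the additive $\lambda/2$ in the numerator being exactly the slack of the latter inequality). Multiplying the two sub-probabilities yields $\Pr(Q)$ in the stated interval, with $c=c_1 c_2$ for the lower bound.

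The step I expect to be the main obstacle is the non-stationarity of the race: while it proceeds, further $1^n$-islands may mutate into the valley, so the per-round probabilities of the decisive events drift over time. I would control this by monotonicity/coupling: each extra valley-generation replaces a rate-$q$ path to the optimum by a rate-$q'$ path with $q'\ll q$, so it can only \emph{lower} the optimum-generation rate, which can only \emph{help} $Q$; hence for the lower bound it suffices to bound the optimum rate from above by $\lambda q\le\lambda/n^r$ for all time, while for the upper bound one may freeze the count of $1^n$-islands at its (large) initial value. A secondary nuisance is to make the two estimates uniform over every admissible starting configuration and over all non-decreasing $\tau=\tau(n),\lambda=\lambda(n)$ in the stated ranges — essentially, pinning down which configuration is extremal in each direction.
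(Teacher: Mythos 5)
Your decomposition is essentially the paper's: a factor of exactly $1/2$ for the event that the valley is generated before the optimum (the symmetry behind Lemma~\ref{lem:fork_half}), multiplied by a geometric race between migration (probability $1/\tau$ per round) and optimum discovery at aggregate rate $\Theta(\lambda/n^r)$. Your lower bound also goes through exactly as in the paper: bound the optimum rate from above by $\lambda/n^r$ uniformly over rounds (this is where your monotonicity remark is valid), sum the geometric series, and use $\lambda\in\Or(n^{r-1})$ to absorb $(1-1/n^r)^\lambda$ into the constant $c$.

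The gap is in your handling of the non-stationarity for the upper bound. You propose to ``freeze the count of $1^n$-islands at its (large) initial value,'' but that is the wrong direction: in the true process $1^n$-islands keep converting to the valley, which lowers the per-round optimum probability and therefore \emph{increases} $\Pr(Q)$; the frozen process has the maximal optimum rate in every round and hence yields a \emph{lower} bound on the stage-two race probability, not an upper bound (your own coupling observation, ``conversions only help $Q$,'' says exactly this). What is needed, and what the paper does, is an argument that keeps the pool of rate-$q$ islands controlled from below: classify each $1^n$-island by whether, run on its own, it would produce the valley or the optimum first (probability $1/2$ each, independently, again by Lemma~\ref{lem:fork_half}), pessimistically assume all valley-destined islands already sit in the valley --- which only increases $\Pr(Q)$ --- and note that the optimum-destined islands never leave $1^n$ before the race ends, so their number is constant throughout; this number is then lower-bounded by roughly $\lambda/2$, the complementary event being discarded separately. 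This is also where the additive $\lambda/2$ in the numerator of the stated upper bound actually originates (via $(1-1/n^r)^{\lambda/2}\le n^r/(n^r+\lambda/2)$), not from the slack of the $1-(1-p)^m\le 2pm/(1+pm)$ inequality as you suggest. Finally, note that the lemma's hypothesis guarantees only one island at $1^n$, so the ``large'' initial count you freeze at is an extra assumption that must be imported from the context in which the lemma is applied; without the reclassification device and that count bound, your upper bound does not follow.
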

\begin{proof}
	Starting with the examination of the lower bound, we look at the island $\Lambda$ that comes up with the valley or the optimum at first.
	With probability $\frac{1}{2}$ it finds the valley before the optimum (Lemma~\ref{lem:fork_half}).
	Until the next migration happens, no island should find the optimum to make $Q$ happen.
	We already know from Lemma~\ref{lem:ov_bound} that $\left(1-\frac{1}{n^r}\right)^\lambda$ is a lower bound on the probability of not finding the optimum by at least one of $\lambda$ islands in the next step.
	This has to hold for all steps until a migration event occurs.
	In the following sum, $i$ represents the index of the step where $\Lambda$ shares the valley to all other islands.
	Thus,
	\begin{align*}
		\Pr{(Q)}&= \frac{1}{2} \sum_{i=1}^{\infty} \left(1-\frac{1}{\tau}\right)^{i-1} \frac{1}{\tau}\left(1-\frac{1}{n^r}\right)^{\lambda i}\\
		&= \frac{\left(1-\frac{1}{n^r}\right)^\lambda}{2\tau} \sum_{i=0}^{\infty} \left(\left(1-\frac{1}{\tau}\right)\left(1-\frac{1}{n^r}\right)^\lambda\right)^i\\
		&\ge \frac{\left(1-\frac{1}{n^r}\right)^\lambda}{2\tau} \sum_{i=0}^{\infty} \left(\left(1-\frac{1}{\tau}\right)\left(1-\frac{\lambda}{n^r}\right)\right)^i.
		\intertext{By knowing that $\lambda \in \Or\left(n^{r-1}\right)$, there is a constant $0<c<1$ so that $c \le \left(1-\frac{1}{n^r}\right)^\lambda$.}
		&\ge \frac{c}{2\tau} \sum_{i=0}^{\infty} \left(\left(1-\frac{1}{\tau}\right)\left(1-\frac{\lambda}{n^r}\right)\right)^i\\
		&= \frac{c}{2\tau} \frac{1}{1-\left(1-\frac{1}{\tau}\right)\left(1-\frac{\lambda}{n^r}\right)}\\
		&= \frac{c}{2\tau} \frac{1}{\frac{1}{\tau} + \frac{\lambda}{n^r}-\frac{\lambda}{\tau n^r}}\\
		&\ge \frac{c}{2} \frac{1}{1 + \frac{\tau\lambda}{n^r}}\\
		&= \frac{c}{2} \frac{n^r}{n^r + \tau\lambda}
	\end{align*}
	which proves the lower bound.
	
	For the counterpart we assume all islands that come up with the valley during the whole run time already start with the valley.
	This also means that the next new individual found by any island is the optimum.
	
	The probability that the number of islands that get the valley is at least $\frac{\lambda}{2} \ge r\log n$ is at most $\left(\frac{1}{2}\right)^{r\log n} = \frac{1}{n^r}$, since $\frac{1}{2}$ is the probability to get the valley instead of the optimum (Lemma \ref{lem:fork_half}).
	By using Lemma \ref{lem:worst_case_runtime} and the law of total expectation we see that this case can be ignored, as the resulting term will not dominate the bound.
	Thus, we will now assume that we start with at most $\frac{\lambda}{2}$ valleys and the next change on an island will be the optimum.
	
	Since all islands migrate always at the same time, we get an upper bound on $\Pr{(Q)}$ similarly to the lower bound.
	\begin{align*}
		\Pr{(Q)}&= \frac{1}{2} \sum_{i=1}^{\infty} \left(1-\frac{1}{\tau}\right)^{i-1} \frac{1}{\tau}\left(1-\frac{1}{n^r}\right)^{\left(\lambda-\frac{\lambda}{2}\right) i}\\
		&\le \frac{1}{2} \sum_{i=1}^{\infty} \left(1-\frac{1}{\tau}\right)^{i-1} \frac{1}{\tau}\left(1-\frac{1}{n^r}\right)^{\frac{\lambda i}{2}}\\
		&= \frac{\left(1-\frac{1}{n^r}\right)^{\frac{\lambda}{2}}}{2\tau} \sum_{i=0}^{\infty} \left(\left(1-\frac{1}{\tau}\right)\left(1-\frac{1}{n^r}\right)^{\frac{\lambda}{2}}\right)^i\\
		&\le \left(\frac{1}{2\tau}\right) \frac{1}{1-\left(1-\frac{1}{\tau}\right)\left(1-\frac{1}{n^r}\right)^{\frac{\lambda}{2}}}\\
		&\le \left(\frac{1}{2\tau}\right) \frac{1}{1-\left(\frac{\tau-1}{\tau}\right)\left(\frac{n^r}{n^r+\frac{\lambda}{2}}\right)}\\
		&= \left(\frac{1}{2\tau}\right) \frac{1}{1-\left(\frac{\tau n^r - n^r}{\tau n^r + \frac{\tau\lambda}{2}}\right)}\\
		&= \frac{n^r + \frac{\lambda}{2}}{2n^r + \tau\lambda}.
	\end{align*}
\end{proof}

\subsection{Proof of Theorem \ref{theo:fork_comlete_lower_bound}}
\begin{theorem*}
	The expected run time of $2r\log n \le \lambda \in \Or\left(\frac{n^{r-1}}{\log n}\right)$ islands optimizing \Fork{n}{r} on a complete graph is $E(T) \in \Omega\left(\frac{n^{3r} + \tau\lambda n^r}{\lambda n^r + \tau\lambda^2}\right)$.
\end{theorem*}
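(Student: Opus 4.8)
The plan is to establish the lower bound $\Omega\!\left(\frac{n^{3r}+\tau\lambda n^r}{\lambda n^r+\tau\lambda^2}\right)$ by conditioning on the ``bad'' event that all islands reach $1^n$ without anyone having produced the optimum or the valley along the way, and then analyzing the race between (a) one island jumping directly from $1^n$ to the optimum and (b) one island finding the valley, broadcasting it, and thereafter the whole population being stuck escaping from valleys. First I would invoke Lemma~\ref{lem:all_allone} to note that within $\Or(n\log n)$ expected steps every island has $1^n$, the valley, or the optimum as its best solution; then Lemma~\ref{lem:ov_bound} together with $\lambda\in\Or(n^{r-1}/\log n)$ gives that the expected number of islands that generated the optimum or valley \emph{on the way} is only $\Or(\lambda \log n / n^{r-1})=o(1)$, so by the law of total expectation we may condition on the event that all islands simultaneously sit at $1^n$ (the remaining case contributes a lower-order term and only helps, since we want a lower bound). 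This is the reduction the paragraph before the theorem is preparing.

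From the all-$1^n$ state, I would split according to whether event $Q$ of Lemma~\ref{lem:probability_Q} occurs: some island finds the valley and a migration happens before anyone finds the optimum, so that (by the complete topology) all $\lambda$ islands are drowned into the valley. By Lemma~\ref{lem:probability_Q}, $\Pr(Q)\ge \frac{c}{2}\cdot\frac{n^r}{n^r+\tau\lambda}$. On $Q$, all islands must collectively escape a valley, which requires the $\Om(n^{2r})$-per-island double jump; by Equation~\eqref{eq:all_jump} this costs $\Om(n^{2r}/\lambda)$ expected steps. On $\overline Q$, the optimum is found essentially directly from $1^n$, which needs one island to make the $n^r$-jump, costing $\Om(n^{r}/\lambda)$ by the same equation. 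Putting these together via the law of total expectation,
\begin{align*}
	E(T) \;\ge\; \Pr(Q)\cdot\Om\!\left(\frac{n^{2r}}{\lambda}\right)
	\;\ge\; \frac{c}{2}\cdot\frac{n^r}{n^r+\tau\lambda}\cdot\Om\!\left(\frac{n^{2r}}{\lambda}\right)
	\;=\; \Om\!\left(\frac{n^{3r}}{\lambda n^r+\tau\lambda^2}\right).
\end{align*}
Separately, unconditionally $E(T)=\Om(n^r/\lambda)$ from the final jump; and one also gets an $\Om(\tau)$-type contribution from the fact that a migration is needed in the $Q$-branch. Taking the maximum of $\Om(n^{3r}/(\lambda n^r+\tau\lambda^2))$ and $\Om(n^r/\lambda)$ and noting $\frac{n^r}{\lambda}\ge\frac{\tau\lambda n^r}{\lambda n^r+\tau\lambda^2}$ up to constants, the two combine to $\Om\!\left(\frac{n^{3r}+\tau\lambda n^r}{\lambda n^r+\tau\lambda^2}\right)$, as claimed.

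The main obstacle I anticipate is the careful bookkeeping in the conditioning step: one must make sure that conditioning on ``all islands at $1^n$ with nobody having seen the optimum/valley yet'' is legitimate, i.e.\ that the event has probability $1-o(1)$ \emph{and} that conditioning on it does not artificially inflate the probability of later finding the optimum quickly. The restriction $\lambda\in\Or(n^{r-1}/\log n)$ is exactly what makes the expected count of ``early'' optima/valleys from Lemma~\ref{lem:ov_bound} vanish, and the lower bound $\lambda\ge 2r\log n$ is what lets the Chernoff-type argument (as in the $\Omega(\lambda n^{2r}\log\lambda)$ corollary) guarantee that on $Q$ a constant fraction—in fact essentially all—islands are genuinely in the valley and must perform the expensive escape, rather than a lucky few slipping through. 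The rest is assembling the three regimes with the law of total expectation and simplifying the resulting expression to the stated form.
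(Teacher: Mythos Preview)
Your proposal is correct and follows essentially the same route as the paper: condition on the constant-probability event that all islands reach $1^n$ (or the valley) without anyone having produced the optimum en route (via Lemma~\ref{lem:ov_bound} and the upper bound on $\lambda$), split by the event $Q$ of Lemma~\ref{lem:probability_Q}, get $\Omega(n^{2r}/\lambda)$ on $Q$ and $\Omega(n^r/\lambda)$ on $\overline{Q}$, and combine. One small correction to your closing paragraph: the hypothesis $\lambda\ge 2r\log n$ is \emph{not} needed for a Chernoff argument on $Q$---on the complete topology, once a valley is migrated every island adopts it deterministically (the valley strictly beats $1^n$), so no concentration is required there; the paper's proof of the lower bound does not use the lower bound on $\lambda$ at all.
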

\begin{proof}
	To get a lower bound, for this proof we want to assume starting with all islands having $1^n$ or the valley as best solution.
	Let the event that this state is ever reached be called $U$.
	To show that our assumption leads to a lower bound we now have to prove that $\Pr{(U)}$ is at least constant.
	To let $U$ happen, no island is allowed to find the optimum until $U$ is satisfied.
	If we look at Lemma~\ref{lem:ov_bound}, we know that the probability for one island to generate the optimum during this time is at most $\frac{b}{n^{r-1}}$ for a constant $b>0$ .
	We can conclude that $\Pr{(U)} \ge \left(1-\frac{b}{n^{r-1}}\right)^\lambda$.
	From the limitation $\lambda \in \Or\left(n^{r-1}\right)$ we can derive $\Pr{(U)} \ge d$ for a constant $0<d\le1$.
	This confirms that we can derive the lower bound under the assumption we wanted to make. %\Tcomment{so far we only have no optimum, but why did we find a valley?}\Ccomment{Due to the previous comment it is possible that there is no valley at all. But it is right that I forgot to show that all valleys is very very unlikely. I think I will change $U$ to just have no optimum. The runtime gets smaller if we then assume that there is at least one $1^n$ string.}.
	
	For the event $Q$ like defined in Lemma~\ref{lem:probability_Q}, we use that $E(T) = E(T \;|\; Q)\Pr{(Q)} + E(T \;|\; \overline{Q})\Pr{(\overline{Q})}$ and will show the values of both summands in the following.
	Also from that Lemma we receive the bounds on $\Pr{(Q)}$.
	
	If $Q$ happens, meaning all islands get the valley as solution, all islands will have to make the jump to the optimum from the valley.
	The expected overall run time under this condition therefore is at least $\frac{n^{2r}}{2\lambda}$ (Equation \eqref{eq:all_jump}).
	Therefore, for this event we get a total of
	$
	E\left(T \; \middle|  \; Q\right)\Pr{(Q)} \ge \frac{n^{2r}}{2\lambda}\frac{dc}{2} \left(\frac{n^r}{n^r + \tau\lambda}\right)
	\;\in\; \Omega\left(\frac{n^{3r}}{\lambda n^r + \tau\lambda^2}\right)
	$.
	
	We now continue with the other part, $E\left(T \; \middle|  \; \overline{Q}\right)\Pr{(\overline{Q})}$.
	The expected run time until one of $\lambda$ islands comes up with the optimum is $E{(T \;|\;\overline{Q})} \in \Omega\left(\frac{n^r}{\lambda}\right)$ (Equation~\eqref{eq:all_jump}) which leads us to
	$
	E{(T \;|\;\overline{Q})}\Pr{(\overline{Q})} \in \Omega\left(\frac{\tau n^r}{n^r + \tau\lambda}\right)
	$.
	
	If we finally add both cases of $Q$ and $\overline{Q}$ we get
	\begin{align*}
		E(T) &= E{(T \;|\; Q)}\Pr{(Q)} + E{(T \;|\;\overline{Q})}\Pr{(\overline{Q})}
		\;\in\; \Omega\left(\frac{n^{3r} + \tau\lambda n^r}{\lambda n^r + \tau\lambda^2}\right).
	\end{align*}\qed
\end{proof}

\subsection{Proof of Theorem \ref{theo:fork_comlete_upper_bound}}
\begin{theorem*}
	The expected run time of $2\log n \le \lambda \in \Or\left(\frac{n^{r-1}}{\log n}\right)$ islands optimizing \Fork{n}{r} on a complete graph is in
	$
	E(T) \in \Or\left(\frac{n^{3r} + \tau\lambda n^r}{\lambda n^r + \tau\lambda^2} + \frac{n^{2r+1}\log n}{\tau\lambda}\right)
	$.
\end{theorem*}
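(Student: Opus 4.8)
The plan is to follow the same skeleton as the lower bound (Theorem~\ref{theo:fork_comlete_lower_bound}): decompose a run into a short warm-up phase, during which every island reaches $1^n$, the valley, or the optimum, followed by a race phase starting from the state in which all islands hold $1^n$, and to fall back on the universal worst case of Lemma~\ref{lem:worst_case_runtime} for the rare catastrophic events. First I would invoke Lemma~\ref{lem:all_allone} to get that the warm-up costs $\Or\left(n\log n\right)$ rounds in expectation, and then isolate the two events that can spoil it and force the fallback bound $\Or\left(n^{2r}/\lambda\right)$: (i) some island generates the valley or the optimum already on the way to $1^n$ — by Lemma~\ref{lem:ov_bound} with $k=n$ and a union bound over the $\lambda\in\Or\left(n^{r-1}/\log n\right)$ islands this has probability $\Or\left(1/\log n\right)$; and (ii) a migration happens during the warm-up — since the warm-up lasts $\Or\left(n\log n\right)$ rounds and each round migrates with probability $1/\tau$, this has probability $\Or\left(n\log n/\tau\right)$ (and when $\tau\in\Or\left(n\log n\right)$ this probability is only $\Or\left(1\right)$, but there the second claimed term already subsumes the entire worst case). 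Multiplying the probability of~(ii) by the worst case of Lemma~\ref{lem:worst_case_runtime} produces exactly the term $\Or\left(\frac{n^{2r+1}\log n}{\tau\lambda}\right)$, while the analogous product for~(i) gives $\Or\left(\frac{n^{2r}}{\lambda\log n}\right)$, which I would argue is dominated by the first claimed term.

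On the complement of~(i) and~(ii), after the warm-up all islands hold $1^n$ and no migration has yet occurred, so the next step is to invoke the event $Q$ of Lemma~\ref{lem:probability_Q} (the first island to leave $1^n$ drops to the valley and a migration disseminates it before any island produces the optimum) and to split via $E(T)=E(T\mid Q)\Pr{(Q)}+E(T\mid\overline{Q})\Pr{(\overline{Q})}$. On $Q$, whose probability is at most $\frac{n^r+\lambda/2}{2n^r+\tau\lambda}=\Or\left(\frac{n^r}{n^r+\tau\lambda}\right)$ (using $\lambda\in\Or\left(n^{r-1}\right)$), all islands hold the valley, and by Equation~\eqref{eq:all_jump} an expected $\Or\left(n^{2r}/\lambda\right)$ further rounds suffice for one of them to perform the $2r$-bit jump to the optimum; this contributes $\Pr{(Q)}\cdot\Or\left(n^{2r}/\lambda\right)=\Or\left(\frac{n^{3r}}{\lambda n^r+\tau\lambda^2}\right)$. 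On $\overline{Q}$ a constant fraction of the islands stays at $1^n$ (as argued inside the proof of Lemma~\ref{lem:probability_Q}), and one of them reaches the optimum by an $r$-bit jump, which by Equation~\eqref{eq:all_jump} costs $\Or\left(n^r/\lambda\right)$; the waiting time until $Q$ is decided is also $\Or\left(n^r/\lambda\right)$ by the same estimate. Adding up, the race phase costs $\Or\left(n^r/\lambda+\frac{n^{3r}}{\lambda n^r+\tau\lambda^2}\right)=\Or\left(\frac{n^{3r}+\tau\lambda n^r}{\lambda n^r+\tau\lambda^2}\right)$, and together with the warm-up $\Or\left(n\log n\right)$ (dominated) and the contributions of~(i) and~(ii) this yields the claimed bound.

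The delicate part, and the main obstacle, will be the bookkeeping around migrations that fall inside the warm-up: this is simultaneously where the extra $\frac{n^{2r+1}\log n}{\tau\lambda}$ term comes from and where one must be careful to dovetail with the definition of $Q$ so that no catastrophic scenario is double-counted or overlooked — a warm-up migration that synchronises the islands at $1^n$ is harmless, whereas one that synchronises them at a valley is not, and the case split has to be clean. A secondary chore is verifying that the two crude worst-case contributions of the warm-up failures are genuinely dominated by the two terms of the claimed bound uniformly over all admissible pairs $(\lambda,\tau)$.
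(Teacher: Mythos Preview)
Your overall skeleton matches the paper's proof closely, but your handling of warm-up failure~(i) has a genuine gap, not merely a ``secondary chore''. You charge the full worst case $\Or\!\left(n^{2r}/\lambda\right)$ whenever \emph{some} island produces the valley or optimum on the way to $1^n$, which by your own estimate happens with probability $\Or(1/\log n)$, giving a contribution of $\Or\!\left(\frac{n^{2r}}{\lambda\log n}\right)$. This term is \emph{not} dominated by the claimed bound: let $\tau\to\infty$, so that the target becomes $\Or(n^r/\lambda)$, while $\frac{n^{2r}}{\lambda\log n}=\frac{n^r}{\lambda}\cdot\frac{n^r}{\log n}\gg\frac{n^r}{\lambda}$. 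So the domination you ``would argue'' simply fails for large $\tau$.

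The reason is that your good event is needlessly restrictive. Lemma~\ref{lem:probability_Q} does not require all islands to sit at $1^n$; it only needs \emph{at least one} island at $1^n$ (the rest may already hold the valley). The paper exploits this: it defines the bad warm-up event $\overline{U}$ as ``\emph{every} island ends the warm-up at the valley''. On the complement of your~(ii) (no migration during warm-up), $\overline{U}$ can only occur if all $\lambda\ge 2\log n$ islands independently hit the valley rather than $1^n$, which by Lemma~\ref{lem:fork_half} has probability at most $2^{-\lambda}$ and is therefore negligible against the worst case. With this narrower failure event, the $Q/\overline{Q}$ split applies verbatim to the mixed configuration (some islands at $1^n$, some at the valley), and the rest of your argument goes through. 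In short: replace your~(i) by ``all islands find the valley independently'', and you recover the paper's proof.
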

\begin{proof}
	To derive an upper bound we assume for the whole proof that until all islands have $1^n$ or the valley as best individual, no optimum is generated.
	Similar to the proof for the lower bound we want to split the run time by the event $U$, that all islands come to a state where at least one island has $1^n$ and all others the valley as their best individual.
	Unlike before, we cannot ignore the case of $\overline{U}$, since we want an upper bound.
	We start with $\overline{U}$.
	The only two reasons for $\overline{U}$ to happen are that
	\begin{enumerate}[nosep]
		\item all islands create the valley on their own or
		\item at least one island finds the valley during these $dn\log n$ steps and migrates it to all others
	\end{enumerate}
	
	(1): The probability for this is not more than $\left(\frac{1}{2}\right)^{2r\log n} = \frac{1}{n^{2r}}$, because we have at least $2r\log n$ islands.
	As in Theorem \ref{theo:fork_comlete_lower_bound} we can ignore that case because of Lemma \ref{lem:worst_case_runtime} and the law of total expectation.
	
	(2): To calculate the probability for that, we assume the worst case that there are already islands with the valley from the beginning on.
	We know that after $\Or\left(n\log n\right)$ steps, the decision of $U$ or $\overline{U}$ has been made (Lemma \ref{lem:all_allone}).
	The probability to migrate during that time is for a constant $d>0$
	\begin{align*}
		\Pr{(\overline{U})} &\le \sum_{i=0}^{dn\log n} \left(1-\frac{1}{\tau}\right)^i\frac{1}{\tau}
		\;\le\; \frac{2dn\log n}{\tau + dn\log n}\;\le\; \frac{2dn\log n}{\tau}.
	\end{align*}
	The last step could be made by the inequality introduced by Badkobeh et al. \cite{Badkobeh2015}.
	The run time in that case would be in $\Or\left(\frac{n^{2r}}{\lambda}\right)$, using Lemma~\ref{lem:worst_case_runtime}.
	It follows that
	$
	E{(T\;|\;\overline{U})}\Pr{(\overline{U})} \in \Or\left(\frac{n^{2r+1}\log n}{\tau\lambda}\right)
	$.
	
	It is still left to show the run time under the condition of $U$.
	Therefore for the rest of the proof we will assume that everything happens under the condition of $U$.
	Like for the lower bound we split up the run time again into two cases of $Q$ and $\overline{Q}$ defined like in Lemma~\ref{lem:probability_Q}.
	Starting with $Q$, we get an expected run time by using Lemma \ref{lem:worst_case_runtime} which results in $E(T \;|\; Q) \in \Or\left(\frac{n^{2r}}{\lambda}\right)$.
	We already showed the bounds on $\Pr{(Q)}$ in Lemma~\ref{lem:probability_Q}, especially $\Pr{(Q)} \le \frac{n^r+ \frac{\lambda}{2}}{2n^r + \tau\lambda}$.
	Putting both together leads to 
	$
	E(T \;|\; Q)\Pr{(Q)} \in \Or\left(\frac{n^{3r}}{\lambda n^r + \tau\lambda^2}\right)
	$.
	
	In the case of $\overline{Q}$, the run time it takes these island to get to the optimum is in $\Or\left(n\log n + \frac{n^r}{\lambda}\right)$.
	This follows from Equation~\eqref{eq:all_jump} and the fact that we have to make $\Or\left(n\log n\right)$ steps to get to make $U$ happen in the first place (Lemma \ref{lem:all_allone}).
	Again by looking at Lemma~\ref{lem:probability_Q}, we see that $\Pr{(\overline{Q})} \le \frac{n^r(2-c) + 2\tau\lambda}{2n^r + 2\tau\lambda} \le \frac{n^r(2-c) + 2\tau\lambda}{2n^r + 2\tau\lambda}$.
	If $\tau\lambda \le n^r$, we can observe that $\Pr{(\overline{Q})} \in \Or(\Pr{(Q)})$ and hence $E(T \;|\; \overline{Q})\Pr{(\overline{Q})} \in \Or\left(E(T \;|\; Q)\Pr{(Q)}\right)$.
	In this case the bound we already derived for that is enough.
	The other case would be that $n^r < \tau\lambda$.
	Therefore, $\Pr{(\overline{Q})} \le \frac{\tau\lambda(4-c)}{2n^r + 2\tau\lambda}$ and therefore
	$
	E(T \;|\; \overline{Q})\Pr{(\overline{Q})} \in \Or\left(\frac{\tau n^r + \tau\lambda n\log n}{n^r + \tau\lambda}\right)
	\;\in\; \Or\left(\frac{\tau n^r}{n^r + \tau\lambda}\right)
	$.
	The last step could be made because of $\lambda$ being in $\Or\left(\frac{n^{r-1}}{\log n}\right)$.
	Finally we add up the three results together to obtain
	\begin{align*}
		E(T) &= E{(T\;|\;\overline{U})}\Pr{(\overline{U})} + E(T \;|\; Q)\Pr{(Q)} + E(T \;|\; \overline{Q})\Pr{(\overline{Q})}\\
		&\in \Or\left(\frac{n^{3r} + \tau\lambda n^r}{\lambda n^r + \tau\lambda^2} + \frac{n^{2r+1}\log n}{\tau\lambda}\right).
	\end{align*}
	Recall that this holds because $Q$ and $\overline{Q}$ were made under the condition of $U$.\qed
\end{proof}

\subsection{Proof of Lemma \ref{lem:ring_constant_valleys_until_allone}}
\begin{lemma*}
	If $b\ge 1+\frac{r}{\epsilon}$ is constant and $\lambda \in \Or\left(n^{r-1-\epsilon}\right)$, where $\epsilon>0$ is a constant, it holds that the expected run time of to optimize \Fork{n}{r} on any island is
	$
	E(T) \le E{\left(T \;\middle|\; V_b\right)} + \Or\left(n\right)
	$.
\end{lemma*}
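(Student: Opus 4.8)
The plan is to condition on $V_b$ via the law of total expectation,
\begin{align*}
	E(T) = E\left(T \mid V_b\right)\Pr(V_b) + E\left(T \mid \overline{V_b}\right)\Pr(\overline{V_b}) \le E\left(T \mid V_b\right) + E\left(T \mid \overline{V_b}\right)\Pr(\overline{V_b}),
\end{align*}
so that it suffices to show $E\left(T \mid \overline{V_b}\right)\Pr(\overline{V_b}) \in \Or(n)$. For this we need an upper bound on $\Pr(\overline{V_b})$ together with the fact that conditioning on $\overline{V_b}$ does not spoil the worst-case run-time estimate.

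For the probability, let $X$ be the number of islands that generate the valley (rather than receiving it by migration) during the time until every island holds $1^n$, the valley, or the optimum. Throughout this initial phase no island has yet found the optimum, so every migrant is the valley itself or a string of fitness at most $n$; hence an island that ever holds the valley afterwards only accepts the valley again or the optimum, and an island that receives the valley by migration never generates it itself. Using this, one argues as in the proof of Lemma~\ref{lem:ov_bound} (taken with $k=n$) that each single island generates the valley during this phase with probability $\Or\left(n^{-(r-1)}\log n\right)$ --- the walk up to $1^n$ contributes $\Or\left(n^{-(r-1)}\right)$ by that lemma, and the $\Or(n\log n)$ expected steps the island then idles at $1^n$ waiting for the others (Lemma~\ref{lem:all_allone}), each producing the valley with probability at most $n^{-r}$, contribute another $\Or\left(n^{-(r-1)}\log n\right)$. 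As these events are at most non-positively correlated across islands, a union bound over all $(b{+}1)$-element subsets of islands gives
\begin{align*}
	\Pr(\overline{V_b}) = \Pr(X \ge b+1) \le \binom{\lambda}{b+1}\,\Or\left(n^{-(r-1)}\log n\right)^{b+1} \in \Or\left(\left(\tfrac{\lambda \log n}{n^{r-1}}\right)^{b+1}\right).
\end{align*}

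For the conditional expectation, note that $\overline{V_b}$ is determined by the history up to the (random) end of the initial phase, while the argument behind Lemma~\ref{lem:worst_case_runtime} bounds the expected remaining time to optimize \Fork{n}{r} from \emph{any} reached configuration by $\Or\left(n^{2r}/\lambda\right)$ (the assumed range $\lambda \in \Or\left(n^{r-1-\epsilon}\right)$ lies inside the range $\Or\left(n^{2r-1}/\log n\right)$ needed there), and the length of the initial phase itself has exponentially decaying tails around its mean $\Or(n\log n)$ (Lemma~\ref{lem:all_allone}); together these give $E\left(T \mid \overline{V_b}\right) \in \Or\left(n^{2r}/\lambda\right)$. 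Multiplying the two bounds,
\begin{align*}
	E\left(T \mid \overline{V_b}\right)\Pr(\overline{V_b}) \in \Or\left(\frac{n^{2r}}{\lambda}\left(\tfrac{\lambda \log n}{n^{r-1}}\right)^{b+1}\right) = \Or\left(n^{\,2r-(r-1)(b+1)}\,\lambda^{b}\,(\log n)^{b+1}\right),
\end{align*}
and inserting $\lambda \in \Or\left(n^{r-1-\epsilon}\right)$ bounds the power of $n$ by $2r-(r-1)(b+1)+(r-1-\epsilon)b = r+1-\epsilon b$. The hypothesis $b \ge 1+\tfrac{r}{\epsilon}$ forces $\epsilon b \ge \epsilon + r$, so this exponent is at most $1-\epsilon < 1$ and the polylogarithmic factor is harmless: $E\left(T \mid \overline{V_b}\right)\Pr(\overline{V_b}) \in \Or(n)$, which is exactly the claim $E(T) \le E\left(T \mid V_b\right) + \Or(n)$.

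I expect the main obstacle to be making the reduction to Lemma~\ref{lem:ov_bound} fully rigorous: one must argue that migration cannot increase an island's probability of generating the valley (intuitively, receiving the valley makes an island \emph{stop} producing fresh individuals, and receiving a fitter non-valley individual merely advances the recipient toward $1^n$ as in an ordinary run on \OM) and that the per-island valley-generation events are uncorrelated enough for the union bound over $(b{+}1)$-subsets to be valid; one must also check that conditioning on $\overline{V_b}$, which fixes the run only up to the end of the first phase, leaves the robust bound $\Or\left(n^{2r}/\lambda\right)$ on the remaining time intact. The remaining computations are routine and are steered entirely by the constraint $b \ge 1+\tfrac{r}{\epsilon}$.
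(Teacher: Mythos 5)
Your proposal is correct and follows essentially the same route as the paper's proof: split $E(T)$ by conditioning on $V_b$, bound $\Pr(\overline{V_b})$ by a union bound built on the per-island valley-generation probability from Lemma~\ref{lem:ov_bound}, bound $E(T\mid\overline{V_b})$ by the worst-case time $\Or(n^{2r}/\lambda)$ from Lemma~\ref{lem:worst_case_runtime}, and finish with the same exponent calculation driven by $b\ge 1+\frac{r}{\epsilon}$. Your version is in fact slightly more careful than the paper's (using $b+1$ islands for $\overline{V_b}$ and adding the $\log n$ factor for the time an island idles at $1^n$, which the extra slack $n^{1-\epsilon}$ absorbs), but this does not change the argument's structure.
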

\begin{proof}
	We know that $E(T) = E{(T \;|\; \overline{V_b})}\Pr{(\overline{V_b})}+E{\left(T \;\middle|\; V_b\right)}\Pr{\left(V_b\right)}$, so what is left to prove is that $E{(T \;|\; \overline{V_b})}\Pr{(\overline{V_b})} \in \Or(n)$.
	
	We use Lemma \ref{lem:ov_bound} to derive that $\Pr{(\overline{V_b})} \le \left(\frac{1}{n^{r-1}}\right)^b\binom{\lambda}{b} \le \left(\frac{\lambda}{n^{r-1}}\right)^b$.
	For a constant $m>0$, the expected worst case run time when having too many valleys at most $\frac{mn^{2r}}{\lambda}$ (Lemma \ref{lem:worst_case_runtime}).
	Hence, for a constant $d>0$ so that $\lambda \le dn^{r-1-\epsilon}$,
	\begin{align*}
		E{\left(T \;\middle|\; \overline{V_b}\right)}\Pr{(\overline{V_b})}
		& \leq \frac{mn^{2r}}{\lambda}\left(\frac{\lambda}{n^{r-1}}\right)^b
		\;=\; mn^{2r}\frac{\lambda^{b-1}}{n^{br-b}}
		\;\le\; md^{b-1} n^{r+1+\epsilon-b\epsilon}
	\end{align*}
	By having $b\ge 1+\frac{r}{\epsilon}$ we get to $E{\left(T \;\middle|\; \overline{V_b}\right)}\Pr{(\overline{V_b})}\le md^{b-1} n\;\in\; \Or\left(n\right)$.\qed
\end{proof}

\subsection{Proof of Lemma \ref{lem:ring_constant_migration}}
\begin{lemma*}
	Let $\epsilon>0$, $b\ge 1+\frac{r}{\epsilon}$ and $c\ge 7rb$ be constants and $\lambda \in \Or\left(n^{r-1-\epsilon}\right)$.
	When $\frac{1}{\tau}$ denotes the migration probability and $\tau \in \Omega\left(n\log n\right)$, the expected run time to optimize \Fork{n}{r} on any island is
	$
	E(T) \le E{\left(T \;\middle|\; V_b\cap B_c\right)} + \Or\left(\frac{n^r}{\lambda}\right).
	$
\end{lemma*}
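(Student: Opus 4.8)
The plan is to condition on the good event $V_b\cap B_c$ and show the remaining contributions are negligible. Writing
\begin{align*}
	E(T) = E{\left(T \;\middle|\; V_b\cap B_c\right)}\Pr{(V_b\cap B_c)} + E{\left(T \;\middle|\; \overline{V_b\cap B_c}\right)}\Pr{(\overline{V_b\cap B_c})}
\end{align*}
and using $\Pr{(V_b\cap B_c)}\le 1$ together with the disjoint decomposition $\overline{V_b\cap B_c} = \overline{V_b}\cup(V_b\cap\overline{B_c})$, it suffices to prove that both $E{\left(T \;\middle|\; \overline{V_b}\right)}\Pr{(\overline{V_b})}$ and $E{\left(T \;\middle|\; V_b\cap\overline{B_c}\right)}\Pr{(V_b\cap\overline{B_c})}$ lie in $\Or\left(\frac{n^r}{\lambda}\right)$. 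The first term is immediate: the proof of Lemma~\ref{lem:ring_constant_valleys_until_allone} already establishes $E{\left(T \;\middle|\; \overline{V_b}\right)}\Pr{(\overline{V_b})} \in \Or(n)$, and $\Or(n) \subseteq \Or\left(\frac{n^r}{\lambda}\right)$ since $\lambda \in \Or\left(n^{r-1-\epsilon}\right) \subseteq \Or\left(n^{r-1}\right)$.

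For the second term I would bound the conditional expectation by the universal worst-case run time $\Or\left(\frac{n^{2r}}{\lambda}\right)$ from Lemma~\ref{lem:worst_case_runtime} (applicable since $\lambda \in \Or\left(n^{r-1-\epsilon}\right)$ is polynomial in $n$ and well below $\Or\left(\frac{n^{2r-1}}{\log n}\right)$). Hence it remains to show $\Pr{(V_b\cap\overline{B_c})} \in \Or\left(n^{-r}\right)$, and multiplying the two gives the desired $\Or\left(\frac{n^r}{\lambda}\right)$.

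The core of the argument is bounding $\Pr{(V_b\cap\overline{B_c})}$. Conditioned on $V_b$, at most $b$ islands ever create the valley by mutation during the phase in which the islands reach $1^n$, a valley, or the optimum. On the ring each migration step advances any individual by only one island, so a contiguous block of valley-carrying islands grows by at most two per migration step; since the at most $b$ mutation-created valleys lie in at most $b$ such blocks, each migration step increases the total valley count by at most $2b$. Therefore $\overline{B_c}$ forces at least $\ell := \lceil (c\log n - b)/(2b)\rceil = \Theta\left(\frac{c}{b}\log n\right)$ migration steps to have occurred before settling. By Lemma~\ref{lem:all_allone} and its exponential tail bound, settling occurs within $\Or(n\log^2 n)$ rounds with probability $1-\Or(n^{-r})$, and inside such a window the number of migration steps is stochastically dominated by a binomial variable of mean $\Or(n\log^2 n)/\tau = \Or(\log n)$, using $\tau \in \Omega(n\log n)$. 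The choice $c\ge 7rb$ makes $\ell \ge \frac{7r}{4}\log n$ large enough relative to that mean for a Chernoff bound to give $\Pr{(\ge \ell\text{ migration steps before settling})}\in\Or(n^{-r})$; combined with the settling-time tail this yields $\Pr{(V_b\cap\overline{B_c})}\in\Or(n^{-r})$.

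The main obstacle is this last step: turning the informal ``a migration on the ring only spreads a valley by one island on each side'' observation into the clean counting bound on $\ell$, and then making the concentration estimate quantitatively tight — this is precisely where $\tau\in\Omega(n\log n)$ and $c\ge 7rb$ are consumed, and extra care is needed because the settling time is only controlled in expectation, so its tail bound must be dovetailed with the bound on the number of migration steps rather than treated as a hard deadline.
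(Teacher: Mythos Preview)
Your approach is essentially the paper's: reduce via Lemma~\ref{lem:ring_constant_valleys_until_allone} to the task of showing $E(T\mid V_b\cap\overline{B_c})\Pr(V_b\cap\overline{B_c})\in\Or(n^r/\lambda)$, bound the conditional expectation by the worst-case $\Or(n^{2r}/\lambda)$ of Lemma~\ref{lem:worst_case_runtime}, and bound $\Pr(\overline{B_c}\mid V_b)$ by observing that on the ring each migration step enlarges the at most $b$ valley clusters by at most $2b$ islands in total, so $\overline{B_c}$ forces $\Omega\!\left(\tfrac{c}{b}\log n\right)$ migration steps before settling, and then apply a Chernoff bound to the (binomially distributed) number of migrations in the settling window. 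Your explicit handling of the random settling time via a tail bound is in fact more careful than the paper's, which silently uses the expected bound $dn\log n$ from Lemma~\ref{lem:all_allone} as if it were a deterministic deadline.

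There is one quantitative slip: you take the high-probability settling window to be $\Or(n\log^2 n)$, which makes the expected number of migrations in that window $\Or(\log n)$ rather than $\Or(1)$. Since the target $\ell$ is also $\Theta(\log n)$, the Chernoff bound then yields $n^{-r}$ only if the hidden constant in the mean is small enough relative to $\tfrac{c}{2b}\ge\tfrac{7r}{2}$, and that constant is not controlled by the hypotheses (it depends on the implicit constant in $\tau\in\Omega(n\log n)$ and on your choice of window). The fix is immediate: the multiplicative-drift tail bounds underlying Lemma~\ref{lem:all_allone} actually give settling within $\Or(n\log n)$ rounds with probability $1-\Or(n^{-r})$ (take $t=\Theta(r\log n+\log\lambda)$ in $\Pr(T>en(\ln n+t))\le e^{-t}$ and union-bound over the $\lambda$ islands). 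With that tighter window the mean number of migrations is $\Or(1)$, the target $\ell=\Theta(r\log n)$ is genuinely large relative to it, and the Chernoff step goes through exactly as in the paper to give $\Pr(\overline{B_c}\mid V_b)\le n^{-r}$.
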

\begin{proof}
	We already know from Lemma \ref{lem:ring_constant_valleys_until_allone} that $E(T) \le E{\left(T \;\middle|\; V_b\right)} + \Or\left(n\right)$.
	From now on we assume for simplicity that all probabilities and expected values are under the condition of $V_b$.
	If we can prove that $E{(T \;|\; \overline{B_c})}\Pr{(\overline{B_c})} \in \Or\left(\frac{n^r}{\lambda}\right)$, we are done since $\Or(n) \subseteq \Or\left(\frac{n^r}{\lambda}\right)$.
	We assume $E{(T \;|\; \overline{B_c})}$ to have the worst run time of $\Or\left(\frac{n^{2r}}{\lambda}\right)$ steps (Lemma \ref{lem:worst_case_runtime}).
	Thus, there is a constant $m>0$ so that this worst case run time is $E{(T \;|\; \overline{B_c})} \le \frac{mn^{2r}}{\lambda}$.
	
	Next we examine $\Pr{(\overline{B_c})}$.
	We first look at one island that came up with the valley on its own and sends this individual to its neighboring islands.
	If it migrates at least once, two more islands adopt the valley.
	From there on, to let one more island know the solution, one of the two islands has to migrate.
	Then this new island becomes the one that has to migrate and so on.
	This way there will be a cluster of adjacent islands on the Ring where each has the valley as optimal solution.
	With probability $\frac{1}{\tau}$ a migration is made.
	Because there are only two outer most bits in a cluster, the number of islands reached in one step is $2$.
	We will use $d>0$ as a constant to express that the run time for all islands to get $1^n$, the optimum or the valley is at most $dn\log n$ (see Lemma~\ref{lem:all_allone}).
	The expected amount of migrations made during that time is $\frac{dn\log n}{\tau}$.
	After the $dn\log n$ steps we have exactly $c\log n$ valleys, if the $b$ valleys migrate $\frac{c\log n-b}{2b}$ times.
	The probability to spread the found $b$ valleys at least $S\ge\frac{c\log n-b}{2b}$ times is 
	\begin{align*}
		\Pr\left(\overline{B_c}\right)&=\Pr\left(S \ge \frac{c\log n-b}{2b}\right)
		\;=\; \Pr\left(S \ge \frac{dn\log n}{\tau}+\frac{c\log n-b}{2b}-\frac{dn\log n}{\tau}\right)\\
		&= \Pr\left(S \ge \left(1+\frac{c\tau\log n-\tau b}{2bdn\log n}-1\right)\frac{dn\log n}{\tau}\right)\\
		&< e^{-\frac{\left(\frac{c\tau\log n-\tau b}{2bdn\log n}-1\right)\left(\frac{dn\log n}{\tau}\right)}{3}}
		\;=\; e^{-\frac{c\log n-b}{6b}+\frac{dn\log n}{3\tau}}\\
		&= e^{-\frac{c\log n}{6b}+\frac{1}{6}+\frac{dn\log n}{3\tau}}.
	\end{align*}
	Since $\tau \in \Omega\left(n\log n\right)$, for $n$ large enough we get
	$\Pr\left(\overline{B_c}\right) \le e^{-\frac{c\log n}{7b}} \;=\;  \frac{1}{n^{\frac{c}{7b}}} \;\le\; \frac{1}{n^r}$ because of $c\ge 7rb$ and therefore,
	$
	E{(T \;|\; \overline{B_c})}\Pr{(\overline{B_c})}
	\le \frac{mn^{2r}}{\lambda}\left(\frac{1}{n^r}\right)
	\;=\; \Or\left(\frac{n^r}{\lambda}\right)
	$.\qed
\end{proof}

\subsection{Proof of Theorem \ref{theo:ring_fork}}
\begin{theorem*}
	I we have $\tau \in \Omega(n\log n)$, $12r\log n \le \lambda \in \Or\left(n^{r-1-\epsilon}\right)$ and $\lambda^2 \tau \ge 17r^2n^r\log^2 n$, then the expected optimization time for \Fork{n}{r} on a Ring topology is
	$
	E(T) \in \Or\left(\frac{n^r}{\lambda}\right).
	$
\end{theorem*}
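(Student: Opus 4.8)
The plan is to reduce, via the lemmas already established, to the case $V_b\cap B_c$ and then to analyse the endgame of the run directly. Since $r$ and $\epsilon$ are constants, fix constants $b\ge 1+r/\epsilon$ and $c\ge 7rb$. The hypotheses $\tau\in\Omega(n\log n)$ and $\lambda\in\Or(n^{r-1-\epsilon})$ are precisely those of Lemma~\ref{lem:ring_constant_migration}, which therefore gives $E(T)\le E(T\mid V_b\cap B_c)+\Or(n^r/\lambda)$. Hence it suffices to prove $E(T\mid V_b\cap B_c)\in\Or(n^r/\lambda)$; from now on everything is conditioned on $V_b\cap B_c$, so at most $b$ valleys arise before all islands have settled on $1^n$, a valley, or the optimum, and at the moment of settling at most $c\log n$ valleys are present in the ring.

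Next I would describe the endgame. By Lemma~\ref{lem:all_allone} all islands settle within $\Or(n\log n)$ rounds, and because $\lambda\in\Or(n^{r-1-\epsilon})$ we have $\Or(n\log n)\subseteq\Or(n^r/\lambda)$, so this phase is free. If the optimum has already appeared we are done; otherwise, by $B_c$, at least $\lambda-c\log n$ islands hold $1^n$, which is $\Omega(\lambda)$ for the stated range of $\lambda$. The structural fact driving the argument is that from $1^n$ the only mutations accepted by \Fork{n}{r} are the flip of exactly the first $r$ bits (giving the valley) or of exactly the last $r$ bits (giving the optimum), each of probability $(1/n)^r(1-1/n)^{n-r}\ge 1/(en^r)$; thus an island on $1^n$ stays there until it jumps to one of these two strings, and that jump goes to the optimum with probability exactly $1/2$, independently of when it occurs. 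Consequently a constant fraction of the $\Omega(\lambda)$ islands on $1^n$ are \emph{destined} to leave $1^n$ towards the optimum, and by a Chernoff bound (using $\lambda\ge 12r\log n$) their number is $\Omega(\lambda)$; applying Equation~\eqref{eq:all_jump} to these $\Omega(\lambda)$ islands, each of which reaches the optimum with per-round probability $\ge 1/(en^r)$, one of them does so within expected $\Or(n^r/\lambda)$ rounds.

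The step I expect to be the main obstacle is ruling out that migration empties the pool of $1^n$-islands before this jump happens: every time migration fires, each island adjacent to a valley cluster on the ring may overwrite its $1^n$ by that valley, so the surviving pool shrinks. This is exactly what the condition $\lambda^2\tau\ge 17r^2n^r\log^2 n$ is for. I would work in a time window of $\Or(n^r\log n/\lambda)$ rounds --- with a $\Or(\log n)$ safety factor over the target so that the relevant failure probabilities drop to $\Or(n^{-r})$ --- and show that with probability $1-\Or(n^{-r})$ the following hold: (i) only $\Or(\log n)$ fresh \emph{voluntary} valleys are created in the window, so the number of valley clusters stays $\Or(\log n)$ throughout (the expected number of new voluntary valleys is $\Or(1)$, since a $1^n$-island jumps with probability only $\Theta(n^{-r})$ per round); and (ii) at most $\Or(n^r/(\lambda\tau))$ migration events occur, which by the parameter condition is few enough that the islands swallowed by the at most $\Or(\log n)$ clusters --- each cluster advancing by one island on each side per migration event --- stay below half of the $\Omega(\lambda)$ starting pool. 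Then $\Omega(\lambda)$ islands remain on $1^n$ throughout the window, so the optimum is found within $\Or(n^r/\lambda)$ rounds with probability $1-\Or(n^{-r})$, and the complementary event contributes only $\Or(n^{-r})\cdot\Or(n^{2r}/\lambda)=\Or(n^r/\lambda)$ through the law of total expectation and the worst-case bound of Lemma~\ref{lem:worst_case_runtime}. Adding the $\Or(n\log n)$ settling phase and the $\Or(n^r/\lambda)$ reduction error from Lemma~\ref{lem:ring_constant_migration} gives $E(T)\in\Or(n^r/\lambda)$.
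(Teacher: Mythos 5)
Your proposal is correct and follows essentially the same route as the paper's proof: reduce via Lemma~\ref{lem:ring_constant_migration} (and Lemma~\ref{lem:all_allone} for the $\Or(n\log n)$ settling phase) to the post-settling state, then argue that over a window of length $\Theta\left(\frac{n^r\log n}{\lambda}\right)$ the condition $\lambda^2\tau \ge 17r^2n^r\log^2 n$ together with Chernoff bounds keeps the $\Or(\log n)$ valley clusters from swallowing more than a constant fraction of the $1^n$-islands, so $\Omega(\lambda)$ islands keep attempting the $n^{-r}$-jump, Equation~\eqref{eq:all_jump} gives the $\Or\left(\frac{n^r}{\lambda}\right)$ expectation, and all failure events of probability $\Or(n^{-r})$ are absorbed via Lemma~\ref{lem:worst_case_runtime} and the law of total expectation. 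The only deviations are cosmetic bookkeeping slips (the expected number of fresh voluntary valleys in your window is $\Or(\log n)$, not $\Or(1)$, and the expected number of migration events is $\Or\left(\frac{n^r\log n}{\lambda\tau}\right)$), which the $\log^2 n$ in the parameter condition absorbs exactly as in the paper.
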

\begin{proof}
	From Lemma \ref{lem:ring_constant_migration} we know that $E(T) \le E{\left(T \;\middle|\; V_b\cap B_c\right)} + \Or\left(\frac{n^r}{\lambda}\right)$, where $b>0$ and $c\ge 7br$ are constant.
	What is left to show is that $E{\left(T \;\middle|\; V_b\cap B_c\right)} \in \Or\left(\frac{n^r}{\lambda}\right)$.
	Therefore we can prove the theorem by assuming that we start with $c\log n$ islands having the valley and the rest having $1^n$ as their solution.
	Of course we have to add $\Or\left(n\log n\right)$ to the bound in the end, which is the expected time to get to that state (Lemma~\ref{lem:all_allone}).\\
	\indent We get an upper bound if we assume the worst case that all islands that would generate the valley from now on before the optimum is found already start with the valley, while all others have $1^n$ as solution so far.
	The probability that this number of islands is at least $r\log n$ is $\left(\frac{1}{2}\right)^{r\log n} = \frac{1}{n^r}$.
	From the law of total expectation and Lemma \ref{lem:worst_case_runtime} it follows that we can ignore that case since it does not exceed the bounds we want to prove.\\
	\indent Due to the mentioned reason we consider that already $8r\log n$ islands have the valley as their best solution from the beginning on.
	We will show now that the time that is left for the other islands until they adopt the valley by migration is enough so that at least one island finds the optimum.
	We get an upper bound if we assume that the $8r\log n$ islands with the valley are distributed evenly such that there are $8r\log n$ many of these clusters of non-valley islands.\\
	\indent We want to examine now how many islands adopt the valley after $\frac{2rn^r\log n}{\lambda}$ steps.
	Since we expect $2r\log n$ islands to adopt the valley each $\tau$ steps, the expected value of valley islands after this time is $\frac{4r^2n^r\log^2 n}{\tau\lambda} + 8r\log n \le \frac{\lambda}{4}$ for $n$ large enough.
	To get this we make use of $\tau\lambda^2 \ge 17r^2n^r\log^2 n$.
	Using Chernoff bounds, the probability to lose more than the double this number is at most
	$\Pr{\left(X \ge (1+1)\frac{\lambda}{4}\right)} < \frac{1}{e^{\frac{\lambda}{12}}} \le \frac{1}{e^{\frac{12r\log n}{12}}} = \frac{1}{n^r}$.
	Again by using the law of total expectation and Lemma \ref{lem:worst_case_runtime} we can ignore that case.
	Therefore we know that, after $\frac{2rn^r\log n}{\lambda}$ steps, we still have at least $\frac{\lambda}{2}$ islands left.
	Hence, the number of evaluations that were made during that time is at least $rn^r\log n$.
	The probability to not find the optimum during that time is at most
	$
	p \le \left(1-\frac{1}{n^r}\right)^{rn^r\log n} \le \left(1-\frac{1}{n^r}\right)^{rn^r\ln n} \le \frac{1}{n^r}.
	$	
	We now use that
	\begin{align*}
	E(T) &= E\left(T \;\middle|\; T<\frac{2rn^r\log n}{\lambda}\right)\Pr{\left(T<\frac{2rn^r\log n}{\lambda}\right)}\\
	&+E\left(T \;\middle|\; T\ge\frac{2rn^r\log n}{\lambda}\right)\Pr{\left(T\ge\frac{2rn^r\log n}{\lambda}\right)}\\
	&\le E\left(T \;\middle|\; T<\frac{2rn^r\log n}{\lambda}\right)\cdot 1 \;+\; \frac{n^{2r}}{\lambda }\cdot \frac{1}{n^r}.
	\end{align*}
	We already know that the number of islands during $T<\frac{2rn^r\log n}{\lambda}$ steps is at least $\frac{\lambda}{2}$.
	Hence,
	$
	E(T) \le E\left(T \;\middle|\; T<\frac{2rn^r\log n}{\lambda}\right) + \frac{n^r}{\lambda}
	\;\le\; \frac{2n^r}{\lambda} + \frac{n^{r}}{\lambda}
	\;=\; \frac{3n^r}{\lambda} \in \Or\left(\frac{n^r}{\lambda}\right).
	$
	If we finally add $\Or\left(n\log n\right)$ for the run time until they all found $1^n$ or the valley in the first place, we get $\Or\left(n\log n + \frac{n^r}{\lambda}\right)$.
	This also matches the bound we still have to add because of Lemma \ref{lem:ring_constant_migration}.
	The term $n\log n$ is dominated by $\frac{n^r}{\lambda}$, because of the restriction to $\lambda$.
	Which leads to an upper bound of $\Or\left(\frac{n^r}{\lambda}\right)$.\qed
\end{proof}

\end{document}